\theoremstyle{plain}
\newtheorem{theorem}{Theorem}[section]
\newtheorem{lemma}[theorem]{Lemma}
\theoremstyle{definition}
\newtheorem{definition}[theorem]{Definition}
\newtheorem{remark}[theorem]{Remark}
\newcommand{\comment}[1]{}
\newtheorem{proposition}[theorem]{Proposition}
\newcommand{\N}{\mathbb{N}}
\newcommand{\Z}{\mathbb{Z}}
\newcommand{\R}{\mathbb{R}}
\newcommand{\mb}[1]{\ensuremath{\boldsymbol{#1}}}
\title{The Representation Power of Neural Networks: \\Breaking the Curse of Dimensionality}
\author{Mo\"ise Blanchard and Amine Bennouna \\
\texttt{moiseb@mit.edu} \; \; \texttt{amineben@mit.edu}
}
\affil{Operations Research Center, Massachusetts Institute of Technology, Cambridge MA}
\date{}
\begin{document}

\maketitle

\begin{abstract}
In this paper, we analyze the number of neurons and training parameters that a neural network needs to approximate multivariate functions of bounded second mixed derivatives --- Korobov functions. We prove upper bounds on these quantities for shallow and deep neural networks, breaking the curse of dimensionality. Our bounds hold for general activation functions, including ReLU. We further prove that these bounds nearly match the minimal number of parameters any continuous function approximator needs to approximate Korobov functions, showing that neural networks are near-optimal function approximators.
\end{abstract}

\section{Introduction}
Neural networks have known tremendous success in the past years in many applications such as computer vision and pattern detection \cite{krizhevsky2017imagenet, silver2016mastering}. A natural question that arises is how to explain their practical success theoretically.

Neural networks are shown to be universal \cite{hornik1989multilayer, cybenko1989approximation}. Any measurable function on a finite dimensional space can be approximated arbitrarily well by a neural network provided the network has a sufficient number of neurons. Furthermore, universality holds for as low as one hider layer neural network with reasonable activation functions. However, these results do not specify the number of neurons and parameters to train needed for this approximation. If such numbers are unreasonably high, the universality of neural networks would not explain their practical success.

In this paper, we are interested in evaluating the number of neurons and training parameters needed to approximate a given function within $\epsilon$ with a neural network. An interesting question is how do these numbers scale with $\epsilon$ and the dimensionality of the problem, that is, the number of variables of the function. 
Mhaskar \cite{mhaskar1996neural} shows that any function of the Sobolev space of order $r$ and dimension $d$ can be approximated within $\epsilon$ with a one layer neural network with $O(\epsilon^{-\frac{d}{r}})$ neurons and an infinitely differentiable activation function. This bound exhibits the curse of dimensionality: the number of neurons needed for an approximation within $\epsilon$ scales exponentially in the dimension of the problem $d$. Therefore, Mhaskar's bound raises the important question of whether this curse is inherent to neural networks.

DeVore and al. \cite{devore1989optimal} showed an important result towards answering this question. They proved that any continuous function approximator (a notion we formalize in Section \ref{sec: NN are optimal.}) that approximates all Sobolev functions of order $r$ and dimension $d$ within $\epsilon$, needs at least $\Theta(\epsilon^{-\frac{d}{r}})$ parameters. This result bridges the gap with Mhaskar's bound and confirms that neural networks cannot escape the curse of dimensionality for the Sobolev space. An important question is therefore for which set of functions can neural networks break this curse of dimensionality.

One way to circumvent the curse of dimensionality is by restricting considerably the considered space of functions and focusing on functions with particular structures. For example Mhaskar et al. \cite{mhaskar2016learning} showed that compositional functions with regularity $r$ can be approximated within $\epsilon$ with deep neural networks with $O(d\cdot \epsilon^{-\frac{2}{r}})$ neurons. Another example is the class of Sobolev functions of order $d/\alpha$ and dimension $d$ for which Mhaskar's bound becomes $O(\epsilon^{-\alpha})$.
Recently, Montanelli et al. \cite{montanelli2019deep} considered the case of bandlimited functions and showed that they can be approximated within $\epsilon$ by deep networks with depth $O((\log\frac{1}{\epsilon})^2)$ and $O(\epsilon^{-2} (\log\frac{1}{\epsilon})^2)$ neurons. Weinan et al. \cite{weinan2019barron} showed that the closure of the space of 2-layer neural networks with a particular regularity (namely a restriction on the size of the network's weighs) is the Barron space. They further show that Barron functions can be approximated within $\epsilon$ with 2-layer networks with $O(\epsilon^{-2})$ neurons. 

In this work, we are interested in more general and generic spaces of functions. Our space of interest is the space of functions of bounded second mixed derivatives, namely the Korobov space. This space is included in the Sobolev space but is reasonably large and general. Recently, Montanelli and Du \cite{montanelli2019new} showed that deep neural networks with depth $O(\log \frac{1}{\epsilon})$ and $O(\epsilon^{-\frac{1}{2}} (\log\frac{1}{\epsilon})^{\frac{3(d-1)}{2}+1})$ neurons can approximate Korobov functions within $\epsilon$, breaking the curse of dimensionality for deep neural network.\comment{ Their bound is the first to break the curse of dimensionality for the Korobov space.} While they used deep structures to prove their result, the question of whether shallow neural networks also break the curse of dimensionality for the Korobov space remains open. 


In this paper, we study deep and shallow neural network's approximation power for the Korobov space and make the following contributions:
\begin{itemize}
\item
\textbf{Representation power of shallow neural networks.} We prove that any function of the Korobov space can be approximated within $\epsilon$ with a 2-layer neural network with ReLU activation function with $O(\epsilon^{-1}(\log\frac{1}{\epsilon})^{(3d-1)/2})$ neurons and $O(\epsilon^{-\frac{1}{2}}(\log\frac{1}{\epsilon})^{(3d-1)/2})$ training parameters (Theorem \ref{thm: shallow NN bound.}). We further extend this result to a large class of commonly used activation functions (Theorem \ref{thm: shallow NN bound generalized activation.}). Our new bound can be written as $O(\epsilon^{-1-\delta})$ for all $\delta>0$, and therefore breaks the curse of dimensionality for shallow neural networks.
\item 
\textbf{Representation power of deep neural networks.} We prove that any function of the Korobov space can be approximated within $\epsilon$ with a deep neural network of depth $\lceil \log_2(d) \rceil +1$ independent of $\epsilon$, with non-linear $\mathcal C^2$ activation function, $O(\epsilon^{-\frac{1}{2}}(\log\frac{1}{\epsilon})^{(3d-1)/2})$ neurons and $O(\epsilon^{-\frac{1}{2}}(\log\frac{1}{\epsilon})^{(3d-1)/2})$ training parameters (Theorem \ref{thm: fixed deep upper bound}). This result improves over the result of Montanelli and Du \cite{montanelli2019new} who constructed an approximating neural network with larger depth $O(\log \frac{1}{\epsilon}\log d)$ --- increasing with the precision $\epsilon$ --- and larger number of neurons $O(\epsilon^{-\frac{1}{2}}(\log\frac{1}{\epsilon})^{(3d-1)/2+1})$. We note however that they used a different activation function, namely ReLU.
\item \textbf{Near-optimality of neural networks as function approximators.} Under DeVore's \cite{devore1989optimal} model of continuous function approximators, we prove that any continuous function approximator needs at least $\Theta(\epsilon^{-\frac{1}{2}}( \log\frac{1}{\epsilon})^{\frac{d-1}{2}})$ parameters to approximate functions of the Korobov space within $\epsilon$ (Theorem \ref{thm:lower bound korobov}). This lower bound nearly matches our established upper bounds on the number of training parameters needed by deep and shallow neural networks to approximate functions of the Korobov space, proving that deep and shallow neural networks are near-optimal function approximators of the Korobov space. We further show that neural networks are optimal function approximators for separable functions.
\end{itemize}

Table \ref{tab:shallow} and \ref{tab:deep} summarize our new bounds and the existing bounds for shallow and deep neural network approximation power for the Korobov space, Sobolev space and bandlimited functions. Our proofs are constructive, and present explicit structures to construct such deep and shallow neural networks with ReLU and general activation functions. Our constructions relie on sparse grid approximations introduced by Zenger \cite{zenger1991sparse}, and studied in detail by Bungartz and Griebel  \cite{bungartz1992dunne,bungartz2004sparse}.

The rest of the paper is structured as follows. In Section \ref{sec: preliminaries}, we define the notions of interest, formalize our objective and introduce the sparse grids approach which will be key in our later constructions. In Section \ref{sec: shallow nn.} and \ref{sec: deep nn.}, we prove our bounds on the number of neurons and training parameters needed by a neural network to approximate functions of the Korobov space. Section \ref{sec: shallow nn.} provides bounds for shallow networks and Section \ref{sec: deep nn.} for deep networks. Finally, we formalize in Section \ref{sec: NN are optimal.} the notion of optimal continuous function approximators and prove our near-optimality result.

\begin{table}[h]
\begin{center}
\begin{tabular}{ |c|c|c|c|c|c| } 
 \hline
 Space & \begin{tabular}{c} Number of\\ neurons\end{tabular}& \begin{tabular}{c} Number of\\ training\\
 parameters\end{tabular} & Depth & Activation $\sigma$ & Ref. \\
 \hline
 \hline
 \begin{tabular}{c}Sobolev\\$W^{r,p}(\Omega)$\end{tabular} & $ \epsilon^{-\frac{d}{r}}$ & $ \epsilon^{-\frac{d}{r}}$ & 1 &\begin{tabular}{c} $\mathcal C ^\infty$,\\ non-polynomial\end{tabular} & \cite{mhaskar1996neural}\\
 \hline
 \begin{tabular}{c}Korobov\\$X^{2,\infty}(\Omega)$\end{tabular} & $ \epsilon^{-1}(\log \frac{1}{\epsilon})^{\frac{3(d-1)}{2}+1}$ & $ \epsilon^{-\frac{1}{2}}(\log \frac{1}{\epsilon})^{\frac{3(d-1)}{2}}$ & 2 & \begin{tabular}{c} any ReLU-like, e.g.\\ ReLU, ELU, soft-plus\end{tabular} & \begin{tabular}{c} \textbf{This}\\  \textbf{paper}\end{tabular}\\
 \hline
 \begin{tabular}{c}Korobov\\$X^{2,\infty}(\Omega)$\end{tabular} & $ \epsilon^{-\frac{3}{2}}(\log \frac{1}{\epsilon})^{\frac{3(d-1)}{2}}$ & $ \epsilon^{-\frac{1}{2}}(\log \frac{1}{\epsilon})^{\frac{3(d-1)}{2}}$ & 2 & \begin{tabular}{c} any Sigmoid-like, e.g.\\ sigmoid, step, tanh\end{tabular} & \begin{tabular}{c} \textbf{This}\\  \textbf{paper}\end{tabular}\\
 \hline
\end{tabular}
\end{center}
\caption{Approximation results for Sobolev and Korobov functions by shallow neural networks. Number of neurons and training parameters are given in $O$ notation.}
    \label{tab:shallow}
\end{table}

\begin{table}[h]
\begin{center}
\begin{tabular}{ |c|c|c|c|c|c| } 
 \hline
 Space & \begin{tabular}{c} Number of\\ neurons\end{tabular}& \begin{tabular}{c} Number of\\ training\\
 parameters\end{tabular} & Depth & Activation $\sigma$ & Ref. \\
 \hline
 \hline
 \begin{tabular}{c}Sobolev\\$W^{r,p}(\Omega)$\end{tabular} & $ \epsilon^{-d/r} \log \frac{1}{\epsilon}$ & $ \epsilon^{-d/r} \log \frac{1}{\epsilon}$ & $O( \log \frac{1}{\epsilon})$ & ReLU & \cite{yarotsky2017error, liang2016deep}\\
 \hline
 \begin{tabular}{c}Bandlimited\\functions\end{tabular} & $\epsilon^{-2} (\log\frac{1}{\epsilon})^2$ & $\epsilon^{-2} (\log\frac{1}{\epsilon})^2$ & $O((\log\frac{1}{\epsilon})^2)$& ReLU & \cite{montanelli2019deep}\\
 \hline
 \begin{tabular}{c}Korobov\\$X^{2,\infty}(\Omega)$\end{tabular} & $ \epsilon^{-\frac{1}{2}}(\log \frac{1}{\epsilon})^{\frac{3(d-1)}{2}+1}$ & $ \epsilon^{-\frac{1}{2}}(\log \frac{1}{\epsilon})^{\frac{3(d-1)}{2}}$ & $ O(\log d \cdot \log \frac{1}{\epsilon})$ & ReLU & \cite{montanelli2019new}\\
 \hline
 \begin{tabular}{c}Korobov\\$X^{2,\infty}(\Omega)$\end{tabular} & $ \epsilon^{-\frac{1}{2}}(\log \frac{1}{\epsilon})^{\frac{3(d-1)}{2}}$ & $ \epsilon^{-\frac{1}{2}}(\log \frac{1}{\epsilon})^{\frac{3(d-1)}{2}}$ & $\lceil \log_2 d \rceil +1$ & \begin{tabular}{c} $\mathcal C^2$,\\non-linear\end{tabular} & \begin{tabular}{c} \textbf{This}\\  \textbf{paper}\end{tabular}\\
 \hline
\end{tabular}
\end{center}
    \caption{Approximation results for Sobolev and Korobov functions by deep neural networks. Number of neurons and training parameters are given in $O$ notation.}
    \label{tab:deep}
\end{table}

\comment{
\begin{table}[]
    \centering
    \begin{tabular}{p{3.7cm}|>{\centering\arraybackslash}p{5.6cm}|
    >{\centering\arraybackslash}
    p{5.6cm}}
                   &  $\sigma$ ReLU
                   & $\sigma \in \mathcal{C}^{\infty}$  not polynomial
                   \\ \hline
         Sobolev $W^{r,p}([0,1]^d)$ 
         & \parbox[t]{5cm}{\centering {\color{white} ---}\\ --- \\ {\color{white} ---}}
         & \parbox[t]{5cm}{\centering depth 1,\\
         $O(\epsilon^{-\frac{d}{r}})$ neurons, \\ $O(\epsilon^{-\frac{d}{r}})$ training parameters
         \cite{mhaskar1996neural}}\\
         & & \\
         Korobov $X^{2,\infty}([0,1]^d)$ 
         & 
         \parbox[t]{5.5cm}{\centering
         depth 2, \\
         $O(\epsilon^{-1}(\log \frac{1}{\epsilon})^{\frac{3d-1}{2}})$ neurons, \\
         $O(\epsilon^{-\frac{1}{2}}(\log \frac{1}{\epsilon})^{\frac{3(d-1)}{2}})$ training parameters
         \textbf{[This paper]}
         }
         & \parbox[t]{5cm}{\centering
         depth 2, \\
         $O(\epsilon^{-1}(\log \frac{1}{\epsilon})^{\frac{3(d-1)}{2}})$ neurons, \\
         $O(\epsilon^{-\frac{1}{2}}(\log \frac{1}{\epsilon})^{\frac{3(d-1)}{2}})$ training parameters \textbf{[This paper]}
         }
    \end{tabular}
    \caption{Approximation results for Sobolev and Korobov functions by shallow neural networks.}
    \label{tab:shallow}
\end{table}
}

\comment{
\begin{table}[]
    \centering
    \begin{tabular}{p{3.7cm}|>{\centering\arraybackslash}p{5.6cm}|
    >{\centering\arraybackslash}
    p{5.6cm}}
                   &  $\sigma$ ReLU
                   & $\sigma \in \mathcal{C}^{\infty}$  not polynomial
                   \\ \hline
         Sobolev $W^{r,p}([0,1]^d)$ 
         & \parbox[t]{5.5cm}{\centering
         depth $O(\log \frac{1}{\epsilon})$, \\
         $O(\epsilon^{-d/r} \log \frac{1}{\epsilon})$ neurons \\
         $O(\epsilon^{-d/r} \log \frac{1}{\epsilon})$ training parameters 
         \cite{yarotsky2017error}
         }
         & \parbox[t]{5cm}{\centering {\color{white} ---}\\ --- \\ {\color{white} ---}} \\
         & & \\
         Korobov $X^{2,\infty}([0,1]^d)$ 
         & \parbox[t]{5.5cm}{\centering
         depth $O(\log\frac{1}{\epsilon})$, \\
         $O(\epsilon^{-\frac{1}{2}}(\log \frac{1}{\epsilon})^{\frac{3(d-1)}{2}+1})$ neurons
         \cite{montanelli2019new}
         }
         & \parbox[t]{5cm}{\centering 
         depth $\lceil \log_2(d) \rceil +1$,\\
         $O(\epsilon^{-\frac{1}{2}}(\log \frac{1}{\epsilon})^{\frac{3(d-1)}{2}})$ neurons, \\
         $O(\epsilon^{-\frac{1}{2}}(\log \frac{1}{\epsilon})^{\frac{3(d-1)}{2}})$ training parameters \textbf{[This paper]}
         }
    \end{tabular}
    \caption{Approximation results for Sobolev and Korobov functions by deep neural networks.}
    \label{tab:deep}
\end{table}
}

\section{Preliminaries}\label{sec: preliminaries}

We begin by presenting the basic definitions that will be used in the paper. In this work, we consider feed-forward neural networks, using a linear output neuron and a non-linear activation function $\sigma:\R\to\R$ for the other neurons, such as the commonly-used rectified unit (ReLU) $\sigma(x) = \max(x,0)$, the sigmoid $\sigma(x) = (1+e^{-x})^{-1}$ or the Heaviside function $\sigma(x) = \mb{1}_{\{x\geq 0\}}$. Let $d\geq 1$ be the dimension of the input. We define a $1-$hidden layer network with $N$ neurons as
\begin{equation*}
    \mb{x}\mapsto \sum_{k=1}^N u_k \sigma(\mb{w_k}^\top \mb{x} + b_k),
\end{equation*}
where $\mb{w_k}\in \R^d$, $b_k\in \R$ for $i=1,\cdots,N$ are the parameters of the network. A neural network with several hidden layers is obtained by feeding the outputs of the neurons in the a given layer as inputs to the next layer. For example, a $2-$hidden layer network with $N_1$ (resp. $N_2$) neurons on the first (resp. second) hidden layer is defined as
\begin{equation*}
    \mb{x}\mapsto \sum_{i=1}^{N_2} u_i  \sigma\left(\sum_{j=1}^{N_1} v_{i,j} \sigma(\mb{w_j}^\top \mb{x} +c_j)+ b_i\right).
\end{equation*}
We are interested in the expressive power of neural networks i.e. the ability to approximate a target function $f:\R^d\to\R$ with as few neurons as possible, on the unit hyper-cube $\Omega :=[0,1]^d$. Another relevant metric is the number of parameters that needs to be trained to approximate the function, that is, the number of parameters of the approximating network ($\mb{w}_j$, $c_j$, $v_{i,j}$ and $b_i$) depending on the function to approximate. We will adopt $L^\infty$ norm as a measure of approximation error.

Let us now define some notations that are necessary to present the space of functions that we will approximate. For an integer $r$, we denote $\mathcal{C}^r$ the space of one dimensional functions differentiable $r$ times and with continuous derivatives. In our analysis, we consider functions $f$ with bounded mixed derivatives. For a multi-index $\mb{\alpha}\in \N^d$, we define the derivative of order $\alpha$
\begin{equation*}
    D^{\mb{\alpha}} f := \frac{\partial^{|\mb \alpha|_1}f}{\partial x_1^{\alpha_1}\cdots\partial x_d^{\alpha_d}}.
\end{equation*}
Two common function spaces in a compact $\Omega\subset \R^d$ are the Sobolev spaces $W^{r,p}(\Omega)$ of functions having weak partial derivatives up to order $r$ in $L^p(\Omega)$ and the Korobov spaces $X^{r,p}(\Omega)$ of functions vanishing at the boundary and having weak mixed second derivatives up to order $r$ in $L^p(\Omega)$. More precisely,
\begin{align*}
    W^{r,p}(\Omega)&=\{f\in L^p(\Omega)\; : \; D^{\mb \alpha}f \in L^p(\Omega), |\mb \alpha|_1\leq r\},\\
    X^{r,p}(\Omega)&=\{f\in L^p(\Omega)\; : \;f|_{\partial\Omega}=0, D^{\mb \alpha}f \in L^p(\Omega), |\mb \alpha|_\infty\leq r\}.
\end{align*}
where $\partial \Omega$ denotes the boundary of $\Omega$, $|\mb\alpha|_1 = \sum_{i=1}^d |\alpha_i|$ and $|\mb\alpha|_{\infty} = \sup_{i=1,\ldots,d} |\alpha_i|$ are respectively the $L^1$ and infinity norm. The usual norms on these spaces are given by
\begin{equation*}
    |f|_{W^{r,p}(\Omega)} := \begin{cases} 
    \left(\sum_{|\mb \alpha|_1\leq r} \left\|D^{\mb\alpha}f\right\|_p^p\right)^{\frac{1}{p}} & 1\leq p<\infty, \\
     \max_{|\mb \alpha|_1\leq r} \left\|D^{\mb\alpha}f\right\|_\infty     & p=\infty,
       \end{cases}
       \quad
       |f|_{X^{r,p}(\Omega)} := \begin{cases} 
    \left(\sum_{|\mb \alpha|_\infty\leq r} \left\|D^{\mb\alpha}f\right\|_p^p\right)^{\frac{1}{p}} & 1\leq p<\infty, \\
     \max_{|\mb \alpha|_\infty\leq r} \left\|D^{\mb\alpha}f\right\|_\infty     & p=\infty.
       \end{cases}
\end{equation*}
Note that Korobov spaces $X^{r,p}(\Omega)$ are subsets of Sobolev spaces $W^{r,p}(\Omega)$. We will focus our analysis on approximating functions on the Korobov space $X^{2,\infty}(\Omega)$ for which we will show that neural networks are near-optimal.

\subsection{Breaking the curse of dimensionality}
\label{section:break curse}

Previous efforts to quantify the number of neurons needed to approximate a class of functions within $\epsilon$ showed that neural networks and most classical functional approximation schemes exhibit the curse of dimensionality. For example, in the case of the case of Sobolev functions, Mhaskar proved the following approximation bound.
\begin{theorem}[Mhaskar \cite{mhaskar1996neural}]\label{thm: Mhaskar upper bound}
Let $p,r\geq 1$, and $\sigma:\R\to\R$ be an infinitely differentiable activation function, non-polynomial on any interval of $\R$. Let $\epsilon>0$ sufficiently small. For any $f\in W^{r,p}$, there exists a shallow neural network with one hidden layer, activation function $\sigma$, and $O\left(\epsilon^{-\frac{d}{r}}\right)$ neurons approximating $f$ within $\epsilon$ for the infinity norm.
\end{theorem}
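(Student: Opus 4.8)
The plan is to factor the argument through two classical ingredients — a Jackson-type polynomial estimate for Sobolev functions, and the fact that a smooth non-polynomial activation lets a single hidden layer reproduce polynomials. First I would recall the standard estimate that for $f\in W^{r,p}(\Omega)$ there is an algebraic polynomial $P$ of total degree at most $n$, hence with $O(n^d)$ monomials, each of multi-index $\mb\alpha$ with $|\mb\alpha|_\infty\le n$, such that $\|f-P\|_\infty\le C\,|f|_{W^{r,p}(\Omega)}\,n^{-r}$ with $C$ depending only on $d,r,p$ (this uses the Sobolev embedding $W^{r,p}\hookrightarrow\mathcal C^0$; when $r\le d/p$ one runs the same argument with $L^p$ in place of $L^\infty$). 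Choosing $n=\big\lceil(2C|f|_{W^{r,p}(\Omega)}/\epsilon)^{1/r}\big\rceil=O(\epsilon^{-1/r})$ makes $\|f-P\|_\infty\le\epsilon/2$, so it remains only to approximate the fixed polynomial $P$ within $\epsilon/2$ by a one-hidden-layer $\sigma$-network with $O(n^d)$ neurons.

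For that step, I would first extract from the hypothesis (that $\sigma\in\mathcal C^\infty$ is non-polynomial on every interval) a base point $b_0\in\R$ with $\sigma^{(k)}(b_0)\ne0$ for all $k\ge0$: otherwise $\R=\bigcup_{k\ge0}\{x:\sigma^{(k)}(x)=0\}$ is a countable union of closed sets, so by Baire some $\{x:\sigma^{(k)}(x)=0\}$ contains an interval, forcing $\sigma$ to be a polynomial there. Next I would use the identity
\begin{equation*}
\frac{\partial^{|\mb\alpha|_1}}{\partial\mb w^{\mb\alpha}}\,\sigma(\mb w^\top\mb x+b_0)\Big|_{\mb w=\mb 0}=\sigma^{(|\mb\alpha|_1)}(b_0)\,\mb x^{\mb\alpha},
\end{equation*}
which displays each monomial $\mb x^{\mb\alpha}$, divided by the nonzero scalar $\sigma^{(|\mb\alpha|_1)}(b_0)$, as a mixed $\mb w$-derivative at $\mb w=\mb 0$ of a single neuron. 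Replacing this derivative by the iterated forward difference over the lattice $\{h\mb z:\mb z\in\{0,\dots,n\}^d\}$, and using that $\sigma\in\mathcal C^\infty$ with $\Omega$ compact, the difference converges to $\mb x^{\mb\alpha}$ uniformly on $\Omega$ as $h\to0$ and is, for every $\mb\alpha$ with $|\mb\alpha|_\infty\le n$, a linear combination of the \emph{same} family of $(n+1)^d$ neurons $\{\mb x\mapsto\sigma(h\mb z^\top\mb x+b_0):\mb z\in\{0,\dots,n\}^d\}$. Combining these neurons with the coefficients of $P$ and taking $h$ small enough — which is where $\epsilon$ needs to be small, since $h$ must be chosen small relative to $n=n(\epsilon)$ — produces a one-hidden-layer network with $(n+1)^d=O(n^d)$ neurons approximating $P$ within $\epsilon/2$; the triangle inequality then gives total error $\le\epsilon$ with $(n+1)^d=O(\epsilon^{-d/r})$ neurons.

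The main obstacle is the bookkeeping that keeps the neuron count at $O(n^d)$ rather than $O(n^{d+1})$: one must use a single lattice of $(n+1)^d$ weights to generate, by divided differences along the $d$ coordinate axes, approximations of \emph{all} $(n+1)^d$ monomials $\mb x^{\mb\alpha}$, $|\mb\alpha|_\infty\le n$, at once, so that the neurons are shared across every monomial of $P$. The accompanying quantitative step — choosing $h=h(n,\epsilon)$ so that the finite-difference remainders, whose coefficients scale like $h^{-|\mb\alpha|_1}$ but whose errors are $O(h)$ with constants governed by $\max\{|\sigma^{(k)}(t)|:k\le n+1,\ |t-b_0|\le nd\}$ and by $\min\{|\sigma^{(k)}(b_0)|:k\le n\}>0$, stay below $\epsilon/2$ after being summed against the finitely many coefficients of $P$ — is routine but must be tracked; a cleaner alternative is to establish qualitatively that the space $\mathcal P_n^d$ of degree-$\le n$ polynomials lies in the uniform closure on $\Omega$ of $\mathrm{span}\{\mb x\mapsto\sigma(\mb w^\top\mb x+b)\}$ and then bound the number of neurons actually used by $\dim\mathcal P_n^d=O(n^d)$ via a finite-dimensionality argument.
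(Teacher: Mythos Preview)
The paper does not give its own proof of this statement: Theorem~\ref{thm: Mhaskar upper bound} is quoted verbatim as a result of Mhaskar \cite{mhaskar1996neural} and is used only as background to motivate the study of Korobov rather than Sobolev spaces. There is therefore nothing in the paper to compare your argument against.

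That said, your sketch is essentially the standard route to Mhaskar's result and is sound in outline: a Jackson-type polynomial approximation of degree $n\sim\epsilon^{-1/r}$ followed by the divided-difference construction that realises every monomial $\mb x^{\mb\alpha}$ with $|\mb\alpha|_\infty\le n$ as a limit of linear combinations of the \emph{same} $(n+1)^d$ neurons $\sigma(h\mb z^\top\mb x+b_0)$. The Baire-category extraction of a point $b_0$ with all derivatives nonzero, the shared-lattice bookkeeping, and the observation that the neuron count can be capped by $\dim\mathcal P_n^d$ via finite-dimensionality are all correct and standard. One small slip: you write ``polynomial $P$ of total degree at most $n$, hence with $O(n^d)$ monomials, each of multi-index $\mb\alpha$ with $|\mb\alpha|_\infty\le n$'' --- total degree $\le n$ means $|\mb\alpha|_1\le n$, not $|\mb\alpha|_\infty\le n$; the latter is a larger index set (though still of size $O(n^d)$), and it is this larger set that your lattice construction actually covers, so the argument goes through but the phrasing should be cleaned up.
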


The approximation of Sobolev functions by neural networks therefore suffers from the curse of dimensionality since the number of neurons needed grows exponentially with the input space dimension $d$. This curse is not due to poor performance of neural networks but rather to the choice of the Sobolev space to approximate. DeVore and al. \cite{devore1989optimal} proved a general result showing that any learning algorithm with continuous parameters needs at least $\Theta(\epsilon^{-\frac{d}{r}})$ parameters to approximate the Sobolev space $W^{r,p}$. This shows that the class of Sobolev functions suffers inherently from the curse of dimensionality and no function approximators can overcome it. We detail this notion later in Section \ref{sec: NN are optimal.}.

The natural question that arises is whether there exists a reasonable and sufficiently large class of functions for which there is no inherent curse of dimensionality. We are interested in this case in the performance of neural networks in approximating this space of functions. Instead of the Sobolev space, we turn to Korobov space $X^{2,\infty}(\Omega)$ of functions with bounded mixed derivatives. Notice that this space of functions is included in the Sobolev space, but still reasonably large as the regularity condition concerns only second order derivatives. Two questions will be of interest. First, how many neurons and training parameters a neural network needs to approximate any Korobov function within $\epsilon$ in the $L^{\infty}$ norm. Second, how does neural network performance compare to the theoretically best possible function approximators for Korobov space.

\subsection{Sparse grids and hierarchical basis}
\label{subsection: sparse grids}
We review in this subsection the notion of sparse grids which will be key in our neural networks constructions to approximate functions in Korobov spaces. Sparse grids were introduced by Zenger \cite{zenger1991sparse} in order to solve partial differential equations. Sparse grids have since then been extensively used for high-dimensional function approximation. They have been proven to break the curse of dimensionality for Korobov spaces. We refer to Bungartz and Griebel \cite{bungartz2004sparse} for a thorough review of the topic.

The goal is to define discrete approximation spaces with basis functions. Instead of a classical uniform grid partition of the hyper-cube $[0,1]^d$ involving $n^d$ components, where $n$ is the number of partitions in each coordinate, the sparse grid approach uses a smarter partitioning of the cube preserving the approximation accuracy while drastically reducing the number of components of the grid. The construction involves a $1-$dimensional mother function $\phi$ which is used to generate all the functions of the basis. For example, a simple choice the building block $\phi$ is the standard hat function
\begin{equation}\label{eq: hat function}
    \phi(x) := (1-|x|)_+, \quad \forall x \in [-1,1].
\end{equation}
The hat function is not the only possible choice. In the latter proofs we will specify which mother function is used, in our case either the Deslaurier-Dubuc interpolates \cite{deslauriers1989symmetric} (which we define rigorously later in our proofs) or the hat function $\phi$ which can be seen as the Deslaurier-Dubuc interpolate of order $1$. These more elaborate mother functions enjoy more smoothness while essentially preserving same approximation power.

Assume the mother function has support in $[-k,k]$. For $j=1,\cdots,d$, it can be used to generate a set of local functions $\phi_{l_j,i_j}: [0,1] \longrightarrow \mathbb{R}$ for all $l_j\geq 1$ and $1\leq i_j\leq 2^{l_j}-1$, with support on $\left[ \frac{i_j-k}{2^{l_j}},\frac{i_j+k}{2^{l_j}}\right] $ as follows,
\begin{equation}\label{eq: hat functions lj ij}
    \phi_{l_j,i_j}(x) := \phi(2^{l_j}x - i_j ), \quad \forall x \in [0,1].
\end{equation}
We now define a basis of $d-$dimensional functions by taking the tensor product of these $1-$dimensional functions. For all $\mb{l},\mb{i} \in \mathbb{N}^d$ with $\mb l\geq \mb 1$ and $\mb 1\leq \mb i\leq 2^{\mb l}-\mb 1$ where $2^{\mb{l}}$ denotes $(2^{l_1},\ldots, 2^{l_d})$, define
\begin{equation*}
    \phi_{\mb l, \mb i}(\mb{x}) := \prod_{j=1}^d \phi_{l_j,i_j}(x_j), \quad \forall \mb{x} \in [0,1]^d.
\end{equation*}
For a fixed $\mb l \in \mathbb{N}^d$, we will consider the \emph{hierarchical increment space} $W_{\mb l}$ which is the subspace spanned by the functions $\{\phi_{\mb l, \mb i}\; : \; \mb 1\leq \mb i\leq 2^{\mb l}-\mb 1 \}$, as illustrated in Figure \ref{fig:hierarchical basis},
\begin{equation*}
    W_{\mb l}:= \text{span} \{\phi_{\mb l,\mb i}, \; \mb 1\leq \mb i\leq 2^{\mb l}-\mb 1,\; i_j \text{ odd for all }1\leq j\leq d\}.
\end{equation*}

\begin{figure}
    \centering
    \begin{tikzpicture}[scale=0.8]
    \foreach \x in {1}
    \draw (\x*5,0.5) node[below] {$\phi_{1,\x}$};
    \draw [thick] (0,-0.5) -- (5,0.5) -- (10,-0.5);
    \draw[,->] (-0.5,-0.5) -- (10.5,-0.5);
    \draw (0 cm,2pt-0.5cm) -- (0 cm,-2pt-0.5cm) node[anchor=north] {$0$};
    \draw (5 cm,2pt-0.5cm) -- (5 cm,-2pt-0.5cm);
    \draw (10 cm ,2pt-0.5cm) -- (10 cm,-2pt-0.5cm) node[anchor=north] {$1$};
    \draw (-1,0) node[left] {$W_1$};
    
    \foreach \x in {0,1}
    \draw [thick] (\x*5,-2) -- (\x*5 +2.5,-1) -- (\x*5 +5 ,-2);
    \foreach \x in {1,2,3}
    \draw (\x*2.5,2pt-2cm) -- (\x*2.5,-2pt-2cm);
    \draw (10 ,2pt-2cm) -- (10,-2pt-2cm) node[anchor=north] {$1$};
    \draw (0 ,2pt-2cm) -- (0,-2pt-2cm) node[anchor=north] {$0$};
    \draw[,->] (-0.5,-2) -- (10.5,-2);
    \foreach \x in {1,3}
    \draw (\x*2.5,-1) node[below] {$\phi_{2,\x}$};
    
    \foreach \x in {0,1,2,3}
    \draw [thick](\x*2.5,-4) -- (\x*2.5 +1.25,-3) -- (\x*2.5 +2.5 ,-4);
    \foreach \x in {1,2,3,4,5,6,7}
    \draw (\x*1.25 cm,2pt-4cm) -- (\x*1.25,-2pt-4cm);
    \draw (10 ,2pt-4cm) -- (10,-2pt-4cm) node[anchor=north] {$1$};
    \draw (0 ,2pt-4cm) -- (0,-2pt-4cm) node[anchor=north] {$0$};
    \draw[,->] (-0.5,-4) -- (10.5,-4);
    \foreach \x in {1,3,5,7}
    \draw (\x*1.25,-3) node[above] {$\phi_{3,\x}$};
    
    \foreach \x in {0,1,2,3,4,5,6,7}
    \draw [thick](\x*1.25,-6) -- (\x*1.25 +0.625,-5) -- (\x*1.25 +1.25 ,-6);
    \foreach \x in {1,2,3,4,5,6,7,8,9,10,11,12,13,14,15}
    \draw (\x*0.625 cm,2pt-6cm) -- (\x*0.625,-2pt-6cm);
    \draw (10 ,2pt-6cm) -- (10,-2pt-6cm) node[anchor=north] {$1$};
    \draw (0 ,2pt-6cm) -- (0,-2pt-6cm) node[anchor=north] {$0$};
    \draw[,->] (-0.5,-6) -- (10.5,-6);
    \foreach \x in {1,3,5,7,9,11,13,15}
    \draw (\x*0.625,-5) node[above] {$\phi_{4,\x}$};
    
    \foreach \y in {2,3,4}
    \draw (-1,2.5-\y*2) node[left] {$W_{\y}$};
    \end{tikzpicture}
    \caption{Hierarchical basis obtained from the sparse grid construction using the wedge $(1-|\cdot|)_+$ as mother function.}
    \label{fig:hierarchical basis}
\end{figure}
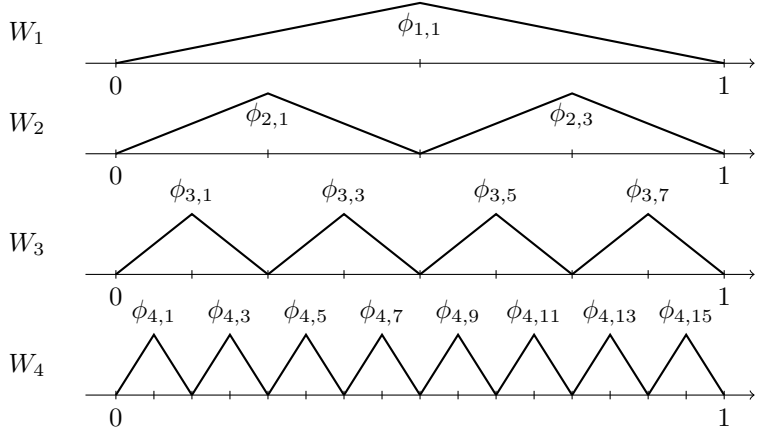
Note that in the hierarchical increment $W_{\mb l}$, all basis functions have disjoint support. A key observation is that Korobov functions $X^{2,p}(\Omega)$ can be expressed uniquely in this hierarchical basis. More precisely, there is a unique representation of $u\in X^{2,p}(\Omega)$ as
\begin{equation*}
u(\mb x) = \sum_{\mb l, \mb i} v_{\mb l,\mb i}\phi_{\mb l, \mb i}(\mb x),
\end{equation*}
where the sum is taken over all multi-indices $\mb l\geq \mb 1$ and $ \mb 1\leq \mb i\leq 2^{\mb l}-\mb 1$ where all components of $\mb i$ are odd. In particular, all basis functions are linearly independent. Notice that this sum is infinite, the objective is now to define a finite-dimensional subspace of $X^{2,p}(\Omega)$ that will serve as approximation space. Sparse grids \cite{bungartz2004sparse} use a carefully chosen subset of the hierarchical basis functions to construct the approximation space $V_n^{(1)}$ as illustrated in dimension $2$ in Figure \ref{fig:sparse grids},
\begin{equation}\label{eq: V1.}
    V_n^{(1)} := \bigoplus_{|\mb l|_1\leq n+d-1} W_{\mb l} .
\end{equation}

\begin{figure}[h!]
    \centering
    \begin{tikzpicture}[scale = 0.9]
    \foreach \x in {0,1,2,3}{
        \foreach \y in {0,1,2,3}
            \draw (2.5*\x,2.5*\y) rectangle (2.5*\x+2,2.5*\y+2);
    }
    \foreach \y in {0,1,2,3}{
        \draw (2.5+1,2.5*\y) -- (2.5+1,2.5*\y+2);
        \draw (2.5*\y,2.5+1) -- (2.5*\y+2,2.5+1);
        \foreach \z in {1,2,3}{
            \draw (2.5*2+\z/2,2.5*\y) -- (2.5*2+\z/2,2.5*\y+2);
            \draw (2.5*\y,2.5*2+\z/2) -- (2.5*\y+2,2.5*2+\z/2);
        }
        \foreach \z in {1,2,3,4,5,6,7}{
            \draw (2.5*3+\z/4,2.5*\y) -- (2.5*3+\z/4,2.5*\y+2);
            \draw (2.5*\y,2.5*3+\z/4) -- (2.5*\y+2,2.5*3+\z/4);
        }
    }
    \foreach \y in {0,1,2,3}{
        \draw (2.5+1,2.5*\y) -- (2.5+1,2.5*\y+2);
        \foreach \z in {1,2,3}
        \draw (2.5*2+\z/2,2.5*\y) -- (2.5*2+\z/2,2.5*\y+2);
        \foreach \z in {1,2,3,4,5,6,7}
        \draw (2.5*3+\z/4,2.5*\y) -- (2.5*3+\z/4,2.5*\y+2);
    }
    \foreach \x in {1,3,4,5.25,5.75,6.25,6.75,7.625,7.875,8.125,8.375,8.625,8.875,9.125,9.375}{
        \foreach \y in {1,3,4,5.25,5.75,6.25,6.75,7.625,7.875,8.125,8.375,8.625,8.875,9.125,9.375}
            \draw (\x,\y) node {$\mb\cdot$};
    }
    \draw [ultra thick,dashed] (9.75,0) -- (9.75,2.25) -- (7.25,2.25) -- (7.25,4.75) -- (4.75,4.75) -- (4.75,7.25) -- (2.25,7.25) -- (2.25,9.75) -- (0,9.75);
    \draw [thick,->,>=latex] (-0.25,-0.25) -- (10,-0.25) node[right] {$l_1$};
    \draw [thick,->,>=latex] (-0.25,-0.25) -- (-0.25,10) node[above] {$l_2$};
    
    \foreach \y in {0,1,2,3}{
        \draw (-0.5,2.5*\y) -- (-0.5,2.5*\y+2);
        \draw (2.5*\y,-0.5) -- (2.5*\y+2,-0.5);
    }
    \draw (0,-0.5) -- (1,-1) -- (2,-0.5);
    \draw (2.5,-0.5) -- (3,-1) -- (3.5,-0.5) -- (4,-1) -- (4.5,-0.5);
    \draw (-0.5,0) -- (-1,1) -- (-0.5,2);
    \draw (-0.5,2.5) -- (-1,3) -- (-0.5,3.5) -- (-1,4) -- (-0.5,4.5);
    \foreach \x in {5.25,5.75,6.25,6.75}{
        \draw (\x,-1) -- (\x+0.25,-0.5);
        \draw (\x,-1) -- (\x-0.25,-0.5);
        \draw (-1,\x) -- (-0.5,\x+0.25);h
        \draw (-1,\x) -- (-0.5,\x-0.25);
    }
    \foreach \x in {7.625,7.875,8.125,8.375,8.625,8.875,9.125,9.375}{
        \draw (\x,-1) -- (\x+0.125,-0.5);
        \draw (\x,-1) -- (\x-0.125,-0.5);
        \draw (-1,\x) -- (-0.5,\x+0.125);
        \draw (-1,\x) -- (-0.5,\x-0.125);
    }
    \end{tikzpicture}
    \caption{Representation of the hierarchical basis $W_{(l_1,l_2)}$ for $1\leq l_1,l_2\leq 4$, in dimension $2$. The rectangles show the support of the basis functions $\phi_{\mb l,\mb i}$ in hierarchical basis $W_{\mb l}$ and the dot represents the center of this support, where $\phi_{\mb l,\mb i}$ takes maximum value $1$. The basis functions $\phi_{\mb l,\mb i}$ are the tensor product of the univariate functions represented next to the two axis. The approximation space of sparse grids $V_3^{(1)}$ is the span of the hierarchical basis below the dashed line ($|\mb l|_1\leq 4$). Instead of taking the complete grid ($|\mb l|_\infty\leq 4$) sparse grids uses a significantly smaller approximation space while conserving similar accuracy.}
    \label{fig:sparse grids}
\end{figure}
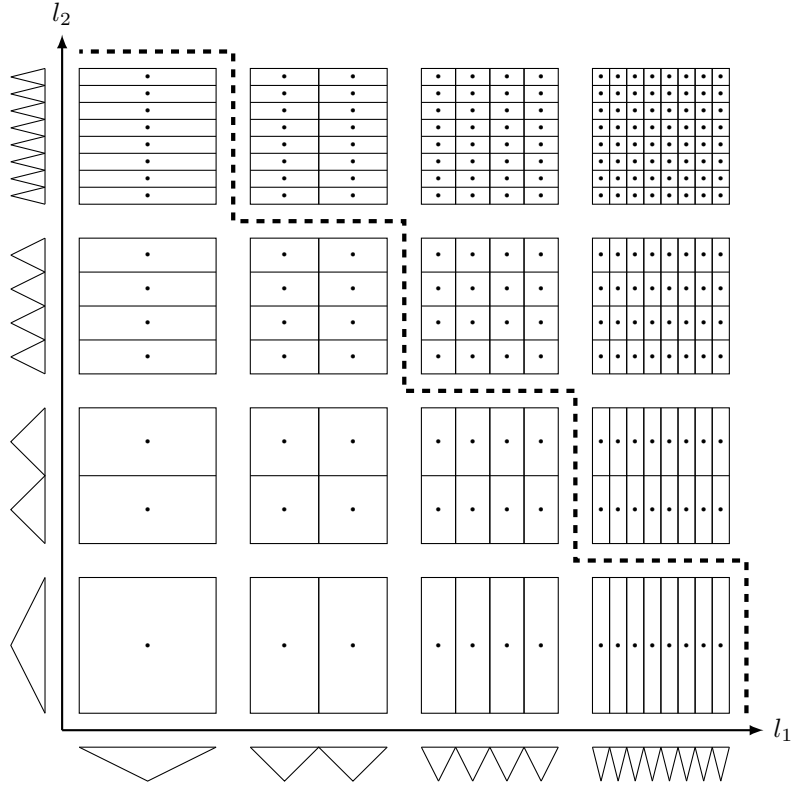
For the particular choice of the hat function for $\phi$, Bungartz and Griebel \cite{bungartz2004sparse} showed that this choice of approximating space leads to a good approximating error.
\begin{theorem}[Bungartz and Griebel \cite{bungartz2004sparse}]\label{thm: sparse grid approx.}
Let $f \in X^{2,\infty}(\Omega)$
and $f_n^{(1)}$ be the projection of $f$ on the subspace $V_n^{(1)}$. We have,
\begin{equation*}
    \|f-f_n^{(1)}\|_\infty = O\left(2^{-2n} n^{d-1}\right).
\end{equation*}
More precisely, 
\begin{equation*}
    \|f-f_n^{(1)}\|_\infty \leq \frac{2\cdot |f|_{\mb 2,\infty}}{8^d}\cdot 2^{-2n}\cdot A(d,n),
\end{equation*}
where
\begin{equation*}
    A(d,n):= \sum_{k=0}^{d-1}\binom{n+d-1}{k} = \frac{n^{d-1}}{(d-1)!}+ O(n^{d-2}).
\end{equation*}
Furthermore, if $v_{\mb l,\mb i}$ denotes the coefficient of $\phi_{\mb l,\mb i}$ in the decomposition of $f_n^{(1)}$ in $V^{(1)}_n$, then
\begin{equation*}
    |v_{\mb l,\mb i}|\leq 2^{-d}\cdot c^{-2|\mb l|_1}\cdot |f|_{\mb 2,\infty},
\end{equation*}
for all $\mb{l},\mb{i} \in \mathbb{N}^d$ with $|\mb l|_1\leq n+d-1$, $\mb 1\leq \mb i\leq 2^{\mb l} -\mb 1$ where $\mb i$ has odd components.
\end{theorem}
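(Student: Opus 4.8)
The plan is to derive all three assertions from a single bound on the hierarchical surpluses $v_{\mb l, \mb i}$ together with a combinatorial summation, working throughout with the hat function as mother function. I start from the unique hierarchical expansion $f = \sum_{\mb l, \mb i} v_{\mb l, \mb i}\phi_{\mb l, \mb i}$ recorded above; since the projection onto $V_n^{(1)}$ is exactly the truncation of this expansion to the indices $|\mb l|_1 \le n+d-1$, the coefficients of $f_n^{(1)}$ are the surpluses $v_{\mb l, \mb i}$ of $f$, so the last assertion is just the coefficient bound, and the error satisfies $f - f_n^{(1)} = \sum_{|\mb l|_1 \ge n+d} f_{\mb l}$ with $f_{\mb l} := \sum_{\mb i} v_{\mb l, \mb i}\phi_{\mb l, \mb i} \in W_{\mb l}$ (the series converging uniformly).

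First I would prove the coefficient bound. In one dimension, with $h = 2^{-l}$ and odd index $i$, the hierarchical surplus at the node $ih$ is $v_{l,i} = f(ih) - \tfrac12\big(f(ih-h) + f(ih+h)\big)$. Integrating by parts shows that the kernel of the functional $g \mapsto g(ih) - \tfrac12\big(g(ih-h)+g(ih+h)\big)$ is exactly $-\tfrac h2\phi_{l,i}$ — all boundary terms drop since $\phi_{l,i}$ vanishes at $ih\pm h$ — so
\[
v_{l,i} \;=\; -\,2^{-l-1}\int_0^1 \phi_{l,i}(x)\, f''(x)\, dx ,
\]
and since $\|\phi_{l,i}\|_{L^1} = 2^{-l}$ this gives $|v_{l,i}| \le 2^{-2l-1}\|f''\|_\infty$. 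The multivariate surplus is obtained by applying these one-dimensional operators successively in the $d$ coordinates; they commute, $D^{\mb 2}f$ is the product of the one-dimensional second derivatives, and $\phi_{\mb l, \mb i}$ is a tensor product, so
\[
v_{\mb l, \mb i} \;=\; (-1)^d\, 2^{-|\mb l|_1 - d}\int_\Omega \phi_{\mb l, \mb i}(\mb x)\, D^{\mb 2}f(\mb x)\, d\mb x ,
\]
and $\|\phi_{\mb l, \mb i}\|_{L^1} = 2^{-|\mb l|_1}$ yields $|v_{\mb l, \mb i}| \le 2^{-d}\,2^{-2|\mb l|_1}\,|f|_{\mb 2, \infty}$, i.e.\ the claimed bound with $c = 2$.

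Next I would bound the error pointwise. The key structural fact is that inside each increment $W_{\mb l}$ the basis functions (over odd $\mb i$) have pairwise disjoint supports that tile $\Omega$, and $0 \le \phi_{\mb l, \mb i} \le 1$. Hence at any fixed $\mb x$ and any $\mb l$, at most one term $v_{\mb l, \mb i}\phi_{\mb l, \mb i}(\mb x)$ is nonzero and it has modulus at most $2^{-d}2^{-2|\mb l|_1}|f|_{\mb 2, \infty}$. Summing over all $\mb l \ge \mb 1$ with $|\mb l|_1 \ge n+d$ and grouping by $m = |\mb l|_1$, for which there are $\binom{m-1}{d-1}$ index vectors,
\[
\|f - f_n^{(1)}\|_\infty \;\le\; 2^{-d}\,|f|_{\mb 2, \infty}\sum_{m \ge n+d}\binom{m-1}{d-1}\,4^{-m} .
\]

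Finally I would evaluate this tail series. Setting $m = n+d+j$, expanding $\binom{n+d-1+j}{d-1}$ by Vandermonde's identity, and using $\sum_{j \ge 0}\binom{j}{r}4^{-j} = \tfrac43\,3^{-r}$ collapses the sum to $\tfrac43\,4^{-n-d}\sum_{k=0}^{d-1}\binom{n+d-1}{k}\,3^{\,k-d+1} \le \tfrac43\,4^{-n-d}A(d,n)$; multiplying by the prefactor $2^{-d}|f|_{\mb 2, \infty}$ gives $\|f - f_n^{(1)}\|_\infty \le \tfrac{4}{3\cdot 8^d}|f|_{\mb 2, \infty}\,2^{-2n}A(d,n) \le \tfrac{2}{8^d}|f|_{\mb 2, \infty}\,2^{-2n}A(d,n)$. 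The asymptotics $A(d,n) = \sum_{k=0}^{d-1}\binom{n+d-1}{k} = \tfrac{n^{d-1}}{(d-1)!} + O(n^{d-2})$ are immediate since the $k=d-1$ term dominates, yielding the $O(2^{-2n}n^{d-1})$ rate. I expect the only genuinely delicate steps to be the one-dimensional Green's-kernel identity for the surplus — getting the sign and support right so that exactly $\|f''\|_\infty$ appears with the sharp constant, and justifying it for weak derivatives by density — and the clean evaluation of the combinatorial tail; the tensorization and the geometric-type estimate are routine.
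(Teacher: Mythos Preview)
The paper does not prove this theorem; it is stated with attribution to Bungartz and Griebel \cite{bungartz2004sparse} and used as a black box in Sections~3 and~4. Your argument is correct and is precisely the standard proof from that reference: the integral (Peano-kernel) representation $v_{l,i}=-2^{-l-1}\int\phi_{l,i}f''$ of the one-dimensional surplus, its tensorization to obtain $|v_{\mb l,\mb i}|\le 2^{-d}2^{-2|\mb l|_1}|f|_{\mb 2,\infty}$, the disjoint-support observation giving a level-wise pointwise bound, and the Vandermonde/geometric evaluation of the tail sum; your constant $4/(3\cdot 8^d)$ is in fact slightly sharper than the $2/8^d$ recorded in the statement.
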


\section{The Representation Power of Shallow Neural Networks}\label{sec: shallow nn.}
It has recently been shown that deep neural networks, with depth scaling with $\epsilon$, break the curse of dimensionality on the numbers of neuron needed to approximate the Korobov space \cite{montanelli2019new}. However, the question of whether shallow neural networks with fixed universal depth --- independent of $\epsilon$ and $d$ --- escape the curse of dimensionality as well for the Korobov space remains open. We settle this question by proving that shallow neural networks also break the curse of dimensionality for Korobov space.

\begin{theorem}\label{thm: shallow NN bound.}
Let $\epsilon>0$. For all $f \in X^{2,\infty}(\Omega)$, there exists a neural network with 2 layers, ReLU activation, 
$O(\epsilon^{-1} (\log \frac{1}{\epsilon})^{\frac{3d-1}{2}})$
neurons, and $O(\epsilon^{-\frac{1}{2}}(\log \frac{1}{\epsilon})^{\frac{3(d-1)}{2}})$ training parameters that approximates $f$ within $\epsilon$ for the infinity norm.
\end{theorem}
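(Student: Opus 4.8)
The plan is to feed the sparse-grid approximation of Theorem~\ref{thm: sparse grid approx.} through a two-layer network that simulates each tensor-product basis function $\phi_{\mb l,\mb i}$. First I would fix $n=\Theta(\log\tfrac1\epsilon)$ --- precisely, the least integer with $\tfrac{2|f|_{\mb 2,\infty}}{8^d}2^{-2n}A(d,n)\le\epsilon/2$, so that $2^n\asymp\epsilon^{-1/2}(\log\tfrac1\epsilon)^{(d-1)/2}$ and $n^{d-1}\asymp(\log\tfrac1\epsilon)^{d-1}$ --- and invoke Theorem~\ref{thm: sparse grid approx.}: the interpolant $f_n^{(1)}=\sum_{|\mb l|_1\le n+d-1}\sum_{\mb i}v_{\mb l,\mb i}\,\phi_{\mb l,\mb i}$ satisfies $\|f-f_n^{(1)}\|_\infty\le\epsilon/2$, with coefficient bound $|v_{\mb l,\mb i}|\le 2^{-d}c^{-2|\mb l|_1}|f|_{\mb 2,\infty}$. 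It then remains to simulate $f_n^{(1)}$ within $\epsilon/2$ by a two-layer ReLU network.

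The heart of the argument is that $\phi_{\mb l,\mb i}(\mb x)=\prod_{j=1}^d\phi_{l_j,i_j}(x_j)$ is a genuine $d$-fold product, which ReLU networks normally realize only with depth $\Theta(\log d)$; fitting it into depth two is the hard part. I would get around this with the polarization identity
\begin{equation*}
\prod_{j=1}^d t_j=\frac{1}{2^d\,d!}\sum_{\mb\epsilon\in\{-1,1\}^d}\Big(\prod_{j=1}^d\epsilon_j\Big)\Big(\sum_{j=1}^d\epsilon_j t_j\Big)^{\!d},
\end{equation*}
which holds identically and writes $\prod_j t_j$ as a fixed linear combination of $2^d$ terms of the form $s\mapsto s^d$ applied to affine forms in $\mb t$. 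Since $s\mapsto s^d$ is $\mathcal C^2$, its restriction to $[-d,d]$ is approximated within any $\delta>0$ in $L^\infty$ by a piecewise-linear function, i.e.\ by a one-hidden-layer ReLU network $q_\delta$ with $O(\delta^{-1/2})$ neurons. Substituting $q_\delta$ for each power yields $M_\delta(\mb t):=\tfrac{1}{2^d d!}\sum_{\mb\epsilon}(\prod_j\epsilon_j)\,q_\delta(\sum_j\epsilon_j t_j)$, a one-hidden-layer network applied to the forms $\sum_j\epsilon_j t_j$, with $O(2^d\delta^{-1/2})$ neurons and $\sup_{\mb t\in[0,1]^d}|M_\delta(\mb t)-\prod_j t_j|\le\delta$ (using $|\sum_j\epsilon_j t_j|\le d$ on $[0,1]^d$).

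Now I compose. Each factor $\phi_{l_j,i_j}(x_j)=\phi(2^{l_j}x_j-i_j)$ with $\phi(s)=(1-|s|)_+=\sigma(s+1)-2\sigma(s)+\sigma(s-1)$ is a fixed linear combination of three ReLU units whose pre-activations are affine in $x_j$; these go in hidden layer~1 (and can be shared among all basis functions sharing a level component, giving only $O(d\,2^n)$ first-layer neurons). Feeding the affine forms $\sum_j\epsilon_j\phi_{l_j,i_j}(x_j)$, which are linear combinations of hidden-layer-1 outputs, into $M_\delta$ places the units of $q_\delta$ in hidden layer~2, so the composition $\widetilde\phi_{\mb l,\mb i}$ is a genuine two-hidden-layer ReLU network with $\|\widetilde\phi_{\mb l,\mb i}-\phi_{\mb l,\mb i}\|_\infty\le\delta$ and $O(\delta^{-1/2})$ second-layer neurons. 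Setting $\widetilde f:=\sum_{\mb l,\mb i}v_{\mb l,\mb i}\,\widetilde\phi_{\mb l,\mb i}$ and stacking all sub-networks with one shared linear output neuron gives a two-layer network; since the $\widetilde\phi_{\mb l,\mb i}$ need not be compactly supported I bound crudely
\begin{equation*}
\|\widetilde f-f_n^{(1)}\|_\infty\le\sum_{\mb l,\mb i}|v_{\mb l,\mb i}|\,\delta\le 2^{-2d}|f|_{\mb 2,\infty}\,\delta\sum_{\mb l\ge\mb 1}(2c^{-2})^{|\mb l|_1},
\end{equation*}
using $\#\{\mb i:\mb 1\le\mb i\le 2^{\mb l}-\mb 1,\ i_j\text{ odd}\}=2^{|\mb l|_1-d}$ for fixed $\mb l$; the geometric series converges (it is exactly the coefficient decay driving Theorem~\ref{thm: sparse grid approx.}), so $\delta=\Theta(\epsilon/|f|_{\mb 2,\infty})$ gives $\|\widetilde f-f\|_\infty\le\epsilon$. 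For the counts: hidden layer~2 has $O(\delta^{-1/2})$ neurons per basis function and there are $\dim V_n^{(1)}=O(2^n n^{d-1})$ basis functions, so $O(\epsilon^{-1/2}2^n n^{d-1})=O(\epsilon^{-1}(\log\tfrac1\epsilon)^{3(d-1)/2})$ second-layer neurons --- a fortiori the asserted $O(\epsilon^{-1}(\log\tfrac1\epsilon)^{(3d-1)/2})$. Finally, the only weights depending on $f$ are the output weights, each a fixed scalar multiple of some $v_{\mb l,\mb i}$, so the number of free trainable parameters is $O(\dim V_n^{(1)})=O(\epsilon^{-1/2}(\log\tfrac1\epsilon)^{3(d-1)/2})$, as claimed.

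The main obstacle is the step above: realizing a $d$-way product with a \emph{fixed} depth-two architecture and only $\mathrm{poly}(1/\delta)$ neurons, rather than with the logarithmic-depth multiplication gadgets usual in the literature. The polarization identity is precisely what collapses the composition into exactly two hidden layers, and the $\mathcal C^2$-regularity of $s\mapsto s^d$ is what keeps the per-factor count near $\delta^{-1/2}$; everything else --- the global $L^\infty$ error and the free-parameter accounting --- then rides on the coefficient decay from Theorem~\ref{thm: sparse grid approx.}.
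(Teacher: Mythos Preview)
Your argument is correct. Both you and the paper start from the same sparse-grid interpolant $f_n^{(1)}$ with $n$ chosen so that $2^n\asymp\epsilon^{-1/2}(\log\tfrac1\epsilon)^{(d-1)/2}$, and both rely on the coefficient bound $\sum_{\mb l,\mb i}|v_{\mb l,\mb i}|\le C_d|f|_{\mb 2,\infty}$ to propagate a uniform per-basis-function error $\delta$ to a global $O(\epsilon)$ error. The two proofs diverge only in how they realize the $d$-fold product $\prod_j\phi_{l_j,i_j}(x_j)$ in depth two.

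The paper uses the identity $\prod_j t_j=\exp(\sum_j\log t_j)$: the first hidden layer computes piecewise-linear $\tilde\epsilon$-approximations of $\max(\log\phi_{l_j,i_j},\log\tilde\epsilon)$, costing $O(d^{1/2}\tilde\epsilon^{-1/2}\log\tfrac1{\tilde\epsilon})$ neurons per univariate factor, and the second layer applies a piecewise-linear approximation of $\exp$ on $[\log\tilde\epsilon,0]$, costing $O(\tilde\epsilon^{-1/2}\log\tfrac1{\tilde\epsilon})$ neurons per basis function. You instead use the polarization identity to reduce the product to $2^d$ evaluations of $s\mapsto s^d$; your first hidden layer computes the hat functions $\phi_{l_j,i_j}$ \emph{exactly} with three ReLU units each, and the second layer approximates $s^d$ on $[-d,d]$ with $O_d(\delta^{-1/2})$ neurons.

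What each buys: your route is cleaner in layer one (exact representation, only $O(d\,2^n)$ neurons, no $\epsilon^{-1/2}$ or $\log$ factor there) and shaves a $\log\tfrac1\epsilon$ off the second-layer count, giving $O(\epsilon^{-1}(\log\tfrac1\epsilon)^{3(d-1)/2})$ rather than the paper's $O(\epsilon^{-1}(\log\tfrac1\epsilon)^{(3d-1)/2})$. The price is that your hidden $d$-constants are worse --- the polarization contributes $2^d$, and approximating $s^d$ on $[-d,d]$ to accuracy $\delta$ costs roughly $d^{(d+1)/2}\delta^{-1/2}$ pieces --- whereas the $\log/\exp$ construction keeps the $d$-dependence polynomial. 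Your approach also does not extend as directly to sigmoid-like activations (the paper's Theorem~\ref{thm: shallow NN bound generalized activation.}), since exact realization of the hats in layer one uses ReLU specifically. For the stated theorem with ReLU, though, your proof is a valid and slightly sharper alternative.
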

In order to prove Theorem \ref{thm: shallow NN bound.}, we construct the approximating neural network explicitly throughout the remaining of this section.

\subsection{Approximating the product function}
\label{section:product approximation}
The first step is to construct a neural network architecture with two layers and  $O(d^{3/2}\epsilon^{-1/2} \log\frac{1}{\epsilon})$ neurons that approximates the product function $p: \mb{x}\in [0,1]^d \longmapsto \prod_{i=1}^n x_i$ within $\epsilon$ for all $\epsilon>0$. We first prove a simple lemma to represent univariate piece-wise affine functions by shallow neural networks.

\begin{lemma}
\label{lemma:piece-wise affine}
Any one dimensional continuous piece-wise affine function with $m$ pieces is representable exactly by a shallow neural network with ReLU activation, with $m$ neurons on a single layer.
\end{lemma}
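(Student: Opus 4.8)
The plan is to give an explicit construction showing that a continuous piece-wise affine function $g:[0,1]\to\R$ (or on any interval) with $m$ linear pieces can be written as an affine combination of $m$ ReLU units plus a constant, and then absorb the constant into the output neuron's bias — or, if one insists on the pure form $\mb x\mapsto\sum_k u_k\sigma(\mb w_k^\top\mb x+b_k)$, use one extra degree of freedom. Concretely, let $t_1<t_2<\cdots<t_{m-1}$ be the breakpoints of $g$, and let $s_0,s_1,\ldots,s_{m-1}$ be the slopes of the successive affine pieces. I would set $a_j=s_j-s_{j-1}$ for $j=1,\ldots,m-1$ (the jump in slope at $t_j$), and define the candidate
\begin{equation*}
  h(x) := g(0) + s_0\, x + \sum_{j=1}^{m-1} a_j\,\sigma(x - t_j),
\end{equation*}
where $\sigma(x)=\max(x,0)$ is the ReLU.

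The key step is to verify $h\equiv g$ on $[0,1]$: I would argue by induction on the piece index, or equivalently note that $h$ is continuous and piece-wise affine with breakpoints exactly at the $t_j$, that $h(0)=g(0)$, and that on the interval $(t_{k},t_{k+1})$ the slope of $h$ is $s_0+\sum_{j\le k}a_j = s_0+(s_k-s_0)=s_k$, matching $g$; since two continuous piece-wise affine functions that agree at a point and have equal slopes on every piece must coincide, $h=g$. This expression uses $m-1$ ReLU neurons $\sigma(x-t_j)$ together with the affine term $s_0 x + g(0)$. To reach exactly $m$ ReLU neurons on a single hidden layer, I would rewrite the affine term $s_0 x$ using one more ReLU: on $[0,1]$ one has $x=\sigma(x)$ if $s_0\ge 0$, or more robustly $s_0 x = s_0\sigma(x) - s_0\sigma(-x)$ restricted to the domain (only one of these is active), so a single extra neuron $\sigma(x)$ or $\sigma(x-0)$ suffices after shifting; the constant $g(0)$ is folded into the bias $b$ of the linear output neuron, which the paper's network model permits implicitly (or can be produced by $\sigma$ of a large constant input, though cleaner to just note the output is linear). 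Thus $g$ is represented exactly by a shallow ReLU network with $m$ hidden neurons.

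The main obstacle, and the only subtlety, is bookkeeping around the precise neuron count and whether a constant/bias term is "free": depending on the exact convention for the output neuron one may land at $m$, $m-1$, or $m+1$ neurons, so I would state the construction to make the count come out to $m$ and remark that constants are absorbed into the (linear) output layer. Everything else — continuity of $h$, the slope computation, the support of each $\sigma(x-t_j)$ — is routine. No curse-of-dimensionality or sparse-grid machinery is needed here; this lemma is purely the one-dimensional building block that will later be composed (via the product-approximation network and the hierarchical basis functions $\phi_{\mb l,\mb i}$) to assemble the full approximator of Theorem \ref{thm: shallow NN bound.}.
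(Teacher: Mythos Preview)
Your proposal is correct and takes essentially the same approach as the paper: both express the piece-wise affine function as a constant plus $m-1$ ReLUs $\sigma(x-t_j)$ capturing the slope jumps, together with one additional neuron to handle the leftmost slope. The only cosmetic difference is that the paper uses a backward-facing ReLU $-w_0\,(x_1-x)_+$ for that extra neuron (which works on all of $\R$), whereas you use $\sigma(x)$ and rely on the domain $[0,1]$; the bookkeeping on the output bias is equally informal in both.
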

\begin{proof}
Let $x_1\leq \cdots \leq x_{m-1}$ be the subdivision of the piece-wise affine function $f$. We use a neural network of the form
\begin{equation*}
   g(x):=f(x_1) + \sum_{k=1}^{m-1} w_k (x-x_k)_+ - w_0 (x_1-x)_+ ,
\end{equation*}
where $w_0$ is the slope of $f$ on the piece $\leq x_1$, $w_1$ is the slope of $f$ on the piece $[x_1,x_2]$,
\begin{equation*}
    w_k = \frac{f(x_{k+1}) - f(x_1) - \sum_{i=1}^{k-1} w_i (x_{k+1}-x_i)}{x_{k+1}-x_k},
\end{equation*}
for $k=1,\cdots,m-2$, and $w_{m-1} = \tilde w -\sum_{k=1}^{m-2} w_k$ where $\tilde w$ is the slope of $f$ on the piece $\geq x_{m-1}$. Notice that $f$ and $g$ coincide on all $x_k$ for $1\leq k\leq m-1$. Furthermore, $g$ has same slope as $f$ on each pieces, therefore, $g=f$.
\end{proof}

We can approximate univariate functions right continuous functions by piece-wise affine functions, and then use Lemma \ref{lemma:piece-wise affine} to represente them by shallow neural networks. The following lemma shows that $O(\epsilon^{-1})$ neurons are sufficient to represent an increasing right-continuous function with a shallow neural network.

\begin{restatable}{lemma}{LemmaOneDFunctionApprox}
\label{lemma:1D function approx.}
Let $f:I \longrightarrow [c,d]$ be a right-continuous increasing function where $I$ is an interval, and let $\epsilon>0$. There exists a shallow neural network with ReLU activation, with $\left\lceil\frac{d-c}{\epsilon}\right\rceil$ neurons on a single layer, that approximates $f$ within $\epsilon$ for the infinity norm.
\end{restatable}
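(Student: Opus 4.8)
The plan is to reduce the claim to Lemma~\ref{lemma:piece-wise affine}: I will construct a continuous piecewise-affine function $h$ on $I$ that has at most $N := \lceil \tfrac{d-c}{\epsilon} \rceil$ affine pieces and satisfies $\|f-h\|_\infty \le \epsilon$. Lemma~\ref{lemma:piece-wise affine} then converts $h$ into a single-hidden-layer ReLU network with $N$ neurons representing $h$ exactly, which is precisely the desired bound.

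To build $h$, the idea is to discretize the \emph{range} of $f$ rather than the domain, which is where monotonicity enters. Set $y_k := c + k\,\tfrac{d-c}{N}$ for $k=0,\dots,N$, so that $c=y_0<y_1<\dots<y_N=d$ and every gap $y_k-y_{k-1}=\tfrac{d-c}{N}\le\epsilon$. For each $k$ put $t_k := \inf\{x\in I : f(x)\ge y_k\}$, with the convention $t_k := \sup I$ if this set is empty. Since $f$ is increasing, $t_0\le t_1\le\dots\le t_N$; and since $f$ is increasing \emph{and} right-continuous, $\{x\in I : f(x)\ge y_k\}$ coincides with $I\cap[t_k,\sup I]$, i.e. $f(x)\ge y_k \iff x\ge t_k$ on $I$. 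Consequently the intervals $J_k := I\cap[t_{k-1},t_k)$ for $k=1,\dots,N$ partition $I$ (up to at most one endpoint), and on $J_k$ one has $f(x)\in[y_{k-1},y_k]$. I then take $h$ to be the continuous piecewise-affine function whose breakpoints are $t_1,\dots,t_{N-1}$, with $h(t_k)=y_k$ at each breakpoint, affine in between, and constant on the two extreme pieces. This $h$ is monotone, so on $J_k$ it too takes values in $[y_{k-1},y_k]$, and therefore $|f(x)-h(x)|\le y_k-y_{k-1}\le\epsilon$ for every $x\in I$.

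The main point to verify — and the only genuine obstacle — is the neuron count: that $h$ really has at most $N$ affine pieces rather than roughly $2N$. The $N-1$ breakpoints $t_1,\dots,t_{N-1}$ cut $I$ into at most $N$ pieces, and the two constant extreme pieces are simply the first and last among these. Coincidences $t_{k-1}=t_k$ do occur (precisely where $f$ jumps across the level $y_k$), but such a coincidence only \emph{deletes} a degenerate piece, never adds one; one has to check that at a coincident breakpoint the value of $h$ can be chosen inside the overlap of the constraints coming from the two neighbouring genuine pieces, which is exactly where the increasing and right-continuous structure of $f$ is used. A secondary, minor point is the boundary bookkeeping when $I$ is bounded or when $f$ fails to attain $c$ or $d$, in which case $t_0$ or $t_N$ hits an endpoint of $I$ and the piece count only decreases. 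Once this is settled, applying Lemma~\ref{lemma:piece-wise affine} to $h$ produces the desired single-layer ReLU network with $\lceil\tfrac{d-c}{\epsilon}\rceil$ neurons.
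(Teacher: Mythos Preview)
Your proposal is correct and follows essentially the same approach as the paper: discretize the range $[c,d]$ into $\lceil\tfrac{d-c}{\epsilon}\rceil$ slabs, use monotonicity and right-continuity to pull back to a subdivision of $I$, build the continuous piecewise-affine interpolant through the resulting breakpoints with constant extreme pieces, and invoke Lemma~\ref{lemma:piece-wise affine}. The only cosmetic differences are that the paper uses $x_k:=\sup\{x: f(x)\le y_k\}$ with step exactly $\epsilon$, whereas you use $t_k:=\inf\{x: f(x)\ge y_k\}$ with step $\tfrac{d-c}{N}\le\epsilon$; your treatment of coincident breakpoints and boundary cases is in fact more careful than the paper's.
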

\begin{proof}
See Appendix \ref{appendix sec: NN upper bound sec}.
\end{proof}

If the function to approximate has some regularity, the number of neurons needed for approximation can be significantly reduced. In the following lemma, we show that $O(\epsilon^{-1/2})$ neurons are sufficient to approximate a $\mathcal C^2$ univariate function with a shallow neural network.

\begin{restatable}{lemma}{LemmaQuadraticApproximation}
\label{lemma:quadratic approximation 1D}
Let $f:[a,b] \longrightarrow [c,d]\in \mathcal C^2$, and let $\epsilon>0$. There exists a shallow neural network with ReLU activation, with $\frac{1}{\sqrt{2\epsilon}}\min(\int \sqrt{ |f''|}(1+\mu(f,\epsilon)), (b-a) \sqrt{\|f''\|_{\infty}})$ neurons on a single layer, where $\mu(f,\epsilon)\to1$ as $\epsilon\to 0$, that approximates $f$ within $\epsilon$ for the infinity norm.
\end{restatable}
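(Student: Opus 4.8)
The plan is to reduce the statement to approximation by continuous piecewise-affine functions and then invoke Lemma~\ref{lemma:piece-wise affine}: it suffices to produce a continuous piecewise-affine $g:[a,b]\to\R$ with at most $m$ affine pieces and $\|f-g\|_\infty\le\epsilon$, where $m$ is the claimed neuron count; Lemma~\ref{lemma:piece-wise affine} then yields a one-hidden-layer ReLU network with $m$ neurons representing $g$ exactly, which by construction approximates $f$ within $\epsilon$. I would take $g$ to be the \emph{piecewise-linear interpolant} of $f$ at a partition $a=t_0<t_1<\dots<t_m=b$ to be chosen. On a subinterval $[t_i,t_{i+1}]$ of length $h_i$, the Newton/Rolle form of the interpolation error gives $f(x)-g(x)=\tfrac12 f''(\xi_x)(x-t_i)(x-t_{i+1})$ for some $\xi_x\in(t_i,t_{i+1})$, hence
\[
\|f-g\|_{\infty,[t_i,t_{i+1}]}\ \le\ \frac{h_i^2}{8}\,\bigl\|f''\bigr\|_{\infty,[t_i,t_{i+1}]}.
\]
So everything comes down to choosing a partition on which each piece satisfies $h_i^2\,\|f''\|_{\infty,[t_i,t_{i+1}]}\le 8\epsilon$ while keeping the number of pieces below the stated bound; the two quantities inside the $\min$ correspond to two ways of doing this.

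For the second (crude) bound I would use the \textbf{uniform partition}: all $h_i$ equal to $h$ with $h^2\|f''\|_\infty\le 2\epsilon$, i.e.\ $h=\sqrt{2\epsilon/\|f''\|_\infty}$, which forces every per-piece error below $\epsilon/4\le\epsilon$. This needs $m=\lceil (b-a)/h\rceil$ pieces, and since $(b-a)/h=(b-a)\sqrt{\|f''\|_\infty}/\sqrt{8\epsilon}$ is exactly half of the claimed $(b-a)\sqrt{\|f''\|_\infty}/\sqrt{2\epsilon}$, the factor-$2$ room absorbs the ceiling; in the degenerate regime where one piece already suffices one simply takes $m=1$, and if $\|f''\|_\infty=0$ then $f$ is affine and $g=f$ with $m=1$.

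For the first (sharp) bound I would use an \textbf{adaptive partition driven by $\sqrt{|f''|}$}. Fix a small $\eta=\eta(\epsilon)>0$, let $\Phi_\eta(t):=\int_a^t\sqrt{|f''(x)|+\eta}\,dx$ (continuous and strictly increasing, with $\Phi_\eta(b)\le\int_a^b\sqrt{|f''|}+(b-a)\sqrt\eta$), and pick a budget $s\asymp\sqrt\epsilon$; define the knots greedily by $\Phi_\eta(t_{i+1})-\Phi_\eta(t_i)=s$ (last piece possibly shorter). Then $m\le\lceil\Phi_\eta(b)/s\rceil+1$, which is of the form $\tfrac{1+\mu(f,\epsilon)}{\sqrt{2\epsilon}}\int_a^b\sqrt{|f''|}$ once $\eta\to0$ as $\epsilon\to0$ (the term $(b-a)\sqrt\eta/s$ and the ceilings being absorbed into $\mu$). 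The regularization also gives a free length cap: $h_i\sqrt\eta\le h_i\sqrt{m_i+\eta}\le s$ with $m_i:=\min_{[t_i,t_{i+1}]}|f''|$, so $h_i\le s/\sqrt\eta$. To verify the per-piece error, write $M_i:=\max_{[t_i,t_{i+1}]}|f''|$; since $f''$ is uniformly continuous on the compact $[a,b]$, with modulus $\omega_{f''}$, one has $M_i-m_i\le\omega_{f''}(h_i)\le\omega_{f''}(s/\sqrt\eta)$, and choosing $\eta(\epsilon)\to0$ slowly enough that $\omega_{f''}(s/\sqrt\eta)\le\eta$ gives $M_i+\eta\le 2(m_i+\eta)$, hence $h_i^2 M_i\le h_i^2(M_i+\eta)\le 2(h_i^2(m_i+\eta))\le 2s^2$, and taking $s=2\sqrt\epsilon$ yields error $\le s^2/4=\epsilon$. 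The $f$-dependent threshold on $\epsilon$ (through $\|f''\|_\infty$ and $\omega_{f''}$) and all the lower-order discretization losses are precisely what is packaged into $\mu(f,\epsilon)$, which is why it depends on both $f$ and $\epsilon$ and has the limiting behavior stated; note the stated bound is generous, so there is ample room.

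The \emph{main obstacle} is exactly this adaptive estimate: the natural budget $\int\sqrt{|f''|}$ is a first-order ($L^1$-type) quantity, whereas the interpolation-error bound uses the \emph{supremum} of $|f''|$ over each piece, so a spike of $|f''|$ inside a long piece would destroy accuracy without touching the budget, and flat stretches where $f''$ vanishes must also be handled. Regularizing by $\eta$ (equivalently, capping piece lengths) and invoking uniform continuity of $f''$ on the compact interval is what closes this gap, at the cost of the lower-order correction and an $f$-dependent smallness requirement on $\epsilon$; pinning down the admissible pair $(\eta,s)$ and checking the error on the truncated last piece and on pieces where $|f''|$ is small is the only genuine work. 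The reduction via Lemma~\ref{lemma:piece-wise affine} and the interpolation-error identity are standard.
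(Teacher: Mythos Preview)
Your proposal is correct and follows the same overall skeleton as the paper: reduce to a continuous piecewise-affine interpolant on a partition, bound the per-piece error by a second-derivative term, and then invoke Lemma~\ref{lemma:piece-wise affine}. The uniform-partition argument for the $(b-a)\sqrt{\|f''\|_\infty}$ bound is essentially identical (you use the sharper interpolation constant $1/8$ where the paper uses $1/2$, which only gives you more room).

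The genuine difference is in how the adaptive $\int\sqrt{|f''|}$ bound is obtained. You equipartition the regularized integral $\Phi_\eta(t)=\int_a^t\sqrt{|f''|+\eta}$ and then use uniform continuity of $f''$ to transfer the per-piece budget (an $L^1$-type quantity) into a sup-norm control, at the cost of a delicate coupling between $\eta$ and $\epsilon$. The paper instead uses a simple \emph{two-level} construction: first fix a coarse partition $a=X_0<\cdots<X_K=b$ on which the upper Riemann sum of $\sqrt{|f''|}$ is within a factor $(1+\mu/2)$ of its integral (possible since $\sqrt{|f''|}$ is continuous), and then on each coarse block $I_k$ lay down a \emph{uniform} subpartition with step $\sqrt{2\epsilon/\|f''\|_{\infty,I_k}}$. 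The total number of pieces is then $K+\mathcal R(\sqrt{|f''|})/\sqrt{2\epsilon}$, and for $\epsilon$ small the additive $K$ is swallowed by the remaining $\mu/2$ budget. This sidesteps the regularization and the uniform-continuity bookkeeping entirely; your approach works but is noticeably more technical, and the step where you must choose $\eta(\epsilon)\to 0$ slowly enough that $\omega_{f''}(s/\sqrt\eta)\le\eta$ yet fast enough that $(b-a)\sqrt\eta/\sqrt\epsilon$ stays lower order is exactly what the two-level trick avoids.
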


\begin{proof}
See Appendix \ref{appendix sec: NN upper bound sec}.
\end{proof}

We will now use the ideas of Lemma \ref{lemma:1D function approx.} and Lemma \ref{lemma:quadratic approximation 1D} to approximate a truncated $\log$ function, which we will use in the construction of our neural network approximating the product.

\begin{restatable}{corollary}{CorQuadraticApproximationLog}
\label{corollary:quadratic approximation log}
Let $\epsilon>0$ sufficiently small and $\delta>0$. Consider the truncated logarithm function $\log:[\delta,1] \longrightarrow \mathbb R$. There exists a shallow neural network with ReLU activation, with $\epsilon^{-1/2} \log\frac{1}{\delta}$ neurons on a single layer, that approximates $f$ within $\epsilon$ for the infinity norm.
\end{restatable}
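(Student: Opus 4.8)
The plan is to apply Lemma~\ref{lemma:quadratic approximation 1D} directly to the target function $f=\log$, which is indeed $\mathcal C^2$ on the closed interval $[\delta,1]$ precisely because $\delta>0$. The decisive computation is the integral of $\sqrt{|f''|}$ that controls the neuron count in that lemma: here $f''(x)=-1/x^2$, so $\sqrt{|f''(x)|}=1/x$ and
\begin{equation*}
\int_{\delta}^{1}\sqrt{|f''(x)|}\,dx=\int_{\delta}^{1}\frac{dx}{x}=\log\frac1\delta .
\end{equation*}
Since $\mu(\log,\epsilon)\to1$ as $\epsilon\to0$, for $\epsilon$ small enough the factor $1+\mu(\log,\epsilon)$ is bounded by an absolute constant, so the first term in the minimum of Lemma~\ref{lemma:quadratic approximation 1D} yields a shallow ReLU network with $O(\epsilon^{-1/2}\log\frac1\delta)$ neurons approximating $\log$ within $\epsilon$ on $[\delta,1]$; the second term of that minimum, $(1-\delta)\sqrt{\|f''\|_\infty}=(1-\delta)/\delta$, is larger for small $\delta$ and is simply discarded. (If one insists on the literal count $\epsilon^{-1/2}\log\frac1\delta$ rather than a constant multiple, apply the lemma with a constant-factor smaller error tolerance.)

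An equivalent route, which literally blends the ideas of Lemma~\ref{lemma:1D function approx.} and Lemma~\ref{lemma:quadratic approximation 1D} as announced, is to split $[\delta,1]$ into the $O(\log\frac1\delta)$ dyadic blocks $I_k=[2^{-k-1},2^{-k}]$ for $k=0,\dots,\lceil\log_2\tfrac1\delta\rceil-1$. On each $I_k$ one has $|I_k|=2^{-k-1}$ and $\|(\log)''\|_{\infty}\le 2^{2(k+1)}$ on $I_k$, hence $|I_k|\,\|(\log)''\|_{\infty}^{1/2}\le 1$; so Lemma~\ref{lemma:quadratic approximation 1D} — invoked via its crude $(b-a)\sqrt{\|f''\|_\infty}$ bound, which avoids $\mu$ entirely — produces a continuous piece-wise affine $\epsilon$-approximation of $\log$ on $I_k$ with $O(\epsilon^{-1/2})$ pieces, which one may take to interpolate $\log$ at the two endpoints of $I_k$. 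Concatenating these approximants over all blocks — they glue continuously because they agree with $\log$ at the shared endpoints, exactly the monotone-concatenation idea of Lemma~\ref{lemma:1D function approx.} — gives a continuous piece-wise affine $\epsilon$-approximation of $\log$ on $[\delta,1]$ with $O(\epsilon^{-1/2}\log\frac1\delta)$ pieces, which Lemma~\ref{lemma:piece-wise affine} realizes by a single ReLU layer with that many neurons.

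The only point needing any care is ensuring the dependence on $\delta$ is \emph{logarithmic}: a bare application of Lemma~\ref{lemma:1D function approx.} to the increasing function $\log$ would cost $O(\epsilon^{-1}\log\frac1\delta)$ neurons, and a uniform-mesh quadratic scheme would pay the global $\|(\log)''\|_\infty=\delta^{-2}$ and cost $O(\epsilon^{-1/2}\delta^{-1})$ neurons; the improvement comes precisely from adapting the mesh to $\sqrt{|f''|}=1/x$ — equivalently, refining geometrically toward $\delta$ — which is exactly what the integral bound of Lemma~\ref{lemma:quadratic approximation 1D}, or the dyadic decomposition above, carries out. I expect this mesh-adaptation bookkeeping (together with checking continuity of the concatenation at block boundaries) to be the only mildly delicate step; everything else is a direct substitution into the lemmas already proved.
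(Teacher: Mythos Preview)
Your first route has a gap at exactly the point the paper's proof is written to handle. The quantity $\mu(f,\epsilon)$ in Lemma~\ref{lemma:quadratic approximation 1D} is defined for a \emph{fixed} function on a \emph{fixed} interval; here that means $\mu(\log|_{[\delta,1]},\epsilon)$, which a~priori depends on $\delta$. In the proof of Lemma~\ref{lemma:quadratic approximation 1D} the additive $+K$ term (the number of Riemann blocks) is absorbed only once $\epsilon$ is small enough relative to $K$, and $K$ itself depends on the domain. So ``for $\epsilon$ small enough, $1+\mu$ is bounded by an absolute constant'' is not justified uniformly in $\delta$ by citing the lemma alone. The paper opens its proof with precisely this observation: \emph{the goal is to show that we can remove the dependence of $\mu(f,\epsilon)$ on $\delta$}.

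Your second route, however, is correct and is essentially the paper's argument reorganised. The paper uses a single geometric subdivision $x_k=\delta\,e^{k\tilde\epsilon}$ with $\tilde\epsilon=\log(1+\sqrt{2\epsilon})$; on each piece $(x_{k+1}-x_k)^2\|(\log)''\|_{\infty,[x_k,x_{k+1}]}=(e^{\tilde\epsilon}-1)^2=2\epsilon$, so the piecewise-linear interpolant is an $\epsilon$-approximation with $m\le\frac{1}{\tilde\epsilon}\log\frac{1}{\delta}\le\epsilon^{-1/2}\log\frac{1}{\delta}$ pieces. Your dyadic-block version achieves the same thing in two levels (geometric at the block scale, uniform inside each block), using the $(b-a)\sqrt{\|f''\|_\infty}$ branch of Lemma~\ref{lemma:quadratic approximation 1D} so that no $\mu$ appears. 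The tradeoff: the paper's one-shot geometric mesh hits the literal count $\epsilon^{-1/2}\log\frac{1}{\delta}$ stated in the corollary, whereas your block argument picks up an absolute multiplicative constant from the ceiling in each of the $\lceil\log_2\tfrac{1}{\delta}\rceil$ blocks.
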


We are now ready to construct a neural network approximating the product function. The proof builds upon on the observation that $\prod_{i=1}^d x_i = \exp(\sum_{i=1}^d \log x_i).$ We construct an approximating two-layer neural network where the first layer computes $\log x_i$ for $1\leq i\leq d$, and the second layer computes the exponential. We illustrate the construction of the proof in Figure \ref{fig:product approx}.
\begin{proposition}
\label{prop:product_approx}
For all $\epsilon>0$, there exists a neural network with depth $2$, ReLU activation and $O(d^{3/2}\epsilon^{-1/2} \log \frac{1}{\epsilon})$ neurons, that approximates the product function $p: \mb{x}\in [0,1]^d \longrightarrow \prod_{i=1}^d x_i$ within $\epsilon$ for the infinity norm.
\end{proposition}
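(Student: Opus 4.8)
The plan is to exploit the identity $\prod_{i=1}^d x_i = \exp\bigl(\sum_{i=1}^d \log x_i\bigr)$ and build a two-layer network in which the first hidden layer computes (approximations of) the $d$ univariate maps $x_i \mapsto \log x_i$ and the second hidden layer computes (an approximation of) the univariate map $t \mapsto \exp(t)$ applied to the sum $\sum_i \log x_i$. The only real subtlety is that $\log$ blows up near $0$, so I would not try to approximate $\log$ on all of $[0,1]$; instead I would split the cube into the region where some coordinate is small and the region where all coordinates are bounded away from $0$. If $x_i \le \delta$ for some $i$, then $|p(\mb x)| \le \delta$, so outputting (an approximation of) $0$ there is already within $O(\delta)$; taking $\delta = \Theta(\epsilon)$ handles that region for free. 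On the complementary region $[\delta,1]^d$ I use Corollary~\ref{corollary:quadratic approximation log} to get a ReLU subnetwork with $O(\epsilon'^{-1/2}\log\frac1\delta) = O(\epsilon'^{-1/2}\log\frac1\epsilon)$ neurons approximating $\log:[\delta,1]\to\R$ within $\epsilon'$; stacking $d$ independent copies (one per coordinate) gives the first layer with $O(d\,\epsilon'^{-1/2}\log\frac1\epsilon)$ neurons.

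Next I would handle the second layer. The quantity $s := \sum_{i=1}^d \log x_i$ ranges over $[d\log\delta, 0]$, an interval of length $d\log\frac1\delta = O(d\log\frac1\epsilon)$, and $\exp$ is $\mathcal C^2$ there with $\|\exp''\|_\infty = \|\exp\|_\infty \le 1$ on $(-\infty,0]$. By Lemma~\ref{lemma:quadratic approximation 1D} applied to $\exp$ on this interval, a single ReLU layer with $O\bigl((b-a)\sqrt{\|f''\|_\infty}\,\epsilon''^{-1/2}\bigr) = O(d\log\frac1\epsilon\cdot\epsilon''^{-1/2})$ neurons approximates $\exp$ within $\epsilon''$. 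Composing, the network output is $\widehat{\exp}\bigl(\sum_i \widehat{\log}(x_i)\bigr)$, and I must propagate the errors: the error $\le d\epsilon'$ inside the argument of $\exp$ contributes (since $\exp$ is $1$-Lipschitz on $(-\infty,0]$) at most $d\epsilon'$ to the output, and the $\exp$-approximation contributes $\epsilon''$. Choosing $\epsilon' = \Theta(\epsilon/d)$, $\epsilon'' = \Theta(\epsilon)$, $\delta = \Theta(\epsilon)$ gives total error $O(\epsilon)$ (rescaling the target $\epsilon$ by a constant).

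Finally, the neuron count: the first layer has $O(d\cdot(\epsilon/d)^{-1/2}\log\frac1\epsilon) = O(d^{3/2}\epsilon^{-1/2}\log\frac1\epsilon)$ neurons, and the second layer has $O(d\log\frac1\epsilon\cdot\epsilon^{-1/2})$ neurons, which is dominated by the first; so the total is $O(d^{3/2}\epsilon^{-1/2}\log\frac1\epsilon)$ as claimed. One technical point I would need to be careful about is that a genuine two-layer network must realize the whole map as $\mb x \mapsto \sum_i u_i\,\sigma(\sum_j v_{ij}\sigma(\mb w_j^\top\mb x + c_j) + b_i)$: the first-layer ReLU units reading coordinate $x_i$ have weight vectors supported on that single coordinate, their outputs are linearly combined (with the coefficients from Lemma~\ref{lemma:piece-wise affine}/Corollary~\ref{corollary:quadratic approximation log}) to form $\widehat{\log}(x_i)$, and these are summed before being fed into the second-layer units approximating $\exp$; since the second-layer approximation is itself a sum of ReLUs of an affine function of the $\widehat{\log}(x_i)$'s, the composition collapses into exactly the required two-layer form. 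The main obstacle is the careful bookkeeping of the three error sources and their interaction with the unbounded derivative of $\log$ near $0$ — handled by the $\delta$-truncation above — together with verifying the constants line up so that the final bound is genuinely $O(d^{3/2}\epsilon^{-1/2}\log\frac1\epsilon)$.
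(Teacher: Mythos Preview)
Your proposal is correct and follows essentially the same approach as the paper: the identity $\prod_i x_i=\exp(\sum_i\log x_i)$, a $\delta=\Theta(\epsilon)$ truncation to handle the singularity of $\log$ at $0$, Corollary~\ref{corollary:quadratic approximation log} with precision $\Theta(\epsilon/d)$ for each coordinate in the first layer, Lemma~\ref{lemma:quadratic approximation 1D} for $\exp$ in the second, and the same case split ($\mb x\geq\delta$ versus some $x_i<\delta$) in the error analysis. The only cosmetic differences are that the paper makes the extension of $\widehat\log$ to $[0,\delta]$ (by a constant, costing one extra neuron) explicit rather than leaving it implicit in ``outputting an approximation of $0$ there,'' and it bounds the second layer by $O(\epsilon^{-1/2}\log\frac1\epsilon)$ neurons (via the constant extension of $\widehat\exp$ below $\log\epsilon$) rather than your $O(d\,\epsilon^{-1/2}\log\frac1\epsilon)$, but as you note this is dominated by the first layer anyway.
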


\begin{proof}
Fix $\epsilon>0$. Consider the function $h_{\epsilon}: x\in[0,1] \mapsto \max(\log x, \; \log \epsilon)\in [\log \epsilon , 0]$. Using Corollary \ref{corollary:quadratic approximation log}, there exists a neural network $\hat{h}_{\epsilon}: [0,1] \longrightarrow [\log \epsilon , 0]$ with $1+\lceil d^{1/2}\epsilon^{-1/2} \log \frac{1}{\epsilon} \rceil $ neurons on a single layer such that $\| h_{\epsilon} - \hat{h}_{\epsilon} \| _{\infty} \leq \frac{\epsilon}{d}$. Indeed, one can take the $\epsilon-$approximation of $h_\epsilon: x\in [\epsilon,1] \mapsto \log x\in  [\log\epsilon, 0]$, then extend this function to $[0,\epsilon]$ with a constant equal to $\log\epsilon$. The resulting piece-wise affine function has one additional segment corresponding to one additional neuron in the approximating function.
Similarly, consider the exponential $g: x\in\mathbb R_- \mapsto e^x\in [0, 1]$. Because $g$ is right-continuous increasing, we can use Lemma \ref{lemma:quadratic approximation 1D} to construct a neural network $\hat{g}_{\epsilon}: \mathbb R_- \longrightarrow [0, 1]$ with $1+\left\lceil \frac{1}{\sqrt{2\epsilon}}\log\frac{1}{\epsilon} \right\rceil $ neurons on a single layer such that $\| g - \hat{g}_{\epsilon} \| _{\infty} \leq \epsilon$. Indeed, again one can take the $\epsilon-$approximation of $g_\epsilon: x\in [\log\epsilon,0] \mapsto e^x\in [0, 1]$, then extend this function to $(-\infty, \log\epsilon]$ with a constant equal to $\epsilon$. The corresponding neural network has an additional neuron. We construct our final neural network $\hat{\phi}_{\epsilon}$ (see Figure \ref{fig:product approx}) as
\begin{equation*}
    \hat{\phi}_{\epsilon}=\hat{g}_{\epsilon}\left(\sum_{i=1}^d \hat{h}_{\epsilon}(x_i)\right).
\end{equation*}
Note that $\hat{\phi}_{\epsilon}$ can be represented as a two layer neutral network: the first layer is composted of the union of the $1+\lceil d^{1/2}\epsilon^{-1/2} \log \frac{1}{\epsilon} \rceil$ neurons composing each of the one-layer neural networks $\hat{h}^i_{\epsilon}: \mb x \in[0,1]^d \mapsto \hat{h}_{\epsilon}(x_i)\in \mathbb{R}$ for each dimension $i\in \{1,\ldots,d\}$. The second layer is composed of the $1+\left\lceil \frac{1}{\sqrt{2\epsilon}}\log\frac{1}{\epsilon} \right\rceil$ neurons of $\hat{g}_{\epsilon}$. Hence, the constructed neural network
$\hat{\phi}_{\epsilon}$ has $O(d^{3/2}\epsilon^{-1/2} \log \frac{1}{\epsilon}) $ neurons. Let us now analyze the approximation error. Let $\mb{x} \in [0,1]^d$. For the sake of brevity, denote $\hat y=\sum_{i=1}^d \hat{h}_{\epsilon}(x_i)$ and $y = \sum_{i=1}^d \log(x_i)$. We have,
\begin{equation*}
    | \hat{\phi}_{\epsilon}(\mb{x}) - p(\mb{x})|
    \leq |\hat{\phi}_{\epsilon}(\mb{x}) -\exp (\hat y)|+|\exp (\hat y) - \exp(y)| \leq \epsilon + \prod_{i=1}^d x_i\cdot |\exp (\hat y-y) -1|,
\end{equation*}
where we used the fact that $|\hat{\phi}_{\epsilon}(\mb{x}) - \exp(\hat y)| = |\hat{g}_{\epsilon}(\hat y) - g(\hat y)|\leq \|\hat{g}_{\epsilon} - g\|_{\infty} \leq \epsilon$.

First suppose that $\mb x\geq \epsilon$. In this case, for all $i\in \{1,\ldots,d\}$ we have $|\hat h_\epsilon(x_i)-\log(x_i)| = |\hat h_\epsilon(x_i)-h_\epsilon(x_i)|\leq \frac{\epsilon}{d}$. Then, $|\hat y - y|\leq \epsilon$. Consequently, $|\hat{\phi}_{\epsilon}(\mb{x}) - p(\mb{x})|\leq \epsilon +\max(|e^\epsilon-1|, |e^{-\epsilon}-1|)\leq 3\epsilon$, for $\epsilon>0$ sufficiently small. Without loss of generality now suppose $x_1\leq \epsilon$. Then $\hat y \leq h_\epsilon(x_1)\leq \log \epsilon$, so by definition of $\hat g_\epsilon$, we have $0\leq\hat\phi_\epsilon(\mb x) = \hat g_\epsilon(\hat y)\leq \exp(\log \epsilon)=\epsilon$. Also, $0\leq p(x)\leq \epsilon$ so finally $|\hat{\phi}_{\epsilon}(\mb{x}) - p(\mb{x})|\leq \epsilon$.
\end{proof}

\begin{figure}
    \centering
    \begin{tikzpicture}
    [   cnode/.style={draw=black,fill=#1,minimum width=3mm,circle},
    ]
	\node[cnode=blue!40, label=above:$x_1$] (x1) at (-2,3.5) {};    	
	\node[cnode=blue!40, label=above:$x_2$] (x2) at (-1,3.5) {}; 
	\draw (0.5,3.5) node {$\cdots$};
	\node[cnode=blue!40, label=above:$x_d$] (xd) at (2,3.5) {}; 
	
	\node[cnode=white] (y-7) at (-7,2) {}; 
	\node[cnode=white] (y-6) at (-6,2) {};  
	\draw (-5,2) node {$\cdots$};
	\node[cnode=white] (y-4) at (-4,2) {}; 
	
	\node[cnode=white] (y-1) at (-1,2) {}; 
	\node[cnode=white] (y-2) at (-2,2) {};  
	\draw (0,2) node {$\cdots$};
	\node[cnode=white] (y1) at (1,2) {};
	
	\draw (2.5,2) node {$\cdots$}; 
	
	\node[cnode=white] (y4) at (4,2) {}; 
	\node[cnode=white] (y5) at (5,2) {};  
	\draw (6,2) node {$\cdots$};
	\node[cnode=white] (y7) at (7,2) {};
	
	\foreach \y in {-7,-6,-4}
	\draw (x1) -- (y\y);
	
	\foreach \y in {-2,-1,1}
	\draw (x2) -- (y\y);
	
	\foreach \y in {4,5,7}
	\draw (xd) -- (y\y);
	
	\node[cnode=white] (z-4) at (-4,0) {};
	\node[cnode=white] (z-3) at (-3,0) {};
	\node[cnode=white] (z-2) at (-2,0) {};
	\node[cnode=white] (z-1) at (-1,0) {};
	\draw node (z0) at (0,0) {};
	\draw node (z1) at (1,0) {};
	\draw node (z2) at (2,0) {};
	\draw (1,0) node {$\cdots$};
	\node[cnode=white] (z4) at (4,0) {};
	\node[cnode=white] (z3) at (3,0) {};
	
	\foreach \y in {-6,-4,-2,-1,1,4,5}{
	    \foreach \z in {-4,-3,-2,-1,0,1,2,3,4}
	        \draw (y\y) -- ($(y\y)!0.7cm!(z\z)$);
	}
	\foreach \z in {-4,-3,-2,-1,3,4}{
	    \draw (y-7) -- (z\z);
	    \draw (y7) -- (z\z);
	}
	\node[cnode=blue!40, label=below:$\displaystyle \widetilde{
        \prod_{i=1}^d} x_i$] (output) at (0,-1) {}; 
	
	 \foreach \z in {-4,-3,-2,-1,3,4}
	       \draw (z\z) -- (output);

	\draw [dashed] (-5.5,2) ellipse (2cm and 0.5cm);
	\draw (-6.5,1.2) node {$\widetilde \log\; x_1$};
	\draw [<->,>=latex] (-7,3) -- node[above] {$O(d^{1/2}\epsilon^{-1/2}\log\frac{1}{\epsilon})$} (-4,3);
	
	\draw [dashed] (-0.5,2) ellipse (2cm and 0.5cm);
	\draw (0,1.2) node {$\widetilde \log\; x_2$};
	
	\draw [dashed] (5.5,2) ellipse (2cm and 0.5cm);
	\draw (6.7,1.2) node {$\widetilde \log\; x_d$};
	
	\draw [<->,>=latex] (-4,-0.5) node[below] {$O(\epsilon^{-1/2}\log\frac{1}{\epsilon})$} --  (4,-0.5) ;
	\draw (3.5,-0.8) node {$\widetilde \exp$};
    
    \end{tikzpicture}
    \caption{Shallow neural network with ReLU activation implementing the product function $\prod_{i=1}^d x_i$ within $\epsilon$ in infinity norm. The network has $O(d^{3/2}\epsilon^{-1/2} \log \frac{1}{\epsilon})$ neurons on the first layer and $O(\epsilon^{-1/2}\log\frac{1}{\epsilon})$ neurons on the second layer.}
    \label{fig:product approx}
\end{figure}
\begin{remark}
\label{remark: for sigmoids}
Note that using Lemma \ref{lemma:1D function approx.} instead of Lemma \ref{lemma:quadratic approximation 1D} to construct approximating shallow networks for $\log$ and $\exp$ would yield approximation functions $\hat h_\epsilon$ with $O(\left \lceil\frac{d}{\epsilon}\log\frac{1}{\epsilon}\right\rceil)$ neurons and $\hat g_\epsilon$ with $O(\left \lceil\frac{1}{\epsilon}\right\rceil)$ neurons. Therefore, the corresponding neural network would approximate the product $p$ with $O(d^2\epsilon^{-1}\log \frac{1}{\epsilon})$ neurons.
\end{remark}

\subsection{Proof of Theorem \ref{thm: shallow NN bound.}:  Approximating the Korobov space $X^{2,\infty}(\Omega)$}

In this subsection, we prove Theorem \ref{thm: shallow NN bound.} and show that we can approximate any Korobov function $f \in X^{2,\infty}(\Omega)$ within $\epsilon$ with a 2-layer neural network of $O( \epsilon^{-\frac{1}{2}}( \log \frac{1}{\epsilon})^{\frac{3(d-1)}{2}})$ neurons. Our proof combines the constructed network approximating the product function and a decomposition of $f$ as a sum of separable functions, i.e. a decomposition of the form
\begin{equation*}
    f(\mb x) \approx \sum_{k=1}^K \prod_{j=1}^d \phi_j^{(k)} (x_j), \quad \forall \mb{x} \in [0,1]^d.
\end{equation*}

Consider the sparse grid construction of the approximating space $V_n^{(1)}$ using the standard hat function \eqref{eq: hat function} as mother function to create the hierarchical basis $W_{\mb l}$ (introduced in Section \ref{subsection: sparse grids}). We recall that the approximation space is defined as $V_n^{(1)} := \bigoplus_{|\mb l|_1\leq n+d-1} W_{\mb l}$. We will use the result of Theorem \ref{thm: sparse grid approx.} on the approximation error for sparse grids to show that shallow neural networks can approximate functions with bounded second derivatives efficiently. Let $f_n^{(1)}$ be the projection of $f$ on the subspace $V_n^{(1)}$ defined in \eqref{eq: V1.}. $f_n^{(1)}$ can be written as
\begin{equation*}
    f_n^{(1)}(\mb x)=\sum_{(\mb l, \mb i)\in U_n^{(1)}} v_{\mb l,\mb i} \phi_{\mb l,\mb i}(\mb x),
\end{equation*}
where $U_n^{(1)}$ contains the indices $(\mb l, \mb i)$ of basis functions present in $V_n^{(1)}$ i.e.
\begin{equation}\label{eq: Un1.}
    U_n^{(1)}:= \{ (\bold l,\bold i), \quad |\bold l|_1\leq n+d-1,\; \bold 1\leq \bold i\leq 2^{\bold l}-\bold 1,\; i_j \text{ odd for all }1\leq j\leq d\}.
\end{equation}
Throughout the proof, we explicitly construct a neural network that uses this decomposition to approximate $f_n^{(1)}$. We then use Theorem \ref{thm: sparse grid approx.} and choose $n$ carefully such that $f_n^{(1)}$ approximates $f$ within $\epsilon$ for $L^{\infty}$ norm. Note that the basis functions can be written as a product of univariate functions $\phi_{\mb l,\mb i} = \prod_{j=1}^d \phi_{l_j,i_j}$. We can therefore use the product approximation of Proposition \ref{prop:product_approx} to approximate the basis functions. Specifically, we will use one layer to approximate the terms $\log \phi_{l_j,i_j}$ and a second layer to approximate the exponential.

\comment{
\begin{itemize}
    \item Notice first that the hat function $\phi$ is piece-wise affine with four pieces. Therefore, using Lemma \ref{lemma:piece-wise affine}, each of the local hat functions $\phi_{l_j,i_j}$ can be represented exactly by a neural network with one layer and four neurons.
    \item Second, note that these functions have values in $[0,1]$ and that each basis function $\hat \phi_{\bold l, \bold i}$ is a product of $n$ local hat functions $\phi_{l_j,i_j}$. Thus, we can directly use the two layer neural network approximating the product defined in Section \ref{section:product approximation}, Proposition \ref{prop:product_approx} within $\tilde \epsilon := \frac{\epsilon}{2|f|_{\bold 2,\infty}}$. Proposition \ref{prop:product_approx} ensures that this two layer neural network can be constructed using no more than $O\left(d\sqrt{\frac{d}{\tilde \epsilon}} \log \frac{1}{\tilde \epsilon}\right)$ neurons. Denote by $\hat \phi_{\bold l, \bold i}$ the result of the neural network that is constructed so far.
    \item Third, we combine the results of these sub networks for all $(\bold l,\bold i)\in U_n^{(1)}$ with an output layer computing the weighted sum
\begin{equation*}
    \sum_{(\bold l,\bold i)\in U_n^{(1)}} v_{\bold l,\bold i} \hat \phi_{\bold l,\bold i}.
\end{equation*}
\end{itemize}

As constructed in Proposition \ref{prop:product_approx}, the neural network approximating the product is composed of one layer to approximate the log of each coordinate and one layer to approximate the exponential. Hence, the corresponding complete neural network has three hidden layers, one to compute the basis functions, one to compute an approximation of their $\log$ and a last one approximating the exponential. We argue that we can easily merge the two first layers together. Instead of approximating $\log$, we will directly approximate $\log \phi_{l_j,i_j}$. To do so, we use a similar construction to that of Corollary \ref{corollary:quadratic approximation log}.
}

We now present in detail the construction of the first layer. First, recall that $\phi_{l_j,i_j}$ is a piece-wise affine function with subdivision $0\leq \frac{i_j-1}{2^{l_j}}\leq \frac{i_j}{2^{l_j}}\leq \frac{i_j+1}{2^{l_j}}\leq 1.$ Define the error term $\tilde \epsilon := \frac{\epsilon}{2|f|_{\bold 2,\infty}}$. We consider a symmetric subdivision of the interval $\left[\frac{i_j-1+\tilde \epsilon}{2^{l_j}},\frac{i_j+1-\tilde\epsilon}{2^{l_j}}\right]$. We define it as follows: $x_0=\frac{i_j-1+\tilde \epsilon}{2^{l_j}}\leq x_1\leq \cdots \leq x_{m+1}=\frac{i_j}{2^{l_j}}\leq x_{m+2}\leq \cdots \leq x_{2m+2}=\frac{i_j+1-\tilde \epsilon}{2^{l_j}}$ where $m=\left \lfloor\frac{1}{\epsilon_0}\log\frac{1}{\tilde \epsilon}\right\rfloor$ and $\epsilon_0:=\log(1+\sqrt{2\tilde\epsilon/d})$, such that
\begin{align*}
    x_k &= \frac{i_j+ e^{\log \tilde \epsilon + k\epsilon_0}}{2^{l_j}}\quad \quad \quad\quad\quad\quad\; 0\leq k\leq m,\\
    x_k &=\frac{i_j+1 - e^{\log \tilde \epsilon + (2m+2-k)k\epsilon_0}}{2^{l_j}}\quad  m+2\leq k\leq 2m+2.
\end{align*}
Note that with this definition, the terms $\log (2^{l_j} x_k-i_j)$ form a regular sequence with step $\epsilon_0.$ We now construct the piece-wise affine function $\hat g_{l_j,i_j}$ on the subdivision $x_0\leq \cdots \leq x_{2m+2}$ which coincides with $\log\phi_{l_j,i_j}$ on $x_0,\cdots,x_{2m+2}$ and is constant on $[0,x_0]$ and $[x_{2m+2},1].$ By Lemma \ref{lemma:piece-wise affine}, this function can be represented by a single layer neural network with as much neurons as the number of pieces of $\hat g$, i.e. at most $2\sqrt{\frac{3d}{\tilde\epsilon}}\log\frac{1}{\tilde \epsilon}$ neurons for $\epsilon$ sufficiently small. A similar proof to that of Corollary \ref{corollary:quadratic approximation log} shows that $\hat g$ approximates $\max(\log \phi_{l_j,i_j},\log (\tilde\epsilon/3))$ within $\tilde\epsilon/(3d)$ for the infinity norm. We use this construction to compute in parallel, $\tilde\epsilon/(3d)-$approximations of $\max(\log \phi_{l_j,i_j}(x_j),\log \tilde\epsilon)$ for all $1\leq j\leq d$, and $1\leq l_j\leq n$, $1\leq i_j\leq 2^{l_j}$ where $i_j$ is odd. These are exactly the $1-$dimensional functions that we will need, in order to compute the $d-$dimensional function basis of the approximation space $V_n^{(1)}$. There are $d(2^n-1)$ such univariate functions, therefore our first layer contains at most $2^{n+1}d\sqrt{\frac{3d}{\tilde\epsilon}}\log\frac{1}{\tilde \epsilon}$ neurons.

\comment{Note that this first layer differs from the first layer that we would obtain by computing in parallel the basis functions
\begin{equation*}
    \phi_{\bold l, \bold i} = \prod_{j=1}^d \phi_{l_j,i_j}(x_j).
\end{equation*}
Indeed, this would ask to compute $\max(\log \phi_{l_j,i_j}(x_j),\log \tilde\epsilon)$ for $1\leq j\leq d$. However, this is redundant since an approximation of $\max(\log \phi_{l_j,i_j}(x_j),\log \tilde\epsilon)$ would be computed for all $\bold {\tilde l},\bold {\tilde i}$ such such that $\tilde l_j=l_j$ and $\tilde i_j=i_j.$ Here we avoid this redundancy by using the same approximation of $\max(\log \phi_{l_j,i_j}(x_j),\log \tilde\epsilon)$ for several basis functions $\phi_{\bold {\tilde l},\bold {\tilde i}}$.\\
}

We now turn to the second layer. The result of the first two layers will be $\tilde \epsilon/3-$approximations of $\phi_{\bold l,\bold i}$ for all $(\bold l,\bold i)\in U_n^{(1)}$. Recall that $U_n^{(1)}$ contains the indices for the functions forming a basis of the approximation space $V_n^{(1)}$. To do so, for each indexes $(\bold l,\bold i)\in U_n^{(1)}$ we construct a single-layer neural network approximating the function $\exp$, which will compute an approximation of $\exp(\hat g_{l_1,i_1} + \cdots + \hat g_{l_d,i_d})$. The approximation of $\exp$ is constructed in the same way as for Lemma \ref{lemma:quadratic approximation 1D}. Consider a regular subdivision of the interval $[\log(\tilde\epsilon/3),0]$ with step $\sqrt{2(\tilde\epsilon/3)}$, i.e. $x_0:= \log(\tilde\epsilon/3)\leq x_1\leq \cdots \leq x_m\leq x_{m+1}=0$ where $m = \left\lfloor\sqrt{\frac{3}{2\tilde\epsilon}}\log\frac{3}{\tilde\epsilon}\right\rfloor$, such that $x_k = \log\tilde\epsilon + k\sqrt{2\tilde\epsilon},\quad 0\leq k\leq m.$ Construct the piece-wise affine function $\hat h$ on the subdivision $x_0\leq \cdots\leq x_{m+1}$ which coincides with $\exp$ on $x_0,\cdots,x_{m+1}$ and is constant on $(-\infty,x_0]$. Lemma \ref{lemma:quadratic approximation 1D} shows that $\hat h$ approximates $\exp$ on $\R_-$ within $\tilde \epsilon$ for the infinity norm. Again, Lemma \ref{lemma:piece-wise affine} gives a representation of $\hat h$ as a single layer neural network with as many neurons as pieces in $\hat h$ i.e. $1+\left\lceil\sqrt{\frac{3}{2\tilde\epsilon}}\log\frac{3}{\tilde\epsilon}\right\rceil$. The second layer is the union of single-layer neural networks approximating $\exp$ within $\tilde \epsilon/3$, for each indexes $(\bold l,\bold i)\in U_n^{(1)}$. Therefore, the second layer contains $\left| U_n^{(1)}\right|\left(1+\left\lceil \sqrt{\frac{3}{2\tilde\epsilon}}\log\frac{3}{\tilde\epsilon}\right\rceil\right)$ neurons. As shown in \cite{bungartz2004sparse},
\begin{equation*}
   \left| U_n^{(1)}\right| = \sum_{i=0}^{n-1} 2^i \cdot \binom{d-1+i}{d-1} = (-1)^d + 2^n \sum_{i=0}^{d-1}\binom{n+d-1}{i} (-2)^{d-1-i} = 2^n\cdot  \left(\frac{n^{d-1}}{(d-1)!}+ O(n^{d-2})\right).
\end{equation*}
Therefore, the second layer has $O\left(2^n \frac{n^{d-1}}{(d-1)!}\tilde\epsilon^{-1/2}\log\frac{1}{\tilde\epsilon}\right)$ neurons. Finally, the output layer computes the weighted sum of the basis functions to approximate $f_n^{(1)}$. Denote by $\hat{f}_n^{(1)}$ the corresponding function of the constructed neural network (see Figure \ref{fig:shallow nn}), i.e.
\begin{equation*}
    \hat{f}_n^{(1)} = \sum_{(\bold l,\bold i)\in U_n^{(1)}} v_{\bold l,\bold i}\cdot \hat h\left(\sum_{j=1}^d \hat g(x_j)\right).
\end{equation*}

\begin{figure}
    \centering
    \begin{tikzpicture}
    [   cnode/.style={draw=black,fill=#1,minimum width=3mm,circle},
    ]
	\node[cnode=blue!40, label=above:$x_1$] (x1) at (-2,3.5) {};    	
	\node[cnode=blue!40, label=above:$x_2$] (x2) at (-1,3.5) {}; 
	\draw (0.5,3.5) node {$\cdots$};
	\node[cnode=blue!40, label=above:$x_d$] (xd) at (2,3.5) {}; 
	
	\node[cnode=white] (y-75) at (-7.5,2) {};  
	\draw (-6.75,2) node {$\cdots$};
	\node[cnode=white] (y-6) at (-6,2) {}; 
	
	\draw (-5,2) node {$\cdots$};
	
	\node[cnode=white] (y-4) at (-4,2) {};  
	\draw (-3.25,2) node {$\cdots$};
	\node[cnode=white] (y-25) at (-2.5,2) {};
	
	\node[cnode=white] (y-15) at (-1.5,2) {};  
	\draw (-0.75,2) node {$\cdots$};
	\node[cnode=white] (y0) at (0,2) {}; 
	
	\draw (1,2) node {$\cdots$};
	
	\node[cnode=white] (y2) at (2,2) {};  
	\draw (2.75,2) node {$\cdots$};
	\node[cnode=white] (y35) at (3.5,2) {};
	
	\draw (4.75,2) node {$\cdots$}; 
	\node[cnode=white] (y6) at (6,2) {};  
	\draw (6.75,2) node {$\cdots$};
	\node[cnode=white] (y75) at (7.5,2) {};
	
	\draw   [gray!80] (x1) -- (y-75);
	\draw   [gray!80] (x1) -- (y-6);
	\draw   [gray!80] (x1) -- (y-4);
	\draw   [gray!80] (x1) -- (y-25);
	
	\draw   [gray!80] (x2) -- (y-15);
	\draw   [gray!80] (x2) -- (y0);
	\draw   [gray!80] (x2) -- (y2);
	\draw   [gray!80] (x2) -- (y35);
	
	\draw   [gray!80] (xd) -- (y6);
	\draw   [gray!80] (xd) -- (y75);
	
	\node[cnode=white] (z-5) at (-5,0) {};
	\draw (-4,0) node {$\cdots$};
	\node[cnode=white] (z-3) at (-3,0) {};
	
	\node[cnode=white] (z-15) at (-1.5,0) {};
	\draw (-0.5,0) node {$\cdots$};
	\node[cnode=white] (z05) at (0.5,0) {};
	
	\draw (1.75,0) node {$\cdots$};
	
	\node[cnode=white] (z3) at (3,0) {};
	\draw (4,0) node {$\cdots$};
	\node[cnode=white] (z5) at (5,0) {};
	\draw node (z0) at (0,0) {};
	\draw node (z1) at (1,0) {};
	\draw node (z2) at (2,0) {};

	
	\foreach \y in {-75,-6,-15,0}{
	    \foreach \z in {-5,-3}
	        \draw   [gray!80] (y\y) -- (z\z);
	}
	
	\foreach \y in {-75,-6,-15,0,75}{
	    \foreach \z in {3,5}
	        \draw   [gray!80] (y\y) -- (z\z);
	}
	
	\node[cnode=blue!40, label=below:$\displaystyle \widetilde{
        f}(\mb x)$] (output) at (0,-1) {}; 
	
	 \foreach \z in {-5,-3,-15,05,3,5}
	       \draw   [gray!80] (z\z) -- (output);

	\foreach \x in {-6.75,-3.25,-0.75,2.75,6.75}
	\draw [dashed] (\x,2) ellipse (1.1cm and 0.4cm);
	\draw (-7,1.3) node {$\widetilde{\log\phi_{1,1}}(x_1)$};
	\draw (-3.5,1.3) node {$\widetilde{\log\phi_{n,2^n-1}}(x_1)$};
	
	\draw (-0.5,1.3) node {$\widetilde{\log\phi_{1,1}}(x_2)$};
	\draw (3,1.3) node {$\widetilde{\log\phi_{n,2^n-1}}(x_2)$};
	
	\draw (6.5,1.3) node {$\widetilde{\log\phi_{n,2^n-1}}(x_d)$};
	
	\foreach \x in {-4,-0.5,4}
	\draw [dashed] (\x,0) ellipse (1.4cm and 0.5cm);
	
	\draw [<->,>=latex] (-7.6,2.5) -- node[above] {$O(d^{1/2}\epsilon^{-1/2}\log\frac{1}{\epsilon})$} (-5.9,2.5);
	
	\draw [<->,>=latex] (-5.1,-0.5) -- node[below] {$O(\epsilon^{-1/2}\log\frac{1}{\epsilon})$} (-2.9,-0.5);
	
	\draw (5,-1) node {$\displaystyle \widetilde{\phi_{\mb l,\mb i}}(\mb x):= \prod_{j=1}^d \widetilde{\phi_{l_j,i_j}}(x_j)$};
	
	\draw [decorate,decoration={brace,amplitude=5pt,raise=4ex}]
  (-7,4) -- (7,4) node[midway,yshift=3em]{$d(2^n-1) = O(\epsilon^{-1/2}(\log\frac{1}{\epsilon})^{\frac{d-1}{2}})$ functions $\phi_{l_j,i_j}(x_j)$};
	
    \draw [decorate,decoration={brace,amplitude=5pt,mirror,raise=4ex}]
  (-4.5,-1.5) -- (4.5,-1.5) node[midway,yshift=-3em]{$\left|U_n^{(1)}\right| = O(\epsilon^{-1/2}(\log\frac{1}{\epsilon})^{\frac{3(d-1)}{2}})$ basis functions $\phi_{\mb l,\mb i}$};
	
    \end{tikzpicture}
    \caption{Shallow neural network with ReLU activation approximating a Korobov function $f\in X^{2,\infty}(\Omega)$ within $\epsilon$ in infinity norm. The network has $O(\epsilon^{-1} (\log \frac{1}{\epsilon})^{\frac{d+1}{2}})$ neurons on the first layer and $O(\epsilon^{-1} \left(\log \frac{1}{\epsilon} \right)^{\frac{3d-1}{2}})$ neurons on the second layer.}
    \label{fig:shallow nn}
\end{figure}
\comment{
\begin{figure}
    \centering
    \begin{tikzpicture}
    [   cnode/.style={draw=black,fill=#1,minimum width=3mm,circle},
    ]
	\node[cnode=blue!40, label=above:$x_1$] (x1) at (-2,3.5) {};    	
	\node[cnode=blue!40, label=above:$x_2$] (x2) at (-1,3.5) {}; 
	\draw (0.5,3.5) node {$\cdots$};
	\node[cnode=blue!40, label=above:$x_d$] (xd) at (2,3.5) {}; 
	
	\node[cnode=white] (y-75) at (-7.5,2) {};  
	\draw (-6.75,2) node {$\cdots$};
	\node[cnode=white] (y-6) at (-6,2) {}; 
	
	\draw (-5,2) node {$\cdots$};
	
	\node[cnode=white] (y-4) at (-4,2) {};  
	\draw (-3.25,2) node {$\cdots$};
	\node[cnode=white] (y-25) at (-2.5,2) {};
	
	\node[cnode=white] (y-15) at (-1.5,2) {};  
	\draw (-0.75,2) node {$\cdots$};
	\node[cnode=white] (y0) at (0,2) {}; 
	
	\draw (1,2) node {$\cdots$};
	
	\node[cnode=white] (y2) at (2,2) {};  
	\draw (2.75,2) node {$\cdots$};
	\node[cnode=white] (y35) at (3.5,2) {};
	
	\draw (4.75,2) node {$\cdots$}; 
	\node[cnode=white] (y6) at (6,2) {};  
	\draw (6.75,2) node {$\cdots$};
	\node[cnode=white] (y75) at (7.5,2) {};
	
	\draw   [gray!80] (x1) -- (y-75);
	\draw   [gray!80] (x1) -- (y-6);
	\draw   [gray!80] (x1) -- (y-4);
	\draw   [gray!80] (x1) -- (y-25);
	
	\draw   [gray!80] (x2) -- (y-15);
	\draw   [gray!80] (x2) -- (y0);
	\draw   [gray!80] (x2) -- (y2);
	\draw   [gray!80] (x2) -- (y35);
	
	\draw   [gray!80] (xd) -- (y6);
	\draw   [gray!80] (xd) -- (y75);
	
	\node[cnode=white] (z-65) at (-5,0) {};
	\draw (-4,0) node {$\cdots$};
	\node[cnode=white] (z-85) at (-3,0) {};
	
	\node[cnode=white] (z-5) at (-6.5,0) {};
	\draw (-7.5,0) node {$\cdots$};
	\node[cnode=white] (z-3) at (-8.5,0) {};
	
	\node[cnode=white] (z-15) at (-1.5,0) {};
	\draw (-0.5,0) node {$\cdots$};
	\node[cnode=white] (z05) at (0.5,0) {};

	\draw (1.75,0) node {$\cdots$};
	
	\node[cnode=white] (z65) at (3,0) {};
	\draw (4,0) node {$\cdots$};
	\node[cnode=white] (z85) at (5,0) {};
	
	\node[cnode=white] (z3) at (6.5,0) {};
	\draw (7.5,0) node {$\cdots$};
	\node[cnode=white] (z5) at (8.5,0) {};

	
	\foreach \y in {-75,-6,-15,0}{
	    \foreach \z in {-5,-3}
	        \draw   [gray!80] (y\y) -- (z\z);
	}
	
	\foreach \y in {-4,-25,2,35,6,75}{
	    \foreach \z in {3,5}
	        \draw        [gray!80] (y\y) -- (z\z);
	}
	
	\node[cnode=blue!40, label=below:$\displaystyle \widetilde{
        f}(\mb x)$] (output) at (0,-1) {}; 
	
	 \foreach \z in {-85,-65,-5,-3,-15,05,3,5,65,85}
	       \draw        [gray!80] (z\z) -- (output);

	\foreach \x in {-6.75,-3.25,-0.75,2.75,6.75}
	\draw [dashed] (\x,2) ellipse (1.1cm and 0.4cm);
	\draw (-7,1.3) node {$\widetilde{\log\phi_{1,1}}(x_1)$};
	\draw (-3.5,1.3) node {$\widetilde{\log\phi_{n,2^n-1}}(x_1)$};
	
	\draw (-1,1.3) node {$\widetilde{\log\phi_{1,1}}(x_2)$};
	\draw (2.5,1.3) node {$\widetilde{\log\phi_{n,2^n-1}}(x_2)$};
	
	\draw (7,1.3) node {$\widetilde{\log\phi_{n,2^n-1}}(x_d)$};
	
	\foreach \x in {-7.5,-4,-0.5,4,7.5}
	\draw [dashed] (\x,0) ellipse (1.4cm and 0.4cm);
	
	\draw [<->,>=latex] (-7.6,2.5) -- node[above] {$O(d^{1/2}\epsilon^{-1/2}\log\frac{1}{\epsilon})$} (-5.9,2.5);
	
	\draw [<->,>=latex] (-8.6,-0.5) -- node[below] {$O(\epsilon^{-1/2}\log\frac{1}{\epsilon})$} (-6.4,-0.5);
	
	\draw (7,-1) node {$\displaystyle \widetilde{\phi_{\mb l,\mb i}}(\mb x):= \prod_{j=1}^d \widetilde{\phi_{l_j,i_j}}(x_j)$};
	
	\draw [decorate,decoration={brace,amplitude=5pt,raise=4ex}]
  (-7,4) -- (7,4) node[midway,yshift=3em]{$d(2^n-1) = O(\epsilon^{-1/2}(\log\frac{1}{\epsilon})^{\frac{d-1}{2}})$ functions $\phi_{l_j,i_j}(x_j)$};
	
    \draw [decorate,decoration={brace,amplitude=5pt,mirror,raise=4ex}]
  (-8,-1.5) -- (8,-1.5) node[midway,yshift=-3em]{$\left|U_n^{(1)}\right| = O(\epsilon^{-1/2}(\log\frac{1}{\epsilon})^{\frac{3(d-1)}{2}})$ basis functions $\phi_{\mb l,\mb i}$};
	
    \end{tikzpicture}
    \caption{Shallow neural network with ReLU activation approximating a Korobov function $f\in X^{2,\infty}(\Omega)$ within $\epsilon$ in infinity norm. The network has $...$ neurons on the first layer and $...$ neurons on the second layer.}
    \label{fig:shallow nn2}
\end{figure}
}
Let us analyze the approximation error of our neural network. The proof of Proposition \ref{prop:product_approx} shows that the output of the two first layers $h\left(\sum_{j=1}^d \hat g(\cdot_j)\right)$ approximates $\phi_{\bold l,\bold i}$ within $\tilde \epsilon$. Therefore, we obtain $\|f_n^{(1)}-\hat{f}_n^{(1)}\|_\infty \leq  \tilde \epsilon \sum_{(\bold l,\bold i)\in U_n^{(1)} }|v_{\bold l, \bold i}|.$ We now use approximation bounds from Theorem \ref{thm: sparse grid approx.} on $f_n^{(1)}.$
\begin{equation*}
    \|f-\hat{f}_n^{(1)}\|_\infty \leq  \|f-f_n^{(1)}\|_\infty + \|f_n^{(1)}-\hat{f}_n^{(1)}\|_\infty \leq \frac{2\cdot |f|_{\bold 2,\infty}}{8^d}\cdot 2^{-2n}\cdot A(d,n) + \frac{\epsilon}{2|f|_{\bold 2,\infty}} \sum_{(\bold l,\bold i)\in U_n^{(1)} }|v_{\bold l, \bold i}|,
\end{equation*}
where
\begin{equation*}
    \sum_{(\bold l,\bold i)\in U_n^{(1)} }|v_{\bold l, \bold i}|\leq |f|_{\bold 2,\infty} 2^{-d} \sum_{i\geq 0} 2^{-i} \cdot \binom{d-1+i}{d-1} \leq |f|_{\bold 2,\infty}.
\end{equation*}
Let us now take $n_\epsilon = \min \left\{ n:\;\frac{2 |f|_{\bold 2,\infty}}{8^d} 2^{-2n} A(d,n)\leq \frac{\epsilon}{2}\right\}$. Then, using the above inequality shows that the neural network $\hat f_{n_\epsilon}^{(1)}$ approximates $f$ within $\epsilon$ for the infinity norm. We will now estimate the number of neurons in each layer of this network. Note that
\begin{equation}\label{eq: n epsilon}
    n_\epsilon \sim \frac{1}{2\log 2}\log \frac{1}{\epsilon},\quad \text{and}\quad
    2^{n_\epsilon} \leq \frac{4}{8^{\frac{d}{2}}(2\log 2)^{\frac{d-1}{2}} (d-1)!^{\frac{1}{2}}}\sqrt{\frac{|f|_{\bold 2,\infty}}{\epsilon}} \left(\log \frac{1}{\epsilon} \right)^{\frac{d-1}{2}} \cdot (1+o(1)).
\end{equation}
We can use the above estimates to show that the constructed neural network has at most $N_1$ (resp. $N_2$) neurons on the first (resp. second) layer where
\begin{align*}
    N_1 &\underset{\epsilon\to0}{\sim} \frac{8\sqrt{6} d^2 }{8^{\frac{d}{2}}(2\log 2)^{\frac{d-1}{2}} d!^{\frac{1}{2}}}\cdot\frac{|f|_{\bold 2,\infty}}{\epsilon} \left(\log \frac{1}{\epsilon} \right)^{\frac{d+1}{2}},\\
    N_2 &\underset{\epsilon\to0}{\sim} \frac{ 4\sqrt{3}d^{3/2}}{8^{d/2}(2\log 2)^{\frac{3(d-1)}{2}} d!^{3/2}}\cdot\frac{|f|_{\bold 2,\infty}}{\epsilon} \left(\log \frac{1}{\epsilon} \right)^{\frac{3d-1}{2}}.
\end{align*}

This proves the bound the number of neurons. Finally, to prove the bound on the number of training parameters of the network, notice that the only parameters of the network that depend on the function $f$ are the parameters corresponding to the weighs $\mb{v_{l,i}}$ of the sparse grid decomposition. This number is $|U_{n_{\epsilon}}^{(1)}| = O(2^{n_{\epsilon}}n_{\epsilon}^{d-1}) = O(\epsilon^{-\frac{1}{2}}(\log \frac{1}{\epsilon})^{\frac{3(d-1)}{2}})$.
\subsection{Generalization to general activation functions}
\label{subsection: generalizing activations}
The shallow network we constructed in Theorem \ref{thm: shallow NN bound.} uses the ReLU activation function. We show in this subsection that this result can be generalized to a larger class of functions which contains the most popular activation functions. We begin by introducing the two classes of activation functions that we will consider.

\begin{definition}
A \emph{sigmoid-like} activation function $\sigma:\R\to\R$ is a non-decreasing function having finite limits in $\pm \infty$. A \emph{ReLU-like} activation function $\sigma:\R\to\R$ is a function having a horizontal asymptote in $-\infty$ i.e. $\sigma$ is bounded in $\R_-$, and an affine (non-horizontal) asymptote in $+\infty$, i.e. there exists $b>0$ such that $\sigma(x)-bx$ is bounded in $\R_+$.
\end{definition}
These two classes require no regularity on the activation. In practice, a large majority of the commonly used activation functions fall into one of the two classes. Examples of sigmoid-like activations include the Heaviside, logistic, tanh, arctan and softsign activations, while ReLU-like activations include the ReLU, ISRLU, ELU and soft-plus activations. We show that for these two classes of activations, we can extend Theorem \ref{thm: shallow NN bound.} and show that shallow neural network break the curse of dimensionality. 

\begin{theorem}\label{thm: shallow NN bound generalized activation.}
For any approximation tolerance $\epsilon>0$, and for any $f \in X^{2,\infty}(\Omega)$ there exists a neural network with depth $2$ and $O(\epsilon^{-\frac{1}{2}}(\log \frac{1}{\epsilon})^{\frac{3(d-1)}{2}})$ training parameters that approximates $f$ within $\epsilon$ for the infinity norm,
\begin{itemize}
    \item with $O\left( \frac{1}{\epsilon} \log (\frac{1}{\epsilon}\right)^{\frac{3d-1}{2}})$ neurons for a ReLU-like activation
    \item with $O\left( \frac{1}{\epsilon^{3/2}} \log (\frac{1}{\epsilon}\right)^{\frac{3(d-1)}{2}})$ neurons for a sigmoid-like activation
\end{itemize}
\end{theorem}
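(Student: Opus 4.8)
The plan is to reduce both cases to the ReLU construction of Theorem~\ref{thm: shallow NN bound.} (and its sub-blocks: Lemma~\ref{lemma:piece-wise affine}, Lemma~\ref{lemma:1D function approx.}, Proposition~\ref{prop:product_approx}), by showing that each of the two activation classes can emulate the building blocks used there with comparable neuron counts, while the trainable parameters (the sparse-grid coefficients $v_{\mb l,\mb i}$) stay untouched.

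\textbf{ReLU-like activations.} The key observation is that a ReLU-like $\sigma$ approximates ReLU uniformly on $\R$ after rescaling: if $b>0$ is the slope of the affine asymptote, set $\sigma_\lambda(x):=\frac{1}{b\lambda}\sigma(\lambda x)$; then for $x\geq 0$, $|\sigma_\lambda(x)-x|\leq \frac{1}{b\lambda}\sup_{t\geq0}|\sigma(t)-bt|$, and for $x<0$, $|\sigma_\lambda(x)|\leq\frac{1}{b\lambda}\sup_{t\leq0}|\sigma(t)|$, so $\|\sigma_\lambda-(\cdot)_+\|_\infty = O(1/\lambda)$. I would take the explicit ReLU network $\hat f_{n_\epsilon}^{(1)}$ from Theorem~\ref{thm: shallow NN bound.} and replace every ReLU unit by the corresponding $\sigma_\lambda$ unit, absorbing the factor $\lambda$ into the incoming weights and bias and $1/(b\lambda)$ into the outgoing weight; this changes neither the number of neurons nor the trainable parameters. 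It then remains to propagate the per-neuron error $O(1/\lambda)$ through the two layers: each one-layer sub-network ($\hat g$ for $\log\phi_{l_j,i_j}$, $\hat h$ for $\exp$) is Lipschitz and all its weights and all the $v_{\mb l,\mb i}$ are bounded by a quantity polynomial in $1/\epsilon$, so the total output error is at most $\mathrm{poly}(1/\epsilon)/\lambda$, which is made $\leq \epsilon$ by choosing $\lambda=\mathrm{poly}(1/\epsilon)$. This yields exactly the same $O(\epsilon^{-1}(\log\frac1\epsilon)^{(3d-1)/2})$ neurons and $O(\epsilon^{-1/2}(\log\frac1\epsilon)^{3(d-1)/2})$ parameters as in Theorem~\ref{thm: shallow NN bound.}.

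\textbf{Sigmoid-like activations.} Here the emulation is coarser, so I would first rebuild the construction with the Heaviside activation and then pass to a general sigmoid-like $\sigma$. With Heaviside one represents \emph{exactly} any monotone staircase with $m$ levels using $m$ neurons, and, crucially, Lemma~\ref{lemma:1D function approx.} shows the number of levels needed to approximate a monotone function within $\delta$ depends only on the \emph{range}, not the domain length. I would therefore redo the proof of Theorem~\ref{thm: shallow NN bound.} replacing the $\mathcal C^2$-interpolation networks by monotone staircase networks: the output-side layer approximates $\exp:\R_-\to[0,1]$ --- range of length $1$ --- within $\tilde\epsilon$ using only $O(1/\tilde\epsilon)$ neurons per basis function, with \emph{no} logarithmic factor, so this layer has $O(|U_n^{(1)}|\,\epsilon^{-1}) = O(\epsilon^{-3/2}(\log\frac1\epsilon)^{3(d-1)/2})$ neurons; the first layer approximates each capped $\log\phi_{l_j,i_j}$ (split into its two monotone pieces, each of range $O(\log\frac1\epsilon)$) with $O(\epsilon^{-1}\log\frac1\epsilon)$ neurons per univariate function, contributing $O(\epsilon^{-3/2}(\log\frac1\epsilon)^{(d+1)/2})$ neurons, which is dominated by the second layer for $d\geq 2$. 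Summing recovers $O(\epsilon^{-3/2}(\log\frac1\epsilon)^{3(d-1)/2})$ neurons, and the trainable parameters are still just the $|U_{n_\epsilon}^{(1)}|=O(\epsilon^{-1/2}(\log\frac1\epsilon)^{3(d-1)/2})$ coefficients $v_{\mb l,\mb i}$, since the staircases are $f$-independent.

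\textbf{From Heaviside to general sigmoid-like, and the main obstacle.} Finally I would replace each Heaviside unit $\mb 1_{\{x\geq t_k\}}$ by $H_\lambda(x-t_k):=\frac{\sigma(\lambda(x-t_k))-\sigma(-\infty)}{\sigma(+\infty)-\sigma(-\infty)}$. Away from $t_k$ this equals Heaviside up to a rate $r_\sigma(\lambda)\to 0$; near $t_k$ the discrepancy is at most the step height, and for a monotone staircase the total step height inside a window of width $2\eta$ is bounded by the variation of the target over that window plus $O(\delta)$, hence $O(\delta)$ once $\eta$ is small --- and both $\eta$ (the window where non-uniformity bites) and the required rate are controlled by taking $\lambda=\mathrm{poly}(1/\epsilon)$. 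Aggregating over all units keeps the error $O(\epsilon)$ without changing the neuron or parameter counts. The main obstacle is precisely this last step combined with the range-vs-domain subtlety of Lemma~\ref{lemma:1D function approx.}: the latter is what keeps the sigmoid-like bound at $\epsilon^{-3/2}$ rather than a worse power, while the former forces the careful total-variation argument around the staircase breakpoints to compensate for the fact that rescaled sigmoid-like functions converge to the step function only away from the jump.
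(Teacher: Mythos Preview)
Your proposal is correct and follows essentially the same approach as the paper: reduce ReLU-like activations to ReLU via the rescaling $x\mapsto\sigma(Mx)/(Mb)$ (the paper's Lemma~\ref{lemma:approx activations}) and reuse the network of Theorem~\ref{thm: shallow NN bound.} unchanged; for sigmoid-like activations, rebuild the two-layer construction with range-based staircase approximations (Lemma~\ref{lemma:1D function approx.}-style, $O(1/\tilde\epsilon)$ neurons for $\exp$ and $O((d/\tilde\epsilon)\log(1/\tilde\epsilon))$ for each capped $\log\phi_{l_j,i_j}$), which is exactly what yields the $\epsilon^{-3/2}$ dependence. The only organizational difference is that the paper folds your ``Heaviside $\to$ general sigmoid'' replacement step directly into a single lemma (its Lemma~A.1), which builds the sigmoid staircase in one shot by choosing the scale $M$ so that the transition window is smaller than the minimal gap between breakpoints; your separate total-variation argument achieves the same thing but is slightly more elaborate than needed.
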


Intuitively, a sigmoid-like (resp. ReLU-like) function is a function that resembles the Heaviside (resp. ReLU) function by zooming out along the $x$ (resp. $x$ and $y$) axis. Using this approximation, we can reduce the analysis of sigmoid-like (resp. ReLU-like) activations to the case of a Heaviside (resp ReLU) activation and obtain the following result. The following lemma formalizes this intuition.

\begin{restatable}{lemma}{LemmaApproxActivations}
\label{lemma:approx activations}
Let $\sigma$ be a sigmoid-like activation with limit $a$ (resp. $b$) in $-\infty$ (resp.$+\infty$). For any $\delta>0$ and error tolerance $\epsilon>0$, there exists a scaling $M>0$ such that $x\mapsto \frac{\sigma(Mx)}{b-a} - a$ approximates the Heaviside function within $\epsilon$ outside of $(-\delta,\delta)$ for the infinity norm. Furthermore, this function has values in $[0,1]$.

Let $\sigma$ be a ReLU-like activation with asymptote $b\cdot x+c$ in $+\infty$. For any $\delta>0$ and error tolerance $\epsilon>0$, there exists a scaling $M>0$ such that $x\mapsto \frac{\sigma(Mx)}{Mb}$ approximates the ReLU function within $\epsilon$ for the infinity norm.
\end{restatable}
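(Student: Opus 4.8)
The plan is to exploit that precomposing the activation with a dilation $x\mapsto Mx$ and renormalizing amounts to ``zooming out'' its graph, so that for $M$ large it converges uniformly to the idealized target; I would handle the two classes separately.

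For the sigmoid-like case, I read the normalization in the statement as $\tilde\sigma_M(x):=\frac{\sigma(Mx)-a}{b-a}$. Since $\sigma$ is non-decreasing with limits $a$ at $-\infty$ and $b$ at $+\infty$, it satisfies $a\le\sigma\le b$ everywhere, which immediately gives $\tilde\sigma_M\in[0,1]$ and disposes of the range claim (this implicitly uses $a<b$, true for any non-constant such $\sigma$). The key point is that monotonicity promotes the pointwise limits to uniform ones on half-lines: $b-\sigma(t)\le b-\sigma(T)$ for $t\ge T$, and symmetrically at $-\infty$. So, given $\epsilon$ and $\delta$, I would fix $T>0$ with $b-\sigma(T)$ and $\sigma(-T)-a$ both below $\epsilon(b-a)$, then set $M\ge T/\delta$; for $|x|\ge\delta$ this forces $|Mx|\ge T$, hence $|\tilde\sigma_M(x)-\mb{1}_{\{x\ge 0\}}|<\epsilon$, which is the desired approximation of the Heaviside function outside $(-\delta,\delta)$.

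For the ReLU-like case, I would use the two defining boundedness properties directly: $\sigma$ is bounded on $\R_-$, say $|\sigma(t)|\le C_-$ for $t\le 0$, and $\sigma(t)-bt$ is bounded on $\R_+$ (its asymptote being $bx+c$), say $|\sigma(t)-bt|\le C_+$ for $t\ge 0$. With $\tilde\sigma_M(x):=\frac{\sigma(Mx)}{Mb}$, I would split on the sign of $x$: for $x\ge 0$, $|\tilde\sigma_M(x)-x|=\frac{1}{Mb}|\sigma(Mx)-bMx|\le\frac{C_+}{Mb}$; for $x<0$, $|\tilde\sigma_M(x)|\le\frac{C_-}{Mb}$. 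Since $\mathrm{ReLU}(x)=x$ on $\R_+$ and $0$ on $\R_-$, these are exactly the estimates needed, and both are uniform in $x$; choosing $M\ge\max(C_-,C_+)/(b\epsilon)$ yields $\|\tilde\sigma_M-\mathrm{ReLU}\|_\infty\le\epsilon$ on all of $\R$.

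I do not anticipate a genuine obstacle; the only point requiring care is that all estimates be uniform over the whole relevant region rather than merely near $\pm\infty$. This is exactly why I invoke monotonicity of $\sigma$ in the sigmoid-like case, and the boundedness of $\sigma(t)-bt$ on all of $\R_+$ (not just its convergence) in the ReLU-like case — the latter being what controls the delicate region of small $x$, where $Mx$ need not be large.
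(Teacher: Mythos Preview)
Your proposal is correct and follows essentially the same approach as the paper: in both cases you pick a threshold ($T$ or $x_0$) beyond which $\sigma$ is close to its asymptotic behavior, then scale by $M$ so that $|x|\ge\delta$ forces $|Mx|$ past that threshold (sigmoid case), or simply divide the uniform bounds $C_\pm$ by $Mb$ (ReLU case). You are in fact slightly more careful than the paper's proof, since you explicitly justify the range claim $\tilde\sigma_M\in[0,1]$ via monotonicity and correctly read the intended normalization as $\frac{\sigma(Mx)-a}{b-a}$.
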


\begin{proof}
See Appendix \ref{appendix sec: NN upper bound sec}.
\end{proof}
\noindent With this Lemma we can now prove Theorem \ref{thm: shallow NN bound generalized activation.}.
\begin{proof}[Proof of Theorem \ref{thm: shallow NN bound generalized activation.}]
We start by the class of ReLU-like activations. Let $\sigma$ be a ReLU-like activation function. Lemma \ref{lemma:approx activations} shows that one can approximate arbitrarily well the ReLU activation with a linear map $\sigma$. Take the neural network approximator $\hat f$ of a target function $f$ given by Theorem \ref{thm: shallow NN bound.}. At each node, we can add the linear map corresponding to $x\mapsto \frac{\sigma(Mx)}{Mb}$ with no additional neuron nor parameter. Because the approximation is continuous, we can take $M>0$ arbitrarily large in order to approximate $\hat f$ with arbitrary precision on the compact $[0,1]^d$.

The same argument holds for sigmoid-like activation functions in order to reduce the problem to Heaviside activation functions. Although quadratic approximations for univariate functions similar to Lemma \ref{lemma:quadratic approximation 1D} are not valid for general sigmoid-like activations -- in particular the Heaviside --- we can obtain an analog to Lemma \ref{lemma:1D function approx.} as Lemma \ref{lemma:1D function approx sigmoid} given in the Appendix. This results is an increased number of neurons. In order to approximate a target function $f\in X^{2,\infty}(\Omega)$, we use the same structure as the neural network constructed for ReLU activations and use the same notations as in the proof of Theorem \ref{thm: shallow NN bound.}. The first difference lies in the approximation of $\log \phi_{l_j,i_j}$ in the first layer. Instead of using Corollary \ref{corollary:quadratic approximation log}, we use Lemma \ref{lemma:1D function approx sigmoid}. Therefore, $\frac{12d}{\tilde \epsilon}\log \frac{3}{\tilde \epsilon}$ neurons are needed to compute a $\tilde\epsilon/(3d)-$approximation of $\max(\log \phi_{l_j,i_j},\log (\tilde\epsilon/3))$. The second difference is in the approximation of the exponential in the second layer. Again, we use Lemma \ref{lemma:1D function approx sigmoid} to construct a $\tilde \epsilon/3-$approximation of the exponential on $\R_-$ with $\frac{6}{\tilde\epsilon}$ neurons for the second layer. As a result, the first layer contains at most $2^{n+2}\frac{3d^2}{\tilde \epsilon}\log \frac{1}{\tilde \epsilon}$ neurons for $\epsilon$ sufficiently small, and the second layer contains $\left| U_n^{(1)}\right|\frac{6}{\tilde\epsilon}$ neurons. Using the same estimates as in the proof of Theorem \ref{thm: shallow NN bound.} shows that the constructed neural network has at most $N_1$ (resp. $N_2$) neurons on the first (resp. second) layer where
\begin{align*}
    N_1 &\underset{\epsilon\to0}{\sim}   \frac{3\cdot 2^5\cdot d^{5/2}}{8^{\frac{d}{2}}(2\log 2)^{\frac{d-1}{2}} d!^{\frac{1}{2}}}\frac{ |f|_{\bold 2,\infty}^{3/2}}{\epsilon^{3/2}} \left(\log \frac{1}{\epsilon} \right)^{\frac{d+1}{2}},\\
    N_2 &\underset{\epsilon\to0}{\sim} \frac{24\cdot d^{\frac{3}{2}}}{8^{\frac{d}{2}}(2\log 2)^{\frac{3(d-1)}{2}} d!^{\frac{3}{2}}}\cdot\frac{|f|_{\bold 2,\infty}^{3/2}}{\epsilon^{3/2}} \left(\log \frac{1}{\epsilon} \right)^{\frac{3(d-1)}{2}}.
\end{align*}
This ends the proof.
\end{proof}

\section{The Representation Power of Deep Neural Networks}\label{sec: deep nn.}

Bungartz and Griebel's \cite{bungartz2004sparse} parametrization via sparse grids approximates the unit ball of the Korobov space within $\epsilon$ with $O(\epsilon^{-\frac{1}{2}} (\log\frac{1}{\epsilon})^{\frac{3(d-1)}{2}})$ parameters (Theorem \ref{thm: sparse grid approx.}). Montanelli and Du \cite{montanelli2019new} used the sparse grid approximation to construct deep neural networks, with ReLU activation that can approximate Korobov functions with $O(\epsilon^{-\frac{1}{2}} (\log\frac{1}{\epsilon})^{\frac{3(d-1)}{2}+1})$ neurons, and depth $O(\log\frac{1}{\epsilon})$ for the $L^{\infty}$ norm.



We show that we can improve Montanelli and Du. \cite{montanelli2019new}'s bound for deep neural networks with $\mathcal C^2$ non-linear activation functions. We prove that we only need $O(\epsilon^{-\frac{1}{2}}(\log \frac{1}{\epsilon})^{\frac{3(d-1)}{2}})$ neurons and a fixed depth, independent of $\epsilon$, to approximate the unit ball of the Korobov space within $\epsilon$ in the $L^{\infty}$ norm.

\begin{theorem}
\label{thm: fixed deep upper bound}
Let $\sigma\in\mathcal C^2$ be a non-linear activation function. Let $\epsilon>0$. For any function $f\in X^{2,\infty}(\Omega)$, there exists a neural network of depth $\lceil\log_2 d\rceil + 1$, with ReLU activation on the first layer and activation function $\sigma$ for the next layers, $O(\epsilon^{-\frac{1}{2}}(\log \frac{1}{\epsilon})^{\frac{3(d-1)}{2}})$ neurons, and $O(\epsilon^{-\frac{1}{2}}(\log \frac{1}{\epsilon})^{\frac{3(d-1)}{2}})$ training parameters approximating $f$ within $\epsilon$ for the infinity norm.
\end{theorem}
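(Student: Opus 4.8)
The plan is to reuse the high-level strategy of the proof of Theorem~\ref{thm: shallow NN bound.}: fix the sparse grid approximant $f_n^{(1)}=\sum_{(\mb l,\mb i)\in U_n^{(1)}} v_{\mb l,\mb i}\,\phi_{\mb l,\mb i}$ of $f$ given by Theorem~\ref{thm: sparse grid approx.}, where $\phi_{\mb l,\mb i}(\mb x)=\prod_{j=1}^d\phi_{l_j,i_j}(x_j)$, and build a network that computes the univariate hat functions exactly on its first layer, forms the products $\phi_{\mb l,\mb i}$ on the next layers, and outputs the weighted sum $\sum v_{\mb l,\mb i}\,\widehat{\phi_{\mb l,\mb i}}$. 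The only --- but decisive --- change from the shallow construction is how the $d$-fold product is realized. In the ReLU construction the product cost $\Theta(\epsilon^{-1/2}\log\tfrac1\epsilon)$ neurons per basis function (the exponential step), producing the extra $\epsilon^{-1/2}$ factor on the neuron count. Here, a $\mathcal C^2$ non-linear activation lets us multiply two numbers with only $O(1)$ neurons, and we combine the $d$ factors through a \emph{balanced binary tree of multiplications} of depth $\lceil\log_2 d\rceil$; with the single ReLU layer this gives the announced total depth $\lceil\log_2 d\rceil+1$, independent of $\epsilon$.

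The core gadget is a second-order finite difference. Since $\sigma\in\mathcal C^2$ is non-linear, $\sigma''\not\equiv 0$, so fix $a$ with $\sigma''(a)\neq 0$. For a step $h>0$, the two neurons $x\mapsto\sigma(a+hx)$ and $x\mapsto\sigma(a-hx)$ combine linearly into $g_h(x):=\frac{1}{\sigma''(a)h^2}\bigl(\sigma(a+hx)+\sigma(a-hx)-2\sigma(a)\bigr)$, and the integral form of Taylor's remainder --- which needs only continuity of $\sigma''$ --- gives $g_h(x)=x^2+o(1)$ as $h\to 0$, uniformly on any fixed bounded interval. Hence on a fixed compact set, any target precision is reached by choosing $h$ small enough, with the neuron count frozen at $2$ and only the weights changing. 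Writing $uv=\tfrac14\bigl((u+v)^2-(u-v)^2\bigr)$ lets one multiply two numbers within precision $\delta$ using $4$ such neurons in a single $\sigma$-layer, the linear pre- and post-combinations being free; multiplying by the constant $1$ is the special case $v=1$ (the constant absorbed into biases), which we use to pad factor lists when $d$ is not a power of two.

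For the assembly: layer $1$ has ReLU neurons and, by Lemma~\ref{lemma:piece-wise affine}, represents exactly each of the $d(2^n-1)$ univariate hats $\phi_{l_j,i_j}$ with $O(1)$ neurons apiece, i.e. $O(d\,2^n)$ neurons. For each $(\mb l,\mb i)\in U_n^{(1)}$ we pad its $d$ factors with $1$'s to length $2^{\lceil\log_2 d\rceil}$ and feed them into a balanced multiplication tree: the $\lceil\log_2 d\rceil$ layers $2,\dots,\lceil\log_2 d\rceil+1$ use activation $\sigma$, each internal node costing $4$ neurons, hence $O(d)$ neurons per basis function and $O(d\,|U_n^{(1)}|)$ overall; the linear output neuron forms $\hat f_n^{(1)}$. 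For the error, all factors are exact and lie in $[0,1]$; propagating the per-node error $\delta$ up the tree, $\bigl|\widehat{\phi_{\mb l,\mb i}}-\phi_{\mb l,\mb i}\bigr|\le C_d\,\delta$ with $C_d=O(d)$, whence $\|\hat f_n^{(1)}-f_n^{(1)}\|_\infty\le C_d\,\delta\sum_{(\mb l,\mb i)}|v_{\mb l,\mb i}|\le C_d\,\delta\,|f|_{\mb 2,\infty}$. Choosing $\delta\asymp\epsilon/(C_d\,|f|_{\mb 2,\infty})$ and then $n=n_\epsilon$ exactly as in the proof of Theorem~\ref{thm: shallow NN bound.} (so $\|f-f_n^{(1)}\|_\infty\le\epsilon/2$, $2^{n_\epsilon}=O(\epsilon^{-1/2}(\log\tfrac1\epsilon)^{(d-1)/2})$, $|U_{n_\epsilon}^{(1)}|=O(\epsilon^{-1/2}(\log\tfrac1\epsilon)^{3(d-1)/2})$) gives $\|f-\hat f_{n_\epsilon}^{(1)}\|_\infty\le\epsilon$ with $O(\epsilon^{-1/2}(\log\tfrac1\epsilon)^{3(d-1)/2})$ neurons. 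Finally, every weight except the output coefficients $v_{\mb l,\mb i}$ --- the steps $h$ and the hat-function weights --- is determined by $\epsilon$ and $d$ alone, so only $|U_{n_\epsilon}^{(1)}|=O(\epsilon^{-1/2}(\log\tfrac1\epsilon)^{3(d-1)/2})$ parameters depend on $f$.

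The main obstacle is the error analysis of the multiplication tree: one must check that the accumulated errors keep every intermediate quantity inside a single fixed compact set, so that one choice of step $h$ --- hence a fixed neuron count --- realizes the $\delta$-precision at all nodes simultaneously, and that $\mathcal C^2$ regularity suffices for the quadratic approximation $g_h\to(\cdot)^2$ to be uniform (both via the integral remainder estimate); the remaining points --- that padding with constant $1$'s reduces an arbitrary $d$ to the balanced-tree case while respecting the layer-by-layer feed-forward structure, and that this uses depth exactly $\lceil\log_2 d\rceil+1$ --- are routine bookkeeping.
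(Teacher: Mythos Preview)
Your proposal is correct and follows essentially the same approach as the paper: sparse-grid approximant $f_n^{(1)}$, exact ReLU computation of the univariate hats on layer $1$, product subnetworks of depth $\lceil\log_2 d\rceil$ with activation $\sigma$ for each $\phi_{\mb l,\mb i}$, then the weighted output. The only difference is packaging: the paper invokes the Lin--Tegmark--Rolnick result (Proposition~\ref{prop:exact product deep nn}) as a black box for the $O(d)$-neuron, depth-$\lceil\log_2 d\rceil$ product network, whereas you spell out its content (second-order finite difference to approximate $x^2$, polarization identity, balanced binary tree), and you are slightly more explicit about the error-propagation and compactness bookkeeping that the citation hides.
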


This new bound on the number of parameters and neurons matches the approximation power of sparse grids. In fact, sparse grids use $\Theta(\epsilon^{-\frac{1}{2}}(\log \frac{1}{\epsilon})^{\frac{3(d-1)}{2}})$ parameters (weights of basis functions) to approximate Korobov functions within $\epsilon$ in infinity norm. Our construction in the proof of Theorem \ref{thm: fixed deep upper bound} shows that deep neural networks with fixed depth in $\epsilon$ can fully encode sparse grids approximators. Neural networks are therefore more powerful function approximators. In particular, any sparse grid approximation using $O(N(\epsilon))$ parameters, can be represented exactly by a neural network using $O(N(\epsilon))$ neurons.

\begin{figure}[h!]
    \centering
    \begin{tikzpicture}
    [   cnode/.style={draw=black,fill=#1,minimum width=3mm,circle},
    ]
	\node[cnode=blue!40, label=above:$x_1$] (x1) at (-2,3) {};    	
	\node[cnode=blue!40, label=above:$x_2$] (x2) at (-1,3) {}; 
	\draw (0.5,3) node {$\cdots$};
	\node[cnode=blue!40, label=above:$x_d$] (xd) at (2,3) {}; 
	
	\foreach \x in {-6,-3,-1,2,5}{
	    \foreach \k in {0,1,2,3}
	    \node[draw,circle,minimum size = 2mm, inner sep=-2] (1-\x\k) at (\x+\k*1/3,2) {} ;
	}
	\foreach \x in {-4,1,4}
	\draw (\x,2) node {$\cdots$};

	\foreach \x in {-6,-3}{
	    \foreach \k in {0,1,2,3}
	    \draw        [gray!80] (x1) -- (1-\x\k);
	}
	\foreach \x in {-1,2}{
	    \foreach \k in {0,1,2,3}
	    \draw        [gray!80] (x2) -- (1-\x\k);
	}
	\foreach \x in {5}{
	    \foreach \k in {0,1,2,3}
	    \draw        [gray!80] (xd) -- (1-\x\k);
	}
	
	\node[rectangle, minimum height=3cm, minimum width=1cm, fill = gray!40] (R1) at (-7,-1) {$\mathcal P_{\mb 1,\mb 1}$};
	\draw node[minimum height=4mm,minimum width=1cm] (R1node) at (-7,  0.3) {};
	\draw node[minimum height=0mm, inner sep=-2] (R1below) at (-7,  -2.5) {};
	\node[rectangle, minimum height=3cm, minimum width=1cm, fill = gray!40] (R2) at (-5,-1) {};
	\draw node[minimum height=4mm,minimum width=1cm] (R2node) at (-5,  0.3)  {} ;
	\draw node[minimum height=0mm, inner sep=-2] (R2below) at (-5,    -2.5) {};
	\node[rectangle, minimum height=3cm, minimum width=1cm, fill = gray!40] (R3) at (-3,-1) {};
	\draw node[minimum height=4mm,minimum width=1cm] (R3node) at (-3,  0.3)  {} ;
	\draw node[minimum height=0mm, inner sep=-2] (R3below) at (-3,    -2.5) {};
	
	\draw (0,-1) node {$\cdots$};
	
	\node[rectangle, minimum height=3cm, minimum width=1cm, fill = gray!40] (R4) at (2,-1) {$\mathcal P_{\mb l,\mb i}$};
	\draw node[minimum height=4mm,minimum width=1cm] (R4node) at (2,  0.3) {}  ;
	\draw node[minimum height=0mm, inner sep=-2] (R4below) at (2,    -2.5) {};
	
	\draw (4.5,-1) node {$\cdots$};
	
	\node[rectangle, minimum height=3cm, minimum width=1cm, fill = gray!40] (R5) at (7,-1) {};
	\draw node[minimum height=4mm,minimum width=1cm] (R5node) at (7,  0.3)  {} ;
	\draw node[minimum height=0mm, inner sep=-2] (R5below) at (7,    -2.5) {};
	
	\foreach \x in {-6,-1}{
	    \foreach \k in {0,1,2,3}
	    \draw        [gray!80] (1-\x\k) to (R1node);
	}
	
	\foreach \x in {-6,-1,5}{
	    \foreach \k in {0,1,2,3}
	    \draw        [gray!80] (1-\x\k) to (R5node);
	}
	\node[cnode=blue!40, label=below:$\displaystyle \widetilde{
        f}(\mb x)$] (output) at (0,-4) {}; 
	
	\foreach \x in {1,2,3,4,5}
	\draw        [gray!80] (R\x below) to (output);
	\foreach \x in {-6,-3,-1,2,5}
	\draw [dashed] (\x+0.5,2) ellipse (0.8cm and 0.3cm);
	\draw (-5.5,  1.4) node {$\phi_{1,1}(x_1)$};
	\draw (-2.5,  1.4) node {$   \phi_{n,2^n-1}(x_1)$};
	
	\draw (-0.5,  1.4) node {$   \phi_{1,1}(x_2)$};
	\draw (2.5,  1.4) node {$   \phi_{n,2^n-1}(x_2)$};
	
	\draw (5.5,  1.4) node {$   \phi_{n,2^n-1}(x_d)$};
	
	\draw (3,-3) node {$\displaystyle\phi_{\mb l,\mb i}(\mb x)= \prod_{j=1}^d \phi_{l_j,i_j}(x_j)$};
	
	\draw [decorate,decoration={brace,amplitude=5pt,raise=4ex}]
  (-5.5,3.5) -- (5.5,3.5) node[midway,yshift=3em]{$ O(\epsilon^{-1/2}(\log\frac{1}{\epsilon})^{\frac{d-1}{2}})$ functions $\phi_{l_j,i_j}(x_j)$};
	
    \draw [decorate,decoration={brace,amplitude=5pt,mirror,raise=4ex}]
  (-7,-4.5) -- (7,-4.5) node[midway,yshift=-3em]{$\left|U_n^{(1)}\right| = O(\epsilon^{-1/2}(\log\frac{1}{\epsilon})^{\frac{3(d-1)}{2}})$ basis functions $\phi_{\mb l,\mb i}$};
	
	\draw [<->,>=latex] (-2.2,-2.5) -- node[right]{$\lceil \log_2 d\rceil$} (-2.2,0.5);
    \end{tikzpicture}
    \caption{Deep neural network approximating a Korobov function $f\in X^{2,\infty}(\Omega)$ within $\epsilon$ in infinity norm. It contains $|U_n^{(1)}|$ sub-networks $\mathcal P_{\mb l,\mb i}$ implementing the basis the product $\phi_{\mb l,\mb i}$ with arbitrary precision. The complete network has $O(\epsilon^{-1/2}(\log\frac{1}{\epsilon})^{\frac{3(d-1)}{2}})$ neurons and depth $\lceil \log_2 d\rceil+1$.}
    \label{fig:deep nn}
\end{figure}

As in the construction of our shallow network in the previous section, we will make use of a network that approximates the product function, deep in this case. The following result of Lin, Tegmark and Rolnick \cite[Appendix A]{lin2017does} shows that deep neural networks can represent exactly the product function.

\begin{proposition}[Lin, Tegmark and Rolnick \cite{lin2017does}]
\label{prop:exact product deep nn}
Let $\sigma$ be $\mathcal C^2$ non linear activation function. For any approximation error $\epsilon>0$, there exists a neural network with $\lceil \log_2 d\rceil$ hidden layers and activation $\sigma$, using at most $8d$ neurons arranged in a binary tree network that approximates the product function $\prod_{i=1}^d x_i$ on $[0,1]^d$ within $\epsilon$ for the infinity norm.
\end{proposition}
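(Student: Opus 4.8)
The plan is to reduce the $d$-variate product to a cascade of bivariate products, to realize each bivariate product by a sub-network of fixed (constant) size, and to arrange these sub-networks in a balanced binary tree of depth $\lceil\log_2 d\rceil$.

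\emph{A depth-one gadget for $xy$.} Since $\sigma\in\mathcal C^2$ is not affine, $\sigma''$ is not identically zero; fix $t_0$ with $\sigma''(t_0)\neq 0$. For $h>0$ put
\begin{equation*}
  m_h(x,y):=\frac{\sigma(t_0+h(x+y))+\sigma(t_0-h(x+y))-\sigma(t_0+h(x-y))-\sigma(t_0-h(x-y))}{4h^2\,\sigma''(t_0)}.
\end{equation*}
This is computed by four neurons with activation $\sigma$ (weights $\pm h$ on the two inputs, bias $t_0$) followed by a linear read-out, so it fits in one hidden layer. Expanding $\sigma$ to second order at $t_0$ with integral remainder: the constant and linear parts cancel by the $\pm$ symmetry, the quadratic parts contribute $\frac{1}{4h^2\sigma''(t_0)}\,\sigma''(t_0)\,h^2\big((x+y)^2-(x-y)^2\big)=xy$, and the remainder is bounded on any ball of radius $R$ by $\frac{R^2}{2|\sigma''(t_0)|}\,\omega(hR)$, where $\omega$ is the modulus of continuity of $\sigma''$ at $t_0$. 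Hence $m_h$ converges to $(x,y)\mapsto xy$ uniformly on every compact set as $h\to0^+$. We also use the single-neuron gadget $\iota_h(x):=(\sigma(t_1+hx)-\sigma(t_1))/(h\,\sigma'(t_1))$, with $t_1$ such that $\sigma'(t_1)\neq0$ (which exists because $\sigma$ is non-constant); it converges to the identity uniformly on compacts.

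\emph{The binary tree and the neuron count.} Over the inputs $x_1,\dots,x_d$ build a balanced binary tree of depth $L:=\lceil\log_2 d\rceil$: hidden layer $k$ receives the $n_k$ partial products from layer $k-1$ (with $n_1=d$ and $n_{k+1}=\lceil n_k/2\rceil$), pairs them with $\lfloor n_k/2\rfloor$ copies of $m_h$, and, if $n_k$ is odd, forwards the unpaired value with one copy of $\iota_h$. This is a bona fide feed-forward network: a gadget in layer $k+1$ needs only linear combinations of the layer-$k$ neuron outputs as its two scalar inputs, and since a composition of affine maps is affine, each gadget's linear read-out is absorbed into the weights feeding the next layer. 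Layer $k$ uses at most $2n_k$ neurons, and $\sum_{k=1}^L n_k\le 2d+L$, so the total number of neurons is at most $4d+2L\le 8d$ and the depth is $L=\lceil\log_2 d\rceil$, both independent of $\epsilon$.

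\emph{Error propagation, and the main difficulty.} Let $\eta(h):=\sup_{(x,y)\in[-1,2]^2}|m_h(x,y)-xy|$, and let $\eta'(h)$ be the analogous quantity for $\iota_h$ on $[-1,2]$; both tend to $0$ as $h\to0$. The delicate part is to carry two inductions over the levels simultaneously: every \emph{true} partial product lies in $[0,1]$, and — provided $h$ is small enough that the accumulated error never exceeds $1$ — every \emph{network} partial product lies in $[-1,2]$ and differs from the corresponding true product by at most $\delta_k$, where $\delta_1=0$ and $\delta_{k+1}\le\max(\eta(h),\eta'(h))+c\,\delta_k$ with $c=3$ (from $|\hat a\hat b-ab|\le|\hat a|\,|\hat b-b|+|b|\,|\hat a-a|$ together with $|\hat a|\le2$, $|b|\le1$). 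Consequently the error at the root is at most $\eta(h)\sum_{j=0}^{L-1}c^{\,j}$, which can be forced below $\epsilon$ by choosing $h$ small enough (depending on $\epsilon,d,\sigma$ only, not on the network size). Keeping all intermediate values inside the fixed compact $[-1,2]$, where $m_h$ is uniformly close to multiplication, while controlling the $c^{L}$-type amplification over the $\lceil\log_2 d\rceil$ levels, is the step I expect to require the most care.
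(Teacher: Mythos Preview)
The paper does not prove this proposition; it is quoted from Lin, Tegmark and Rolnick \cite[Appendix A]{lin2017does} and used as a black box in the proof of Theorem \ref{thm: fixed deep upper bound}. So there is no in-paper argument to compare your proposal against.

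That said, your construction is exactly the one from \cite{lin2017does}: the four-neuron symmetric second-difference gadget
\[
m_h(x,y)\;=\;\frac{\sigma(t_0+h(x+y))+\sigma(t_0-h(x+y))-\sigma(t_0+h(x-y))-\sigma(t_0-h(x-y))}{4h^2\,\sigma''(t_0)}
\]
recovers $xy$ in the limit $h\to 0$, and a balanced binary tree of such gadgets (with single-neuron identity shortcuts for odd counts) yields depth $\lceil\log_2 d\rceil$ and $O(d)$ neurons. Your neuron count $4d+2\lceil\log_2 d\rceil\le 8d$ is correct, and the error propagation you outline is sound: with $\delta_{k+1}\le\max(\eta,\eta')+3\delta_k$ and $\delta_1=0$, the root error is at most $\max(\eta,\eta')\cdot\frac{3^L-1}{2}$, which is driven below $\epsilon$ by shrinking $h$. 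The only care point, which you already flag, is the mild circularity in assuming intermediate values stay in $[-1,2]$; this is resolved by first choosing $h$ so small that the a priori bound $\max(\eta,\eta')\cdot(3^L-1)/2<1$ holds, after which a straight induction keeps every approximate partial product within distance $1$ of the true one in $[0,1]$.

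One minor slip: in the final error line you wrote $\eta(h)\sum c^j$ where it should be $\max(\eta(h),\eta'(h))\sum c^j$, since identity shortcuts also contribute. This does not affect the conclusion.
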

An important remark is that the structure of the constructed neural network is independent of $\epsilon$. In particular, the depth and number of neurons is independent of the approximation precision $\epsilon$, which we refer as \emph{exact} approximation. It is known that an exponential number of neurons is needed in order to \emph{exactly} approximate the product function with a single-layer neural network \cite{lin2017does}, however, the question of whether one could approximate the product with a shallow network and a polynomial number of neurons, remained open. In Proposition \ref{prop:product_approx}, we answer positively to this question
by constructing an $\epsilon-$approximating neural network of depth $2$ with ReLU activation and $O(d^{3/2}\epsilon^{-1/2}\log\frac{1}{\epsilon})$ neurons. Using the same ideas as in Subsection \ref{subsection: generalizing activations}, we can generalize this result to other activation functions and obtain an $\epsilon-$approximating neural network of depth $2$ with $O(d^{3/2}\epsilon^{-1/2}\log\frac{1}{\epsilon})$ neurons for a ReLU-like activation, or $O(d^2\epsilon^{-1}\log\frac{1}{\epsilon})$ neurons for a sigmoid-like activation. 

\begin{proof}[Proof of Theorem \ref{thm: fixed deep upper bound}]
Let $\epsilon>0$. We construct a similar structure to the network defined in Theorem \ref{thm: shallow NN bound.} by using the sparse grid approximation of Subsection \ref{subsection: sparse grids}. For a given $n$, let $f_n^{(1)}$ be the projection of $f$ in the approximation space $V_n^{(1)}$ (defined in \eqref{eq: V1.}) and $U_n^{(1)}$ (defined in \eqref{eq: Un1.}) the set of indices $(\mb l, \mb i)$ of basis functions present in $V_n^{(1)}$. Recall $f_n^{(1)}$ can be uniquely decomposed as 
\begin{equation*}
    f_n^{(1)}(\bold x)=\sum_{(\bold l, \bold i)\in U_n^{(1)}} v_{\bold l,\bold i} \phi_{\bold l,\bold i}(\bold x).
\end{equation*}
where $\phi_{\mb l,\mb i} = \prod_{j=1}^d \phi_{l_j,i_j}$ are the basis functions defined in \eqref{eq: hat functions lj ij}.
In the first layer, we compute exactly the piece-wise linear hat functions $\phi_{l_j,i_j}$, then in the next set of layers, we use the product-approximating neural network given by Proposition \ref{prop:exact product deep nn} to compute the basis functions $\phi_{\mb l,\mb i} = \prod_{j=1}^d \phi_{l_j,i_j}$ (see Figure \ref{fig:deep nn}). The output layer computes the weighted sum
$\sum_{(\bold l, \bold i)\in U_n^{(1)}} v_{\bold l,\bold i} \phi_{\bold l,\bold i}(\bold x)$ and outputs $f_n^{(1)}$. Because the approximation has arbitrary precision, we can chose the network of Proposition \ref{prop:exact product deep nn} such that the resulting network $\hat{f}$ verifies $\|\hat{f} - f^{(1)}_n \|_{\infty} \leq \epsilon/2$.

More precisely, as $\phi_{l_j,i_j}$ is piece-wise linear with four pieces, we can compute it exactly with four neurons with ReLU activation on a single layer (Lemma \ref{lemma:piece-wise affine}). Our first layer is composed of the union of all these ReLU neurons, for the $d(2^n-1)$ indices $l_j,i_j$ such that $1\leq j\leq d$, $1\leq l_j\leq n$, $1\leq i_j\leq 2^{l_j}$ and $i_j$ is odd. Therefore, it contains at most $d2^{n+2}$ neurons with ReLU activation. The second set of layers is composed of the union of product-approximating neural networks to compute $\phi_{\mb l,\mb i}$ for all $(\mb l,\mb i)\in U_n^{(1)}.$ This set of layers contains $\lceil \log_2 d\rceil$ layers with activation $\sigma$ and at most $|U_n^{(1)}|\cdot 8d$ neurons. The output of these two sets of layers is an approximation of the basis functions $\phi_{\mb l,\mb i}$ with arbitrary precision. Consequently, the final output of the complete neural network is an approximation of $f_n^{(1)}$ with arbitrary precision. Similarly to the proof of Theorem \ref{thm: shallow NN bound.}, we can chose the smallest $n$ such that $\|f-f_n^{(1)}\|_\infty\leq \epsilon/2$ (see \eqref{eq: n epsilon} for details). Finally, the network has depth at most $\log_2 d +2$ and $N$ neurons where
\begin{equation*}
    N = 8d |U_{n_{\epsilon}}^{(1)}| \underset{\epsilon\to0}{\sim} \frac{ 2^5\cdot d^{5/2}}{8^{\frac{d}{2}}(2\log 2)^{\frac{3(d-1)}{2}} d!^{\frac{3}{2}}}\cdot\sqrt{\frac{|f|_{\bold 2,\infty}}{\epsilon}} \left(\log \frac{1}{\epsilon} \right)^{\frac{3(d-1)}{2}}.
\end{equation*}
The parameters of the network depending on the function are exactly the coefficients $v_{\mb l,\mb i}$ of the sparse grid approximation. Hence, the network has $O(\epsilon^{-\frac{1}{2}}(\log \frac{1}{\epsilon})^{\frac{3(d-1)}{2}})$ training parameters.
\end{proof}

\section{Neural Networks are Near-Optimal Function Approximators}\label{sec: NN are optimal.}
In the previous sections, we proved and discussed upper bounds on the number of neurons and training parameters needed by deep and shallow neural networks to approximate the Korobov space $X^{2,\infty}(\Omega)$. In this section, we investigate how good is the performance of neural networks as functional approximators. We prove a lower bound on the number of parameters needed by any continuous function approximator to approximate the Korobov space. We will show in particular that neural networks, deep and shallow, nearly match this lower bound, making them near-optimal function approximators. Let us first formalize the notion of continuous function approximators, following the framework of DeVore et al. \cite{devore1989optimal}.

For any Banach space $\mathcal X$ --- one can think of $\mathcal X$ as a space of functions --- and a subset $K\subset \mathcal X$ of elements to approximate, we define a continuous function approximator with $N$ parameters as a continuous parametrization $a:K\to \R^N$ together with a reconstruction scheme which is a $N-$dimensional manifold $\mathcal M_N:\R^N\to \mathcal{X}$. For any element $f\in K$, the approximation given by the class is $\mathcal M_N(a(f))$ : the parametrization $a$ is derived continuously from the function $f$ and then given as input to the reconstruction manifold that outputs an approximation function in $\mathcal{X}$. The error of this function approximator is defined as
\begin{equation*}
    E_{N,a,\mathcal M_N}(K)_{\mathcal{X}}:=\sup_{f\in K} |f-\mathcal M_N(a(f))|_K.
\end{equation*}
The best function approximator for the space $K$ minimizes this error. The minimal error for space $K$ is given by
\begin{equation*}
    E_N(K)_{\mathcal{X}} = \min_{a,\mathcal M_N} E_{N,a,\mathcal M_N}(K). 
\end{equation*}
In other terms, a continuous function approximator with $N$ parameters cannot hope to approximate $K$ better than within $E_N(K)_{\mathcal{X}}$. A  class of function approximators is a set of function approximators with a given structure. For example, neural networks with continuous parametrizations are a class of function approximators where the number of parameters corresponds to the number of training parameters i.e, the parameters of the network that depend on the desired function to approximate. We say that a class of function approximators is optimal for the space of functions $K$ if it matches this minimal error asymptotically in $N$, within a constant multiplicative factor. In other words, the number of parameters needed by the class to approximate functions in $K$ within $\epsilon$ matches asymptotically, within a constant, the least number of parameters $N$ needed to satisfy $E_N(K)_{\mathcal{X}} \leq \epsilon$. The norm considered in the approximation of the functions of $K$ is the norm associated to the space $\mathcal{X}$.

DeVore et al. \cite{devore1989optimal} showed that this minimum error $E_N(K)_{\mathcal{X}}$ admits a lower bound characterized by the Bernstein width of the subset $K\subset \mathcal X$ defined as
\begin{equation}\label{eq: Bernstein width}
    b_N(K)_{\mathcal X}:=\sup_{X_{N+1}} \sup\{\rho: \; \rho U(X_{N+1})\subset K\},
\end{equation}
where the outer sup is taken over all $N+1$ dimensional linear sub-spaces of $\mathcal{X}$, and $U(Y)$ denotes the unit ball of $Y$ for any linear subspace $Y$ of $\mathcal X$.

\begin{theorem}[DeVore et al. \cite{devore1989optimal}]
Let $\mathcal{X}$ be a normed linear space and let $K \subset \mathcal{X}$. The following inequality holds
\begin{equation*}
    E_N(K)_{\mathcal{X}}\geq b_N(K)_{\mathcal X}.
\end{equation*}
\end{theorem}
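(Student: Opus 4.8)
The plan is to prove the statement by bounding the error of an \emph{arbitrary} continuous function approximator from below by $\rho$ for every linear subspace $X_{N+1}$ and scale $\rho$ admissible in the definition \eqref{eq: Bernstein width} of the Bernstein width. Concretely, fix any continuous parametrization $a:K\to\R^N$, any reconstruction manifold $\mathcal M_N:\R^N\to\mathcal X$, any $(N+1)$-dimensional linear subspace $X_{N+1}\subset\mathcal X$, and any $\rho>0$ with $\rho\,U(X_{N+1})\subset K$. I will show $E_{N,a,\mathcal M_N}(K)_{\mathcal X}\geq\rho$. Taking the supremum over all admissible pairs $(X_{N+1},\rho)$ then gives $E_{N,a,\mathcal M_N}(K)_{\mathcal X}\geq b_N(K)_{\mathcal X}$, and minimizing over $(a,\mathcal M_N)$ yields the theorem.

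The key tool is the Borsuk--Ulam theorem. Let $S=\{x\in X_{N+1}:|x|_{\mathcal X}=1\}$ be the unit sphere of $X_{N+1}$. Radial projection $x\mapsto x/\|x\|_2$ defines a homeomorphism between $S$ and the Euclidean sphere $S^N$ that commutes with the antipodal map, so $S$ inherits the Borsuk--Ulam property: every continuous map $S\to\R^N$ identifies a pair of antipodal points. Apply this to the map $x\in S\mapsto a(\rho x)\in\R^N$, which is continuous because $a$ is continuous on $K$ and $\rho\,S\subset\rho\,U(X_{N+1})\subset K$. Hence there is $x_0\in S$ with $a(\rho x_0)=a(-\rho x_0)=:y_0$, and therefore $\mathcal M_N(a(\rho x_0))=\mathcal M_N(a(-\rho x_0))=\mathcal M_N(y_0)=:g\in\mathcal X$: the approximator produces the \emph{same} output $g$ on the two antipodal inputs $\rho x_0$ and $-\rho x_0$.

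Now conclude by the triangle inequality. Both $\rho x_0$ and $-\rho x_0$ lie in $\rho\,U(X_{N+1})\subset K$, so by definition of the error each of $|\rho x_0-g|_{\mathcal X}$ and $|-\rho x_0-g|_{\mathcal X}$ is at most $E_{N,a,\mathcal M_N}(K)_{\mathcal X}$. Then
\[
2\rho=|2\rho x_0|_{\mathcal X}=|(\rho x_0-g)-(-\rho x_0-g)|_{\mathcal X}\leq|\rho x_0-g|_{\mathcal X}+|-\rho x_0-g|_{\mathcal X}\leq 2\,E_{N,a,\mathcal M_N}(K)_{\mathcal X},
\]
so $E_{N,a,\mathcal M_N}(K)_{\mathcal X}\geq\rho$, as desired. (If $U(\cdot)$ is taken to be the open unit ball, run the same argument with $\rho'<\rho$ in place of $\rho$ and let $\rho'\to\rho$.)

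The only genuine difficulty is the topological input: one must invoke Borsuk--Ulam in the form ``there is no continuous antipode-free map $S^N\to\R^N$'' and verify that the unit sphere of an arbitrary $(N+1)$-dimensional normed space is, through radial projection, homeomorphic to $S^N$ via a map intertwining the two antipodal involutions. Everything else is elementary — no property of the Korobov space or of neural networks is used, which is exactly why the resulting lower bound applies uniformly to all continuous function approximators and, in particular, to neural networks with continuous parametrizations.
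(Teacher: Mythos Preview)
The paper does not actually prove this theorem; it is quoted from DeVore, Howard and Micchelli \cite{devore1989optimal} and used as a black box in Section~\ref{sec: NN are optimal.}. Your argument is correct and is precisely the classical Borsuk--Ulam proof from that reference: force the continuous parameter map $a$ to collapse two antipodal points of the sphere in an $(N{+}1)$-dimensional subspace, then use the triangle inequality. There is nothing to compare against in the present paper, and your write-up (including the open-versus-closed ball remark) is clean.
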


\paragraph{Approximating Sobolev functions.} DeVore et al. derived a lower bound on the Bernstein width of the Sobolev space to show that the Sobolev space has an inherent curse of dimensionality.

\begin{theorem}[DeVore et al. \cite{devore1989optimal}] \label{theorem: Bernstein lower bound}
Let $p,m\geq 1$. Take $\mathcal X=L^p([0,1]^d)$ and $K = \{f\in W^{r,p} \; : \; |f|_{W^{r,p}}\leq 1\}$ the unit ball of the Sobolev space. Then, there exists $c>0$ such that
\begin{equation*}
    E_N(K)_{\mathcal{X}} \geq c N^{-\frac{r}{d}}.
\end{equation*}
Equivalently, for all $\epsilon>0$, a continuous function approximator approximating the unit Sobolev ball within $\epsilon$ in $L^p$ norm uses at least $\Theta(\epsilon^{-\frac{d}{r}})$ parameters.
\end{theorem}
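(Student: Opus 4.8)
The plan is to bound the Bernstein width $b_N(K)_{\mathcal X}$ from below and then invoke the inequality $E_N(K)_{\mathcal X}\ge b_N(K)_{\mathcal X}$ established in the preceding theorem. To lower-bound $b_N(K)_{\mathcal X}$ I need to exhibit an $(N+1)$-dimensional subspace $X_{N+1}\subset\mathcal X=L^p([0,1]^d)$ and a radius $\rho$ with $\rho\,U(X_{N+1})\subset K$, i.e. such that on $X_{N+1}$ the $W^{r,p}$-norm is controlled by $\rho^{-1}$ times the $L^p$-norm.

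First I would fix a nonzero, $r$-times continuously differentiable function $\psi:\R^d\to\R$ with compact support inside $(0,1)^d$ (a smooth bump). Put $h:=\lceil (N+1)^{1/d}\rceil^{-1}$, tile $[0,1]^d$ by the $m:=\lceil(N+1)^{1/d}\rceil^d\ge N+1$ axis-parallel subcubes $Q_k$ of side $h$ with corners $x_k$, and set $\psi_k(\mb x):=\psi((\mb x-x_k)/h)$, which is supported in $Q_k$, so the $\psi_k$ have pairwise disjoint supports. Let $X_{N+1}$ be any $(N+1)$-dimensional subspace of $\mathrm{span}\{\psi_k:1\le k\le m\}$; passing to a subspace only shrinks any inscribed ball, so this is harmless.

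Second I would compare norms on this span. For $g=\sum_k c_k\psi_k$, disjointness of supports and the change of variables $\mb y=(\mb x-x_k)/h$ give, for every multi-index $\mb\alpha$, $\|D^{\mb\alpha}g\|_p^p=h^{\,d-p|\mb\alpha|_1}\|D^{\mb\alpha}\psi\|_p^p\sum_k|c_k|^p$ (with the obvious maximum version for $p=\infty$); in particular $\|g\|_p^p=h^d\|\psi\|_p^p\sum_k|c_k|^p$. Summing over $|\mb\alpha|_1\le r$ and using $h\le1$ to bound every term by its top-order ($|\mb\alpha|_1=r$) counterpart produces a constant $C=C(\psi,r,d,p)$ with $\|g\|_{W^{r,p}}\le C\,h^{-r}\|g\|_p$ on $X_{N+1}$. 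Hence $\|g\|_p\le\rho:=C^{-1}h^r$ implies $\|g\|_{W^{r,p}}\le1$, i.e. $\rho\,U(X_{N+1})\subset K$. Since $h^{-1}\le 2(N+1)^{1/d}$, this yields $b_N(K)_{\mathcal X}\ge\rho=C^{-1}h^r\ge c\,N^{-r/d}$ for a suitable $c>0$, and combined with $E_N(K)_{\mathcal X}\ge b_N(K)_{\mathcal X}$ this proves the displayed inequality. The equivalent formulation is immediate: a continuous function approximator with $N$ parameters achieves error at least $c\,N^{-r/d}$, so error $\le\epsilon$ forces $N\ge(c/\epsilon)^{d/r}$, and Mhaskar's construction (Theorem \ref{thm: Mhaskar upper bound}), whose number of training parameters is $O(\epsilon^{-d/r})$, shows the exponent is tight.

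The main obstacle is the bookkeeping in the norm comparison: choosing $\psi$ regular enough that $D^{\mb\alpha}\psi\in L^p$ for all $|\mb\alpha|_1\le r$, carrying the scaling exponent $h^{\,d-p|\mb\alpha|_1}$ through correctly, and verifying that for $h\le1$ the order-$r$ derivatives dominate so that a single factor $h^{-r}$ suffices — together with the elementary but necessary dimension count ensuring $m\ge N+1$. Everything else is a direct application of the Bernstein-width lower bound.
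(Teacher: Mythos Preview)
The paper does not give its own proof of this theorem: it is stated as a result of DeVore et al.\ \cite{devore1989optimal} and used as a black box, so there is no ``paper's proof'' to compare against. Your proposal is a correct reconstruction of the standard argument and is exactly the method DeVore et al.\ use: lower-bound the Bernstein width by exhibiting an $(N+1)$-dimensional subspace spanned by disjointly supported rescaled bumps $\psi_k(\mb x)=\psi((\mb x-x_k)/h)$ on a grid of mesh $h\asymp N^{-1/d}$, use the scaling $\|D^{\mb\alpha}\psi_k\|_p^p=h^{\,d-p|\mb\alpha|_1}\|D^{\mb\alpha}\psi\|_p^p$ together with disjoint supports to get $|g|_{W^{r,p}}\le C\,h^{-r}\|g\|_p$ on that subspace, and conclude $b_N(K)_{\mathcal X}\ge c\,h^r\ge c'N^{-r/d}$. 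One minor phrasing quibble: ``passing to a subspace only shrinks any inscribed ball'' is not quite the right picture --- the point is simply that the norm inequality $|g|_{W^{r,p}}\le C h^{-r}\|g\|_p$, once established on $\mathrm{span}\{\psi_k\}$, holds a fortiori on any subspace $X_{N+1}$, so the same radius $\rho=C^{-1}h^r$ works there --- but the conclusion is correct.
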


This lower bounds matches Mhaskar's upper's bound on the number of neurons needed for a neural network to approximate the Sobolev space (Theorem \ref{thm: Mhaskar upper bound}). Mhaskar used one layer neural networks to construct the approximation realizing the bound of Theorem \ref{thm: Mhaskar upper bound}, therefore, the number of training parameter these neural networks is also $\Theta(\epsilon^{-\frac{d}{r}})$, matching DeVore's lower bound. Furthermore, Mhaskar's construction is continuous. Hence, even if neural networks suffer from the curse of dimensionality, they are optimal function approximators of the Sobolev space. 

\paragraph{Approximating Korobov functions.}

We turn now to the space of interest, the Korobov space. We prove a lower bound on the least number of parameters any class of continuous function approximators needs to approximate functions of the Korobov space. 

\begin{theorem}
\label{thm:lower bound korobov}
Take $\mathcal X=L^\infty(\Omega)$ and $K=\{f\in X^{2,\infty}(\Omega) \; : \; |f|_{X^{2,\infty}(\Omega)}\leq 1\}$ the unit ball of the Korobov space. Then, there exists $c>0$ such that
\begin{equation*}
    E_N(K)_{\mathcal{X}} \geq \frac{c}{N^2}(\log N)^{d-1}.
\end{equation*}
Equivalently, for $\epsilon>0$, a continuous function approximator approximating the unit Korobov ball within $\epsilon$ in $L^\infty$ norm uses at least $\Theta(\epsilon^{-\frac{1}{2}}( \log\frac{1}{\epsilon})^{\frac{d-1}{2}})$ parameters.
\end{theorem}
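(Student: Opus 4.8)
The plan is to apply DeVore et al.'s bound $E_N(K)_{\mathcal X}\ge b_N(K)_{\mathcal X}$ and to lower bound the Bernstein width by exhibiting a well-chosen $(N+1)$-dimensional subspace of $\mathcal X = L^\infty(\Omega)$. The natural candidate is a sparse grid space $V_n^{(1)}=\bigoplus_{|\mb l|_1\le n+d-1}W_{\mb l}$, but with one crucial modification: since $|g|_{X^{2,\infty}(\Omega)}$ controls $D^{\mb\alpha}g$ for all $|\mb\alpha|_\infty\le 2$, in particular $\partial^2/\partial x_j^2$, the piecewise-linear hat function cannot be used (its second derivative is a sum of Dirac masses, not in $L^\infty$). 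Instead I would build $V_n^{(1)}$ from a $\mathcal C^2$ mother function $\phi$, e.g. a Deslauriers--Dubuc interpolet of sufficiently high order: it is smooth with bounded derivatives up to order $2$, compactly supported, and its interpolation property $\phi(m)=\delta_{m,0}$ on integers makes every tensor-product basis function $\phi_{\mb l,\mb i}$ vanish on $\partial\Omega$. Hence $V_n^{(1)}\subset X^{2,\infty}(\Omega)$, and its dimension is the purely combinatorial quantity $|U_n^{(1)}| = \Theta(2^n n^{d-1})$, which I set equal to $N+1$.

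The heart of the argument is a Bernstein-type inverse inequality for this space: there is $C=C(d,\phi)$ with $|g|_{X^{2,\infty}(\Omega)}\le C\,2^{2n}n^{d-1}\,\|g\|_\infty$ for all $g\in V_n^{(1)}$. I would prove it in two steps. First, writing $g=\sum_{(\mb l,\mb i)\in U_n^{(1)}}v_{\mb l,\mb i}\phi_{\mb l,\mb i}$, the hierarchical surplus $v_{\mb l,\mb i}$ is produced from $g$ by a fixed, level-independent (de-)hierarchization stencil --- the difference between $g$ at the new grid point and the interpolant from the coarser grid --- whose coefficients are bounded because the Deslauriers--Dubuc interpolation operators have uniformly bounded Lebesgue constants; tensorizing gives $|v_{\mb l,\mb i}|\le C_1(d,\phi)\|g\|_\infty$. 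Second, for $|\mb\alpha|_\infty\le 2$ one has $|D^{\mb\alpha}\phi_{\mb l,\mb i}(\mb x)|\le C_\phi^d\,2^{2|\mb l|_1}$, and at a fixed $\mb x$ only $O(1)$ odd indices $i_j$ contribute per coordinate and per level, so
\[
    |D^{\mb\alpha}g(\mb x)|\le C_2(d,\phi)\,\|g\|_\infty\!\!\sum_{\mb l\ge\mb 1,\ |\mb l|_1\le n+d-1}\!\!\!\! 2^{2|\mb l|_1}
    = C_2(d,\phi)\,\|g\|_\infty\sum_{m=d}^{n+d-1}2^{2m}\binom{m-1}{d-1}.
\]
Since $2^{2m}$ grows geometrically, this sum is dominated up to a constant by its top term $2^{2(n+d-1)}\binom{n+d-2}{d-1}=\Theta(2^{2n}n^{d-1})$, giving the claimed inequality (the case $\mb\alpha=\mb 0$ contributes only $\|g\|_\infty$, which is negligible for large $n$).

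Now set $\rho:=1/(C\,2^{2n}n^{d-1})$. The inverse inequality yields $\rho\,U(V_n^{(1)})\subset K$, hence $b_N(K)_{\mathcal X}\ge\rho$. Because $N+1=|U_n^{(1)}|=\Theta(2^n n^{d-1})$ forces $n\sim\log_2 N$ and $2^{2n}n^{d-1}=(2^n n^{d-1})^2/n^{d-1}=\Theta\!\big(N^2/(\log N)^{d-1}\big)$, we obtain $b_N(K)_{\mathcal X}\ge c\,N^{-2}(\log N)^{d-1}$ for a suitable $c>0$ and all large $N$; general $N$ (not only those of the form $|U_n^{(1)}|-1$) are covered since $b_N$ is non-increasing in $N$ and consecutive dimensions $|U_n^{(1)}|$ differ only by a bounded factor. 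Combined with $E_N(K)_{\mathcal X}\ge b_N(K)_{\mathcal X}$ this is the stated bound, and inverting $c\,N^{-2}(\log N)^{d-1}\le\epsilon$ together with $\log N=\Theta(\log\frac1\epsilon)$ gives the equivalent statement that at least $\Theta(\epsilon^{-1/2}(\log\frac1\epsilon)^{(d-1)/2})$ parameters are necessary.

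The main obstacle is the inverse inequality, and specifically pinning down the exponent $n^{d-1}$ rather than $n^d$. This hinges on (i) the uniform, level-independent bound $|v_{\mb l,\mb i}|\le C(d,\phi)\|g\|_\infty$ on the hierarchical coefficients, which relies on the self-similar structure and bounded Lebesgue constants of the smooth interpolatory basis, and (ii) the fact that the geometric factor $2^{2|\mb l|_1}$ makes the finest level $|\mb l|_1=n+d-1$ dominate the entire sum, so that only the $\binom{n+d-2}{d-1}=\Theta(n^{d-1})$ increments at the top level matter. A secondary technical point is checking that the $\mathcal C^2$ mother function can be chosen so that $V_n^{(1)}$ genuinely sits inside the Korobov space with all basis functions vanishing on $\partial\Omega$, which the Deslauriers--Dubuc interpolet delivers automatically via its interpolation property.
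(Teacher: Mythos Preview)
Your proposal is correct and follows essentially the same route as the paper: apply DeVore et al.'s inequality $E_N(K)_{\mathcal X}\ge b_N(K)_{\mathcal X}$, take $X_{N+1}=V_n^{(1)}$ built from a $\mathcal C^2$ Deslauriers--Dubuc interpolet, bound the hierarchical coefficients uniformly via the interpolatory stencil (the paper makes this explicit as a $5$-point-per-coordinate formula, giving $|v_{\mb l,\mb i}|\le 5^d\|u\|_\infty$), and then sum $2^{2|\mb l|_1}$ over the sparse-grid index set to obtain the inverse inequality with the correct factor $2^{2n}n^{d-1}$. The only cosmetic differences are that the paper fixes the interpolet order to $L=2$ and proves its $\mathcal C^2$ regularity and support $[-3,3]$ directly, whereas you invoke a ``sufficiently high order'' interpolet; both choices work.
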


This lower bound nearly matches the upper bound on the number of training parameters needed by deep and shallow neural networks to approximate the Korobov space within $\epsilon$: $O(\epsilon^{-\frac{1}{2}}(\log \frac{1}{\epsilon})^{\frac{3(d-1)}{2}})$ (Theorem \ref{thm: shallow NN bound.} and Theorem \ref{thm: fixed deep upper bound}). It exhibits the same exponential dependence in $d$ with base $\log \frac{1}{\epsilon}$ and the same main dependence on $\epsilon$ of $\epsilon^{-\frac{1}{2}}$. Note that the upper and lower bound can be rewritten as $O((\frac{1}{\epsilon})^{1/2+\delta})$ for all $\delta>0$. Moreover, our constructions in Theorem \ref{thm: shallow NN bound.} and Theorem \ref{thm: fixed deep upper bound} are continuous, which comes directly from the continuity of the sparse grid parameters (see bound on $v_{\mb l,\mb i}$ in Theorem \ref{thm: sparse grid approx.}). Our bounds prove therefore that deep and shallow neural networks are near optimal classes of function approximators for the Korobov space.

\begin{proof}[Proof of Theorem \ref{thm:lower bound korobov}]
Our goal is to define an appropriate subspace $X_{N+1}$ in order to get a good lower bound on the Bernstein width $b_N(K)_{\mathcal X}$, defined in \eqref{eq: Bernstein width}, which in turn provides a lower bound on the approximation error (Theorem \ref{theorem: Bernstein lower bound}).

To do so, we introduce the Deslaurier-Dubuc interpolet $\phi^{(L)}:\R\to\R$. The construction of this function uses an interpolating scheme on binary rationals. First, $\phi^{(L)}$ is defined on all integers $\phi^{(L)}(k)=\mb{1}_{k=0}$ for $k\in \Z$. Then, we define the function on half integers $\frac{2k+1}{2}$ by fitting a polynomial of degree $2L-1$ interpolating the hat function $\phi$ (defined in \eqref{eq: hat function}) at $k-L+1, \cdots, k+L$. Iteratively, we define the interpolet on binary rationals of the form $\frac{2k+1}{2^{j+1}}$ from the value of the interpolet on rationals with denominator $2^j$. Specifically, let $P_{j,k}^{(L)}$ be the unique polynomial of degree $2L$ interpolating $\phi^{(L)}$ at points $\frac{k'}{2^j}$ for $k'\in\{k-L+1,\cdots, k+L\}$. We set
\begin{equation*}
    \phi^{(L)}\left(\frac{2k+1}{2^{j+1}}\right) := P_{j,k}^{(L)} \left(\frac{2k+1}{2^j}\right).
\end{equation*}
For example, for $L=2$ we get $
    \phi^{(2)}(\frac{2k+1}{2^{j+1}}) := \frac{9}{16}\phi^{(2)} (\frac{k}{2^j})+\frac{9}{16}\phi^{(2)} (\frac{k+1}{2^j}) - \frac{1}{16}\phi^{(2)} (\frac{k-2}{2^j})-\frac{1}{16}\phi^{(2)} (\frac{k+3}{2^j})$.
This process defines the interpolet on binary rationals. We then extend the function to the real line by continuity. See Figure \ref{fig:interpolets} for an illustration. Deslaurier and Dubuc \cite{deslauriers1989symmetric} proved that the regularity of the interpolet is an increasing function of $L$ the degree of interpolation. We now prove results in the case $L=2$.

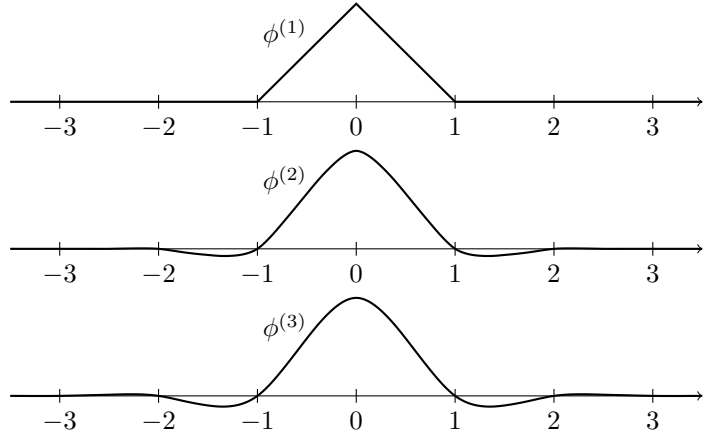
\begin{figure}
    \centering
    \begin{tikzpicture} [scale = 1.3]
    \foreach \y in {0,-1.5,-3}
    \draw[,->] (-3.5,\y) -- (3.5,\y);
    \foreach \x in {-3,-2,-1,0,1,2,3}
    \draw (\x,2pt) -- (\x, -2pt) node[anchor=north] {$\x$};
    
    \foreach \x in {-3,-2,-1,0,1,2,3}
    \draw (\x,2pt-1.5cm) -- (\x, -2pt-1.5 cm) node[anchor=north] {$\x$};
    
    \foreach \x in {-3,-2,-1,0,1,2,3}
    \draw (\x,2pt-3cm) -- (\x, -2pt-3cm) node[anchor=north] {$\x$};
    
    \draw [thick] (-3.5,0)-- (-1,0) -- (0,1) -- (1,0)--(3.5,0);

    \draw  [thick](-3.5,-1.5)-- (-3.0,-1.5) -- (-2.96875,-1.5) -- (-2.9375,-1.5) -- (-2.90625,-1.5000009536743164) -- (-2.875,-1.5) -- (-2.84375,-1.4999914169311523) -- (-2.8125,-1.4999847412109375) -- (-2.78125,-1.4999828338623047) -- (-2.75,-1.5) -- (-2.71875,-1.5000629425048828) -- (-2.6875,-1.5001373291015625) -- (-2.65625,-1.500197410583496) -- (-2.625,-1.500244140625) -- (-2.59375,-1.5002832412719727) -- (-2.5625,-1.500274658203125) -- (-2.53125,-1.5002021789550781) -- (-2.5,-1.5) -- (-2.46875,-1.4995536804199219) -- (-2.4375,-1.498992919921875) -- (-2.40625,-1.4983949661254883) -- (-2.375,-1.497802734375) -- (-2.34375,-1.4972944259643555) -- (-2.3125,-1.4968414306640625) -- (-2.28125,-1.4964466094970703) -- (-2.25,-1.49609375) -- (-2.21875,-1.4957256317138672) -- (-2.1875,-1.4954681396484375) -- (-2.15625,-1.4954252243041992) -- (-2.125,-1.49560546875) -- (-2.09375,-1.4959917068481445) -- (-2.0625,-1.49676513671875) -- (-2.03125,-1.4980087280273438) -- (-2.0,-1.5) -- (-1.96875,-1.5032119750976562) -- (-1.9375,-1.50714111328125) -- (-1.90625,-1.5114755630493164) -- (-1.875,-1.51611328125) -- (-1.84375,-1.5208654403686523) -- (-1.8125,-1.5256805419921875) -- (-1.78125,-1.530508041381836) -- (-1.75,-1.53515625) -- (-1.71875,-1.5393619537353516) -- (-1.6875,-1.5432891845703125) -- (-1.65625,-1.5470266342163086) -- (-1.625,-1.550537109375) -- (-1.59375,-1.5537958145141602) -- (-1.5625,-1.556854248046875) -- (-1.53125,-1.5597038269042969) -- (-1.5,-1.5625) -- (-1.46875,-1.5655403137207031) -- (-1.4375,-1.568389892578125) -- (-1.40625,-1.570775032043457) -- (-1.375,-1.572509765625) -- (-1.34375,-1.5732908248901367) -- (-1.3125,-1.5731964111328125) -- (-1.28125,-1.572183609008789) -- (-1.25,-1.5703125) -- (-1.21875,-1.5678157806396484) -- (-1.1875,-1.5641326904296875) -- (-1.15625,-1.558802604675293) -- (-1.125,-1.5517578125) -- (-1.09375,-1.5430269241333008) -- (-1.0625,-1.5318603515625) -- (-1.03125,-1.5178985595703125) -- (-1.0,-1.5) -- (-0.96875,-1.4762420654296875) -- (-0.9375,-1.4486083984375) -- (-0.90625,-1.4183063507080078) -- (-0.875,-1.3857421875) -- (-0.84375,-1.3516979217529297) -- (-0.8125,-1.316253662109375) -- (-0.78125,-1.2795524597167969) -- (-0.75,-1.2421875) -- (-0.71875,-1.2049140930175781) -- (-0.6875,-1.167388916015625) -- (-0.65625,-1.1294841766357422) -- (-0.625,-1.09130859375) -- (-0.59375,-1.0528736114501953) -- (-0.5625,-1.01434326171875) -- (-0.53125,-0.9758834838867188) -- (-0.5,-0.9375) -- (-0.46875,-0.8991165161132812) -- (-0.4375,-0.86114501953125) -- (-0.40625,-0.8238773345947266) -- (-0.375,-0.78759765625) -- (-0.34375,-0.7527065277099609) -- (-0.3125,-0.719146728515625) -- (-0.28125,-0.6870307922363281) -- (-0.25,-0.65625) -- (-0.21875,-0.6264152526855469) -- (-0.1875,-0.598480224609375) -- (-0.15625,-0.5732173919677734) -- (-0.125,-0.55078125) -- (-0.09375,-0.5311985015869141) -- (-0.0625,-0.515625) -- (-0.03125,-0.504638671875) -- (0.0,-0.5) -- (0.03125,-0.504638671875) -- (0.0625,-0.515625) -- (0.09375,-0.5311985015869141) -- (0.125,-0.55078125) -- (0.15625,-0.5732173919677734) -- (0.1875,-0.598480224609375) -- (0.21875,-0.6264152526855469) -- (0.25,-0.65625) -- (0.28125,-0.6870307922363281) -- (0.3125,-0.719146728515625) -- (0.34375,-0.7527065277099609) -- (0.375,-0.78759765625) -- (0.40625,-0.8238773345947266) -- (0.4375,-0.86114501953125) -- (0.46875,-0.8991165161132812) -- (0.5,-0.9375) -- (0.53125,-0.9758834838867188) -- (0.5625,-1.01434326171875) -- (0.59375,-1.0528736114501953) -- (0.625,-1.09130859375) -- (0.65625,-1.1294841766357422) -- (0.6875,-1.167388916015625) -- (0.71875,-1.2049140930175781) -- (0.75,-1.2421875) -- (0.78125,-1.2795524597167969) -- (0.8125,-1.316253662109375) -- (0.84375,-1.3516979217529297) -- (0.875,-1.3857421875) -- (0.90625,-1.4183063507080078) -- (0.9375,-1.4486083984375) -- (0.96875,-1.4762420654296875) -- (1.0,-1.5) -- (1.03125,-1.5178985595703125) -- (1.0625,-1.5318603515625) -- (1.09375,-1.5430269241333008) -- (1.125,-1.5517578125) -- (1.15625,-1.558802604675293) -- (1.1875,-1.5641326904296875) -- (1.21875,-1.5678157806396484) -- (1.25,-1.5703125) -- (1.28125,-1.572183609008789) -- (1.3125,-1.5731964111328125) -- (1.34375,-1.5732908248901367) -- (1.375,-1.572509765625) -- (1.40625,-1.570775032043457) -- (1.4375,-1.568389892578125) -- (1.46875,-1.5655403137207031) -- (1.5,-1.5625) -- (1.53125,-1.5597038269042969) -- (1.5625,-1.556854248046875) -- (1.59375,-1.5537958145141602) -- (1.625,-1.550537109375) -- (1.65625,-1.5470266342163086) -- (1.6875,-1.5432891845703125) -- (1.71875,-1.5393619537353516) -- (1.75,-1.53515625) -- (1.78125,-1.530508041381836) -- (1.8125,-1.5256805419921875) -- (1.84375,-1.5208654403686523) -- (1.875,-1.51611328125) -- (1.90625,-1.5114755630493164) -- (1.9375,-1.50714111328125) -- (1.96875,-1.5032119750976562) -- (2.0,-1.5) -- (2.03125,-1.4980087280273438) -- (2.0625,-1.49676513671875) -- (2.09375,-1.4959917068481445) -- (2.125,-1.49560546875) -- (2.15625,-1.4954252243041992) -- (2.1875,-1.4954681396484375) -- (2.21875,-1.4957256317138672) -- (2.25,-1.49609375) -- (2.28125,-1.4964466094970703) -- (2.3125,-1.4968414306640625) -- (2.34375,-1.4972944259643555) -- (2.375,-1.497802734375) -- (2.40625,-1.4983949661254883) -- (2.4375,-1.498992919921875) -- (2.46875,-1.4995536804199219) -- (2.5,-1.5) -- (2.53125,-1.5002021789550781) -- (2.5625,-1.500274658203125) -- (2.59375,-1.5002832412719727) -- (2.625,-1.500244140625) -- (2.65625,-1.500197410583496) -- (2.6875,-1.5001373291015625) -- (2.71875,-1.5000629425048828) -- (2.75,-1.5) -- (2.78125,-1.4999828338623047) -- (2.8125,-1.4999847412109375) -- (2.84375,-1.4999914169311523) -- (2.875,-1.5) -- (2.90625,-1.5000009536743164) -- (2.9375,-1.5) -- (2.96875,-1.5) -- (3.0,-1.5) -- (3.5,-1.5);

    \draw [thick] (-3.5,-3.0) -- (-3.46875,-3.000126521508723) -- (-3.4375,-3.0002796533496485) -- (-3.40625,-3.000446856397567) -- (-3.375,-3.000616910623168) -- (-3.34375,-3.000777686663762) -- (-3.3125,-3.000923288569151) -- (-3.28125,-3.001048236276683) -- (-3.25,-3.00114441347656) -- (-3.21875,-3.001204202920352) -- (-3.1875,-3.0012197822478903) -- (-3.15625,-3.001183094199434) -- (-3.125,-3.001090411996459) -- (-3.09375,-3.0009378142597503) -- (-3.0625,-3.0007126752673776) -- (-3.03125,-3.0004040746145746) -- (-3.0,-3) -- (-2.96875,-2.9994916286432756) -- (-2.9375,-2.99889280993443) -- (-2.90625,-2.9982217548957633) -- (-2.875,-2.9974876460967668) -- (-2.84375,-2.9967029220796886) -- (-2.8125,-2.9958908146749246) -- (-2.78125,-2.9950725861803) -- (-2.75,-2.99427792324216) -- (-2.71875,-2.993531906295912) -- (-2.6875,-2.992816097662295) -- (-2.65625,-2.992106485532408) -- (-2.625,-2.9913960151946473) -- (-2.59375,-2.990674613659308) -- (-2.5625,-2.9899248867825508) -- (-2.53125,-2.989135872001794) -- (-2.5,-2.9882812) -- (-2.46875,-2.987345741175541) -- (-2.4375,-2.9863891174583688) -- (-2.40625,-2.985479684837437) -- (-2.375,-2.984684533459419) -- (-2.34375,-2.984073342747989) -- (-2.3125,-2.9836711563291245) -- (-2.28125,-2.9835015354768926) -- (-2.25,-2.98359677636716) -- (-2.21875,-2.9839891128744545) -- (-2.1875,-2.984743600258931) -- (-2.15625,-2.985929676324001) -- (-2.125,-2.987581360809295) -- (-2.09375,-2.9897346761265036) -- (-2.0625,-2.9924734809353257) -- (-2.03125,-2.995866620597123) -- (-2.0,-3) -- (-1.96875,-3.0049329218107337) -- (-1.9375,-3.0105511090775847) -- (-1.90625,-3.0167110616254686) -- (-1.875,-3.0233048544128005) -- (-1.84375,-3.0302062746177145) -- (-1.8125,-3.0372913507699932) -- (-1.78125,-3.0444428297228696) -- (-1.75,-3.05149840175776) -- (-1.71875,-3.0583111482282326) -- (-1.6875,-3.064867971580449) -- (-1.65625,-3.0711708037511305) -- (-1.625,-3.077185078588809) -- (-1.59375,-3.082883829399171) -- (-1.5625,-3.08822544288198) -- (-1.53125,-3.093155161118812) -- (-1.5,-3.0976562) -- (-1.46875,-3.1016910785987424) -- (-1.4375,-3.1050886442123584) -- (-1.40625,-3.107665473898055) -- (-1.375,-3.10922853833914) -- (-1.34375,-3.1095846316451694) -- (-1.3125,-3.1086509755777842) -- (-1.28125,-3.106347175558019) -- (-1.25,-3.10258482675776) -- (-1.21875,-3.0972717965095677) -- (-1.1875,-3.0901912788772505) -- (-1.15625,-3.0811123645104384) -- (-1.125,-3.069901419888234) -- (-1.09375,-3.056419344146189) -- (-1.0625,-3.0404198188398883) -- (-1.03125,-3.0216990338718825) -- (-1.0,-3) -- (-0.96875,-2.9751388087857307) -- (-0.9375,-2.947392622861898) -- (-0.90625,-2.917113925317901) -- (-0.875,-2.88458400170536) -- (-0.84375,-2.8501287392540497) -- (-0.8125,-2.814014017802221) -- (-0.78125,-2.7764936053338385) -- (-0.75,-2.73792963652316) -- (-0.71875,-2.6986564193481803) -- (-0.6875,-2.6587859274648995) -- (-0.65625,-2.6184078949810354) -- (-0.625,-2.577651484279625) -- (-0.59375,-2.5366349830563975) -- (-0.5625,-2.495562414356852) -- (-0.53125,-2.454648424414504) -- (-0.5,-2.414062) -- (-0.46875,-2.373983608006993) -- (-0.4375,-2.3346741702860134) -- (-0.40625,-2.2963908782788627) -- (-0.375,-2.25942847183127) -- (-0.34375,-2.224067885761685) -- (-0.3125,-2.1904625582598256) -- (-0.28125,-2.158760885712778) -- (-0.25,-2.1291035521481603) -- (-0.21875,-2.1016370394716386) -- (-0.1875,-2.0767395658382135) -- (-0.15625,-2.0548063898841424) -- (-0.125,-2.036101952934985) -- (-0.09375,-2.0208919770887763) -- (-0.0625,-2.0095720933359287) -- (-0.03125,-2.0024694666178253) -- (0.0,-2) -- (0.03125,-2.0024694666178253) -- (0.0625,-2.0095720933359287) -- (0.09375,-2.020891977088776) -- (0.125,-2.036101952934985) -- (0.15625,-2.0548063898841424) -- (0.1875,-2.0767395658382135) -- (0.21875,-2.1016370394716386) -- (0.25,-2.1291035521481603) -- (0.28125,-2.1587608857127782) -- (0.3125,-2.1904625582598256) -- (0.34375,-2.224067885761685) -- (0.375,-2.2594284718312694) -- (0.40625,-2.2963908782788622) -- (0.4375,-2.3346741702860134) -- (0.46875,-2.373983608006993) -- (0.5,-2.414062) -- (0.53125,-2.454648424414504) -- (0.5625,-2.4955624143568516) -- (0.59375,-2.5366349830563975) -- (0.625,-2.577651484279625) -- (0.65625,-2.6184078949810354) -- (0.6875,-2.6587859274648995) -- (0.71875,-2.6986564193481803) -- (0.75,-2.7379296365231602) -- (0.78125,-2.7764936053338385) -- (0.8125,-2.814014017802221) -- (0.84375,-2.8501287392540497) -- (0.875,-2.88458400170536) -- (0.90625,-2.917113925317901) -- (0.9375,-2.947392622861898) -- (0.96875,-2.9751388087857307) -- (1.0,-3) -- (1.03125,-3.0216990338718825) -- (1.0625,-3.0404198188398883) -- (1.09375,-3.056419344146189) -- (1.125,-3.069901419888234) -- (1.15625,-3.0811123645104384) -- (1.1875,-3.0901912788772505) -- (1.21875,-3.0972717965095677) -- (1.25,-3.10258482675776) -- (1.28125,-3.106347175558019) -- (1.3125,-3.1086509755777842) -- (1.34375,-3.1095846316451694) -- (1.375,-3.10922853833914) -- (1.40625,-3.107665473898055) -- (1.4375,-3.1050886442123584) -- (1.46875,-3.1016910785987424) -- (1.5,-3.0976562) -- (1.53125,-3.093155161118812) -- (1.5625,-3.08822544288198) -- (1.59375,-3.082883829399171) -- (1.625,-3.077185078588809) -- (1.65625,-3.0711708037511305) -- (1.6875,-3.064867971580449) -- (1.71875,-3.0583111482282326) -- (1.75,-3.05149840175776) -- (1.78125,-3.0444428297228696) -- (1.8125,-3.0372913507699932) -- (1.84375,-3.0302062746177145) -- (1.875,-3.0233048544128005) -- (1.90625,-3.0167110616254686) -- (1.9375,-3.0105511090775847) -- (1.96875,-3.0049329218107337) -- (2.0,-3) -- (2.03125,-2.995866620597123) -- (2.0625,-2.9924734809353257) -- (2.09375,-2.9897346761265036) -- (2.125,-2.987581360809295) -- (2.15625,-2.985929676324001) -- (2.1875,-2.984743600258931) -- (2.21875,-2.9839891128744545) -- (2.25,-2.98359677636716) -- (2.28125,-2.9835015354768926) -- (2.3125,-2.9836711563291245) -- (2.34375,-2.984073342747989) -- (2.375,-2.984684533459419) -- (2.40625,-2.985479684837437) -- (2.4375,-2.9863891174583688) -- (2.46875,-2.987345741175541) -- (2.5,-2.9882812) -- (2.53125,-2.989135872001794) -- (2.5625,-2.9899248867825508) -- (2.59375,-2.990674613659308) -- (2.625,-2.9913960151946473) -- (2.65625,-2.992106485532408) -- (2.6875,-2.992816097662295) -- (2.71875,-2.993531906295912) -- (2.75,-2.99427792324216) -- (2.78125,-2.9950725861803) -- (2.8125,-2.9958908146749246) -- (2.84375,-2.9967029220796886) -- (2.875,-2.9974876460967668) -- (2.90625,-2.9982217548957633) -- (2.9375,-2.99889280993443) -- (2.96875,-2.9994916286432756) -- (3.0,-3) -- (3.03125,-3.0004040746145746) -- (3.0625,-3.0007126752673776) -- (3.09375,-3.0009378142597503) -- (3.125,-3.001090411996459) -- (3.15625,-3.001183094199434) -- (3.1875,-3.0012197822478903) -- (3.21875,-3.001204202920352) -- (3.25,-3.00114441347656) -- (3.28125,-3.001048236276683) -- (3.3125,-3.000923288569151) -- (3.34375,-3.000777686663762) -- (3.375,-3.000616910623168) -- (3.40625,-3.000446856397567) -- (3.4375,-3.0002796533496485) -- (3.46875,-3.000126521508723) -- (3.5,-3.0);
    
    \foreach \y in {1,2,3}
    \draw (-0.4,2.2-\y*1.5) node[left] {$\phi^{(\y)}$};
    \end{tikzpicture}
    \caption{Deslaurier-Dubuc interpolets of degree 1, 2 and 3.}
    \label{fig:interpolets}
\end{figure}

\begin{restatable}{lemma}{LemmaRegularityInterpolet}
\label{prop:regularity interpolet L=2}
The interpolet of degree $2$, $\phi^{(2)}$ is $\mathcal C^2$ and has support 
$
    Supp\left(\phi^{(2)}\right) = [-3,3].
$
\end{restatable}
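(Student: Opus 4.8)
The plan is to realize $\phi^{(2)}$ as the limit function of the classical four-point Deslauriers--Dubuc interpolatory subdivision scheme, and then to read off both the support and the regularity from the structure of the subdivision mask.

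\textbf{Reformulation.} I would first set $v^{(j)}_m := \phi^{(2)}(m\,2^{-j})$; the interpolating construction says precisely that $v^{(0)}_m = \mathbf 1_{\{m=0\}}$ and that for every $j\ge 0$,
\begin{equation*}
v^{(j+1)}_{2m} = v^{(j)}_m,\qquad v^{(j+1)}_{2m+1} = -\tfrac{1}{16}\bigl(v^{(j)}_{m-1}+v^{(j)}_{m+2}\bigr)+\tfrac{9}{16}\bigl(v^{(j)}_{m}+v^{(j)}_{m+1}\bigr),
\end{equation*}
the weights $-\tfrac1{16},\tfrac9{16},\tfrac9{16},-\tfrac1{16}$ being the values at the midpoint of the cubic Lagrange interpolant through four consecutive equispaced nodes; $\phi^{(2)}$ is the continuous extension of the $v^{(j)}$ (whose existence is part of the claim). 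The mask symbol $a(z) = -\tfrac1{16}z^{-3}+\tfrac9{16}z^{-1}+1+\tfrac9{16}z-\tfrac1{16}z^3$ factors, by a one-line polynomial division, as $a(z) = -\tfrac1{16}z^{-3}(1+z)^4(z^2-4z+1)$; the factor $(1+z)^4$ expresses exactness of the scheme on cubic polynomials and is what drives the smoothness.

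\textbf{Support.} First I would prove, by an easy induction on $j$, that $v^{(j)}_m = 0$ whenever $|m|>3(2^j-1)$: it holds at $j=0$, and at level $j+1$ the even points copy level $j$ while each odd point $v^{(j+1)}_{2m+1}$ sees only $v^{(j)}_{m-1},\dots,v^{(j)}_{m+2}$, so combined with the symmetry $v^{(j)}_{-m}=v^{(j)}_m$ the bound propagates. Since $3(2^j-1)2^{-j}\uparrow 3$, continuity of $\phi^{(2)}$ forces $\phi^{(2)}\equiv 0$ outside $[-3,3]$. Conversely, for $x\in(2,3]$ all but the extreme term of the refinement relation $\phi^{(2)}(x)=\sum_m a_m\phi^{(2)}(2x-m)$ vanish (their arguments leave $[-3,3]$), giving the self-similar identity $\phi^{(2)}(3-t) = -\tfrac1{16}\phi^{(2)}(3-2t)$ for $t\in[0,1)$; iterating it down from $\phi^{(2)}(3/2) = -\tfrac1{16}$ yields $\phi^{(2)}\bigl(3-3\cdot 2^{-j}\bigr)\ne 0$ for every $j\ge1$, a sequence accumulating at $3$, and symmetrically at $-3$. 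Hence the support of $\phi^{(2)}$ is exactly $[-3,3]$.

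\textbf{Regularity.} For $\phi^{(2)}\in\mathcal C^2$ I would invoke the standard smoothness criterion for stationary subdivision schemes (Cavaretta--Dahmen--Micchelli; Dyn--Levin): since $z^3 a(z)$ carries the factor $(1+z)^4$, one forms successively the derived schemes for the first, second and third suitably rescaled finite differences, and $\phi^{(2)}\in\mathcal C^2$ as soon as some power of the third-difference operator $S$ is a strict $\ell^\infty$-contraction. Stripping three factors $(1+z)$ off $a$, the mask of $S$ has symbol $-\tfrac12(1+z)(z^2-4z+1) = -\tfrac12(z^3-3z^2-3z+1)$, i.e. coefficients $(-\tfrac12,\tfrac32,\tfrac32,-\tfrac12)$. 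A single application of $S$ has $\ell^\infty$-norm $2$, so it is not contractive; one therefore computes the mask of $S^L$ (i.e. convolves the symbol with its first $L-1$ dyadic dilates), sums absolute values over each residue class modulo $2^L$, and checks that the maximum drops below $1$ for a small fixed $L$. This is a finite, routine computation, and it simultaneously bounds $\|\phi^{(2)}\|_\infty,\ \|(\phi^{(2)})'\|_\infty,\ \|(\phi^{(2)})''\|_\infty$, which is what the proof of Theorem~\ref{thm:lower bound korobov} ultimately needs. (Alternatively one simply cites Deslauriers--Dubuc's explicit regularity count, which gives Hölder exponent $\approx 2.83>2$ for $L=2$.)

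The hard part will be precisely this last contractivity estimate for the third-difference scheme: because the one-step operator already has norm exceeding $1$, one has to pass to iterates $S^L$ and carry out the (finite but slightly laborious) norm computation, or else import the sharp Hölder regularity of the interpolet from the subdivision/wavelet literature. Everything else — the subdivision reformulation, the support inclusion, and the boundary self-similarity — is essentially bookkeeping.
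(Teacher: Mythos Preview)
Your proposal is correct and rests on the same mask factorisation as the paper, but the regularity step is organised differently. The paper works on the Fourier side: it writes the mask symbol as $P^{(2)}(\theta)=\bigl[\sin\theta/\sin(\theta/2)\bigr]^4 S(\theta)$ with $S(\theta)=\tfrac14-\tfrac1{16}(e^{i\theta}+e^{-i\theta})$, forms the $3\times 3$ transfer matrix $B=[s_{j-2k}]_{-1\le j,k\le 1}$ from the coefficients of $S$, reads off its spectral radius $r=\tfrac14$, and invokes Deslauriers--Dubuc's own criterion (their Theorem~7.11) giving regularity $\lfloor-\log_2 r\rfloor=2$. This is a single eigenvalue computation rather than the iterated $\ell^\infty$-contractivity check you propose for the third-difference scheme; in effect the paper factors out one more power of $(1+z)$ than you do and tests the residual by spectral radius instead of operator norm, which sidesteps the laborious iterate computation you flag as the hard part. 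Your support argument is substantially more detailed than the paper's, which simply asserts $\mathrm{Supp}(\phi^{(L)})=[-2L+1,2L-1]$ without proof. One caution: the H\"older exponent $\approx 2.83$ you quote for the four-point scheme looks too large --- the literature values are close to~$2$ --- so if you go the citation route, double-check the number; your main contractivity argument is unaffected.
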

\begin{proof}
See Appendix \ref{appedix sec: proof of lower bound sec}.
\end{proof}
We will now use the interpolate $\phi^{(2)}$ to construct the subspace $X_{N+1}$. Using the sparse grids approach, we can construct a hierarchical basis in $X^{2,\infty}(\Omega)$ using the interpolate  $\phi^{(2)}$ as mother function. In the remaining of the proof, we will write $\phi$ instead of $\phi^{(2)}$ for simplicity, and use the notations and definitions related to sparse grids, introduced in Subsection \ref{subsection: sparse grids}. Because $E_N(K)$ is decreasing in $N$, it suffices to show the result for $N_n$ when we define our space $X_{N+1}$ to be exactly the approximating space $V^{(1)}_n$ of sparse grids $X_{N_n+1} := V^{(1)}_n$. The following  equation establishes the relation between $n$ and $N_n$. In the following, for simplicity, we will write $N$ instead of $N_n$.
\begin{equation*}
    N=dim(V^{(1)}_n)-1 = \sum_{|\bold l|_1\leq n+d-1} 2^{|\mb l|_1-\mb 1}-1 =\sum_{i=0}^{n-1}2^{i-d}\binom{d-1+i}{d-1}-1 = 2^n\cdot  \left(\frac{n^{d-1}}{(d-1)!}+ O(n^{d-2})\right).
\end{equation*}
First, let us give some properties about the subspace $ X_{N+1}$.

\begin{restatable}{proposition}{PropStencilForCoefficients}
\label{prop:stencil for coefficients}
Let $u\in  X_{N+1}$ and write it in decomposed form $u = \sum_{\mb l, \mb i} v_{\mb l,\mb i}\cdot \phi_{\mb l,\mb i}$, where the sum is taken over all multi-indices corresponding to basis functions of $ X_{N+1}$. The coefficients $v_{\mb l,\mb i}$ can be computed in the following way.
\begin{equation*}
    v_{\mb l,\mb i} = \left(\prod_{j=1}^d I_{l_j,i_j}\right)u =: I_{\mb l,\mb i}u
\end{equation*}
where $ I_{l_j,i_j}u = u(\frac{i_j}{2^j})- \frac{9}{16}u (\frac{i_j-1}{2^j})-\frac{9}{16}u (\frac{i_j+1}{2^j}) + \frac{1}{16}u (\frac{i_j-3}{2^j})+\frac{1}{16}u (\frac{i_j+3}{2^j})$. Here, $I_{\mb l,\mb i}$ denotes the $d-$dimensional stencil which gives a linear combination of values of $u$ at $5^d$ nodal points. 
\end{restatable}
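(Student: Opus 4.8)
The plan is to prove the dual identity $I_{\mb l,\mb i}\phi_{\mb l',\mb i'}=\delta_{\mb l,\mb l'}\delta_{\mb i,\mb i'}$ for all relevant index pairs, and then read off the coefficients. Indeed, for $u\in X_{N+1}$ the decomposition $u=\sum_{(\mb l',\mb i')}v_{\mb l',\mb i'}\phi_{\mb l',\mb i'}$ is a finite sum and is unique (the hierarchical basis functions are linearly independent, see Subsection~\ref{subsection: sparse grids}), so applying the linear functional $I_{\mb l,\mb i}$ term by term would give $I_{\mb l,\mb i}u=\sum_{(\mb l',\mb i')}v_{\mb l',\mb i'}I_{\mb l,\mb i}\phi_{\mb l',\mb i'}=v_{\mb l,\mb i}$. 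Throughout I would regard $u$ and each $\phi_{\mb l,\mb i}$ as a function on all of $\R^d$ via $\phi_{l_j,i_j}(x)=\phi^{(2)}(2^{l_j}x-i_j)$, so that the $5^d$ point evaluations defining $I_{\mb l,\mb i}$ make sense even when an abscissa $\tfrac{i_j\pm3}{2^{l_j}}$ leaves $[0,1]$. Finally, since $I_{\mb l,\mb i}=\prod_{j=1}^d I_{l_j,i_j}$ is a tensor product of one-dimensional stencils acting on separate coordinates and $\phi_{\mb l',\mb i'}(\mb x)=\prod_{j}\phi_{l'_j,i'_j}(x_j)$, one has $I_{\mb l,\mb i}\phi_{\mb l',\mb i'}=\prod_{j=1}^d\big(I_{l_j,i_j}\phi_{l'_j,i'_j}\big)$, and it suffices to prove the one-dimensional statement $I_{l,i}\phi_{l',i'}=\delta_{l,l'}\delta_{i,i'}$ (for $l,l'\geq1$ and $i,i'$ odd in the admissible ranges).

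For the one-dimensional identity I would split into cases on the sign of $l'-l$, using that $I_{l,i}$ evaluates its argument at the five points $\tfrac{i+\epsilon}{2^l}$, $\epsilon\in\{0,\pm1,\pm3\}$, with weights $1,-\tfrac{9}{16},-\tfrac{9}{16},\tfrac{1}{16},\tfrac{1}{16}$, and that $\phi^{(2)}(k)=\mb{1}_{\{k=0\}}$ for $k\in\Z$. \emph{If $l'=l$}, then $\phi_{l,i'}(\tfrac{i+\epsilon}{2^l})=\phi^{(2)}(i+\epsilon-i')$; since $i,i'$ are odd, this argument is even only for $\epsilon=0$, where it equals $\phi^{(2)}(i-i')=\delta_{i,i'}$, while for $\epsilon=\pm1,\pm3$ it is an odd integer, hence nonzero, hence $\phi^{(2)}$ vanishes; so $I_{l,i}\phi_{l,i'}=\delta_{i,i'}$. \emph{If $l'>l$}, then $2^{l'}\tfrac{i+\epsilon}{2^l}-i'=(i+\epsilon)2^{l'-l}-i'$ is (even)$-$(odd), an odd integer, hence nonzero, so all five evaluations of $\phi_{l',i'}$ vanish and $I_{l,i}\phi_{l',i'}=0$.

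The remaining case $l'<l$ is where the real work lies. Putting $p:=l-l'\geq1$ and writing $i-i'2^{p}=2s+1$ (odd, since $i$ is odd and $i'2^{p}$ is even), the affine map $x\mapsto 2^{l'}x-i'$ sends the five abscissae of $I_{l,i}$ to $\tfrac{2s+1}{2^{p}}$, $\tfrac{2s}{2^{p}}=\tfrac{s}{2^{p-1}}$, $\tfrac{2s+2}{2^{p}}=\tfrac{s+1}{2^{p-1}}$, $\tfrac{2s-2}{2^{p}}=\tfrac{s-1}{2^{p-1}}$, $\tfrac{2s+4}{2^{p}}=\tfrac{s+2}{2^{p-1}}$, so that
\begin{equation*}
I_{l,i}\phi_{l',i'}=\phi^{(2)}(\tfrac{2s+1}{2^{p}})-\tfrac{9}{16}\phi^{(2)}(\tfrac{s}{2^{p-1}})-\tfrac{9}{16}\phi^{(2)}(\tfrac{s+1}{2^{p-1}})+\tfrac{1}{16}\phi^{(2)}(\tfrac{s-1}{2^{p-1}})+\tfrac{1}{16}\phi^{(2)}(\tfrac{s+2}{2^{p-1}}).
\end{equation*}
The subtracted combination is precisely the value at the new binary rational $\tfrac{2s+1}{2^{p}}$ of the cubic polynomial interpolating $\phi^{(2)}$ at the four coarser nodes $\tfrac{s-1}{2^{p-1}},\tfrac{s}{2^{p-1}},\tfrac{s+1}{2^{p-1}},\tfrac{s+2}{2^{p-1}}$ (its Lagrange evaluation weights being $-\tfrac{1}{16},\tfrac{9}{16},\tfrac{9}{16},-\tfrac{1}{16}$), which by the very construction of $\phi^{(2)}$ equals $\phi^{(2)}(\tfrac{2s+1}{2^{p}})$; hence $I_{l,i}\phi_{l',i'}=0$. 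Combining the three cases establishes $I_{l,i}\phi_{l',i'}=\delta_{l,l'}\delta_{i,i'}$, and the tensorization above then completes the proof.

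The main obstacle is this last case: one has to recognise the five-point combination $I_{l,i}$ — whose nodes $\tfrac{i+\epsilon}{2^l}$ are naturally described at the \emph{fine} level $l$ — as a single application of the refinement rule defining $\phi^{(2)}$, checking that the offsets $0,\pm1,\pm3$ at level $l$ are exactly the new dyadic point together with its four coarse-level ($l-1$) neighbours, and that the weights $1,-\tfrac{9}{16},-\tfrac{9}{16},\tfrac{1}{16},\tfrac{1}{16}$ are exactly those of the underlying degree-$3$ Lagrange interpolation. The other two cases and the passage to $d$ dimensions are routine parity and linearity bookkeeping.
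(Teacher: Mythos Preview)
Your proposal is correct and follows essentially the same strategy as the paper: reduce to the one-dimensional biorthogonality $I_{l,i}\phi_{l',i'}=\delta_{l,l'}\delta_{i,i'}$ via the tensor structure, handle the cases $l'\geq l$ by observing that the stencil nodes are integer points where $\phi^{(2)}$ vanishes, and for $l'<l$ invoke the Deslaurier--Dubuc refinement rule to see that the stencil annihilates $\phi_{l',i'}$. Your write-up is more explicit than the paper's (which compresses all three cases into two sentences), but the argument is the same.
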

\begin{proof}
See Appendix \ref{appedix sec: proof of lower bound sec}.
\end{proof}
Note that the stencil representation of the coefficients gives directly $|v_{\mb l,\mb i}|\leq 5^d \|u\|_\infty$. We are now ready to make our estimates. The goal is to compute $\sup\{\rho: \; \rho U( X_{N+1})\subset K\}$, which will lead to a bound on $b_N(K)_{\mathcal X}$. In order to do that, it suffices to upper bound the Korobov norm by the $L^\infty$ norm for elements of $X_{N+1}$. In fact, if $\Gamma_d>0$ satisfies for all $u\in U(X_{N+1})$,  $|u|_{X^{2,\infty}} \leq \Gamma_d \|u\|_{\infty}$, then $b_N(K)_{\mathcal X} \geq 1/\Gamma_d$. 

Now take $u \in  X_{N+1}$ and let us write $u = \sum_{\mb l, \mb i} v_{\mb l,\mb i}\cdot \phi_{\mb l,\mb i}$. Note that basis functions in the same hierarchical class $W_{\mb l}$ are almost disjoint. More precisely, at each point $\mb x\in \Omega$, at most $3^d$ basis functions $\phi_{\mb l,\mb i}$ have non-zero contribution to $u(\mb x)$. Therefore, for any $\mb 0\leq \mb\alpha\leq 2\cdot \mb 1$,
\begin{align*}
    \|D^{\mb \alpha }u\|_{\infty} &\leq \sum_{|\mb l|_1\leq n+d-1} 3^d \max_{\mb 1\leq \mb i\leq 2^{\mb l}-\mb 1, \; \mb i \text{ odd}} |v_{\mb l,\mb i}| \cdot 2^{\langle \mb\alpha,\mb l\rangle}\|D^{\mb\alpha}\phi\|_\infty \\
    &= 60^d |\phi|_{X^{2,\infty}(\Omega)} \|u\|_\infty\cdot \sum_{i=0}^{n-1} 2^{2i} \binom{d-1+i}{d-1}\\
    &= \frac{60^d}{(d-1)!}|\phi|_{X^{2,\infty}(\Omega)} \cdot 2^{2n}\left(n^{d-1}+O(n^{d-2})\right) \|u\|_\infty.
\end{align*}
Finally, denoting by $C_d$ the constant $\frac{60^d}{ (d-1)!}|\phi|_{X^{2,\infty}(\Omega)}$, we get for $n$ sufficiently large
\begin{equation*}
    b_N(K)_{\mathcal X}\geq \frac{1}{2C_d}\cdot
    \frac{1}{2^{2n}\cdot n^{d-1}}.
\end{equation*}
Furthermore, recall $N = \frac{1}{(d-1)!}2^n\cdot n^{d-1}\cdot\left(1+O\left(\frac{1}{n}\right)\right).$ Therefore, $n\sim\frac{\log N}{\log 2}$, and $2^n\sim (d-1)!(\log 2)^{d-1}\cdot\frac{N}{(\log N)^{d-1}}.$ Finally we obtain for some constant $c_d>0$,
\begin{equation*}
    b_{N}(K)_{\mathcal X} \geq c_d \frac{1}{N^2}(\log N)^{d-1}.
\end{equation*}
We conclude by analyzing the minimum number of parameters in order to get an $\epsilon-$approximation of the Korobov unit ball $K$. Define $n_\epsilon:=\min\{n: \; \frac{2}{C_d} \frac{1}{2^{2n}\cdot n^{d-1}} \leq \epsilon\}$. This yields $n_\epsilon \sim \frac{1}{2\log 2}\log\frac{1}{\epsilon}$, and $2^n\sim \sqrt{\frac{2}{C_d}}(2\log 2)^{d-1}\frac{1}{\sqrt \epsilon \cdot(\log\frac{1}{\epsilon})^{\frac{d-1}{2}}}$. The number of needed parameters to obtain an $\epsilon$ approximation $K$ is therefore
\begin{equation*}
    N_\epsilon \sim \tilde C_d \cdot \frac{1}{\sqrt\epsilon }\left(\log \frac{1}{\epsilon}\right)^{\frac{d-1}{2}},
\end{equation*}
for some constant $\tilde C_d>0$.
\end{proof}

\paragraph{Approximating separable functions.}
We conclude this section by proving another optimality result for neural networks' approximation power, for the set of separable functions with regularity $r$.

\begin{equation*}
    \mathcal{S}^r = 
    \{ f \in W^{r,\infty}(\Omega)
    \; : \; 
    \exists g_1,\ldots, g_d \in \mathbb{R}^{\Omega}, \; \forall \bold{x}\in \Omega, \;
    f(\bold{x}) = \sum_{i=1}^d g_i(x_i)
    \}
\end{equation*}
This set of functions is motivated by Kolmogorov–Arnold representation theorem \cite{kolmogorov1957representation} that states that every multivariate continuous function can be represented as a superposition of continuous functions of one variable. However the functions involved in the construction are rather pathological. To approximate separable functions, it suffices to approximate each of the $d$ functions composing it. Using Mhaskar's result, Theorem \ref{thm: Mhaskar upper bound}, we can approximate each of these functions within $\epsilon/d$ by a neural network with $O(d^{1/r}\epsilon^{-1/r}) = O(\epsilon^{-1/r})$ parameters. Combining these approximations, we need  $O(d\epsilon^{-1/r}) = O(\epsilon^{-1/r})$ parameters to approximate the space separable functions within $\epsilon$. We prove a lower bound on the minimal number of parameters needed to approximate the space of separable functions by continuous function approximators, matching neural networks' upper bound, and therefore, proving their optimality.

\begin{proposition}
Let $r\geq 0$. Take $\mathcal{X} = L^{\infty}([0,1]^d)$ and $K = \{f \in \mathcal{S}^{r} \; : \; |f|_{W^{r,\infty}}\leq 1 \}$ the unit ball of separable functions. Then, there exists a universal constant $c>0$ such that
\begin{equation*}
E_{N}(K) \geq \frac{cd^{r-1}}{(N+1)^r}.
\end{equation*}
Equivalently, for all $\epsilon>0$, a learning algorithm with continuous parameters approximating the unit ball of separable functions within $\epsilon$ in $L^{\infty}$ norm uses at least $\Theta(\epsilon^{-\frac{1}{r}})$ parameters.
\end{proposition}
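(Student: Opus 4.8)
The plan is to lower-bound the Bernstein width $b_N(K)_{\mathcal X}$ by exhibiting an $(N+1)$-dimensional subspace of separable functions on which the Sobolev norm $|\cdot|_{W^{r,\infty}}$ is dominated by a small multiple of $\|\cdot\|_\infty$, and then invoke DeVore et al.'s inequality $E_N(K)_{\mathcal X}\ge b_N(K)_{\mathcal X}$. Fix once and for all a $\mathcal C^r$ bump $\psi:\R\to\R$ supported in $[0,1]$, vanishing to order $r$ at the endpoints $0$ and $1$, with $\|\psi\|_\infty=1$ attained at an interior point $p$, and write $C_\psi:=\max_{1\le m\le r}\|\psi^{(m)}\|_\infty<\infty$; a rescaled version of $x\mapsto (x(1-x))_+^{r+1}$ works. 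Since $E_N(K)_{\mathcal X}$ is non-increasing in $N$, it suffices to prove the bound along the subsequence $N+1=dM$, $M\ge 1$. For such $N$ put $\psi_{M,k}(x):=\psi(Mx-(k-1))$ for $1\le k\le M$; these are $\mathcal C^r$, have essentially disjoint supports $[(k-1)/M,k/M]\subset[0,1]$, and every one of them vanishes at each grid point $x\in\{0,1/M,\dots,1\}$. Let $X_{N+1}$ be the span of the $dM$ functions $\mb x\mapsto\psi_{M,k}(x_i)$, $1\le i\le d$, $1\le k\le M$. These are linearly independent: a vanishing combination $\sum_{i,k}c_{i,k}\psi_{M,k}(x_i)\equiv 0$, evaluated with all coordinates but the $i$-th set to $0$, forces $\sum_k c_{i,k}\psi_{M,k}\equiv 0$ on $[0,1]$, hence $c_{i,\cdot}\equiv 0$ by disjointness of supports. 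So $\dim X_{N+1}=dM=N+1$, and every element of $X_{N+1}$ is separable.

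Next I would estimate both norms on $X_{N+1}$. Write $u=\sum_{i=1}^d g_i(x_i)$ with $g_i=\sum_k c_{i,k}\psi_{M,k}$ and set $A:=\max_{i,k}|c_{i,k}|$. \emph{Lower bound on $\|u\|_\infty$:} choosing $(i_0,k_0)$ with $|c_{i_0,k_0}|=A$ and evaluating $u$ at the point whose $i_0$-th coordinate is the preimage of $p$ under $\psi_{M,k_0}$ and whose remaining coordinates equal $0$, the disjointness of supports collapses $u$ to $c_{i_0,k_0}$, so $\|u\|_\infty\ge A$. \emph{Upper bound on $|u|_{W^{r,\infty}}$:} since each $g_i$ depends on a single variable, $D^{\mb\alpha}u=0$ whenever $\mb\alpha$ has at least two nonzero components, $D^{\mb\alpha}u=g_i^{(\alpha_i)}$ when $\mb\alpha=\alpha_i\mb e_i$, and $D^{\mb 0}u=u$; disjoint supports give $\|g_i^{(m)}\|_\infty=\max_k|c_{i,k}|\,M^m\|\psi^{(m)}\|_\infty$, so $|u|_{W^{r,\infty}}\le A\max(d,\,C_\psi M^r)$, which equals $C_\psi M^r A$ once $M$ is large enough in terms of $d$.

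Combining, every $u\in X_{N+1}$ satisfies $|u|_{W^{r,\infty}}\le C_\psi M^r\|u\|_\infty$, so $\rho\,U(X_{N+1})\subset K$ with $\rho=(C_\psi M^r)^{-1}$, and hence $b_N(K)_{\mathcal X}\ge (C_\psi M^r)^{-1}=d^r/(C_\psi(N+1)^r)\ge c\,d^{r-1}/(N+1)^r$ with $c:=1/C_\psi$ independent of $d$ and $N$ (the finitely many small-$N$ cases being absorbed into $c$). By DeVore et al.'s theorem, $E_N(K)_{\mathcal X}\ge b_N(K)_{\mathcal X}\ge c\,d^{r-1}/(N+1)^r$, which is the first claim. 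Inverting $\epsilon\asymp d^{r-1}/N^r$ shows that a continuous function approximator needs $N=\Omega(\epsilon^{-1/r})$ parameters; together with the matching $O(\epsilon^{-1/r})$ upper bound obtained earlier from Mhaskar's (continuous) construction, the space of separable functions is approximated with $\Theta(\epsilon^{-1/r})$ parameters, so neural networks are optimal for it.

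The only non-routine point is arranging that $\|u\|_\infty$ stays comparable to the largest coefficient $A$ despite possible cancellation among the $d$ single-variable summands $g_i$: this is exactly why the bumps $\psi_{M,k}$ are taken to vanish at all grid points $x\in\{0,1/M,\dots,1\}$, which lets us "turn off" every coordinate but one. The remaining steps --- linear independence, the scaling of the derivatives, and the reduction to $N+1=dM$ via monotonicity of $E_N$ --- are routine.
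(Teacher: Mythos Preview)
Your proof is correct and follows essentially the same route as the paper: both build the test space $X_{N+1}$ from rescaled one-dimensional bumps placed along each coordinate, reduce to $N+1=dM$ by monotonicity of $E_N$, compare $|\cdot|_{W^{r,\infty}}$ to $\|\cdot\|_\infty$ on $X_{N+1}$, and invoke DeVore et al.'s Bernstein-width inequality. Your version is in fact slightly cleaner than the paper's---by forcing the bumps to vanish at all grid points you make the coefficient-isolation step transparent, and by noting that nonzero $D^{\mb\alpha}u$ involves only a single coordinate you shave off the extra factor of $d$, yielding $d^r/(N+1)^r$ before weakening to the stated $d^{r-1}/(N+1)^r$.
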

\begin{proof}
As $E_N(K)$ is decreasing in $N$, it suffices to show the result for $N$ verifying $N+1=dm$ for $m \in \mathbb{N}$. Let $N+1=dm$ with $m \in \mathbb{N}$. Let $\phi \in C^{\infty}(\mathbb{R})$ such that $\phi$ is $1$ in $[\frac{1}{4},\frac{3}{4}]$ and $0$ in $\mathbb{R} \setminus [0,1]$. For $i=0, \ldots,m-1$ and $j=1, \ldots, d$, consider the function of support $\left[\frac{i+1/4}{m},\frac{i+3/4}{m}\right]$,
\begin{equation*}
    \phi_{i,j}(\bold{x}) = \phi(mx_j - i), \; \forall \bold{x} \in [0,1]^d.
\end{equation*}
We use DeVore's result, Theorem \ref{theorem: Bernstein lower bound}, with $X_{N+1} = \text{Span}\{ \phi_{i,j}\; : \; i=0, \ldots,m-1, \; j=1, \ldots, d\}$. \comment{Let $C_0>0 = \max_{\mb\alpha \; : \; |\mb\alpha|\leq r } \|D^{\mb\alpha}\phi\|_{\infty}$. }Notice that for all $\mb\alpha$ such that $|\mb\alpha|\leq r$, 
$
\|D^{\mb\alpha}\phi_{i,j}\|_{\infty} \leq m^{|\mb\alpha|}\cdot |\phi|_{W^{r,\infty}}
$.
Let $g \in U(X_{N+1})$ where $U(X_{N+1})$ is the unit ball with respect to the Sobolev norm of order $r$. $g$ can be written as $g = \sum_{i=1}^m\sum_{j=1}^d c_{i,j}\phi_{i,j}$. Let $\mb\alpha$ a multi-integer such that $|\mb\alpha| \leq r$. Notice that for each $j$, only one of $\phi_{i,j}$, $i=1,\ldots,m$, is involved in the value of $D^{\mb\alpha}g$. Therefore,
\comment{\begin{align*}
    \| D^{\mb\alpha}g \|_{\infty} 
    &\leq \max_{i_1,\ldots, i_d} \sum_{j=1}^d|c_{i_j,j}|\cdot\|D^{\mb\alpha} \phi_{i_jj}\|_{\infty} \\
    & \leq |\phi|_{W^{r,\infty}}\cdot m^{|\mb\alpha|} \max_{i_1,\ldots, i_d} \sum_{j=1}^d|c_{i_jj}| \\
    & \leq |\phi|_{W^{r,\infty}}\cdot d m^{|\mb\alpha|}\max_{i,j}|c_{i,j}|
\end{align*}}
\begin{equation*}
    \| D^{\mb\alpha}g \|_{\infty} 
    \leq \max_{i_1,\ldots, i_d} \sum_{j=1}^d|c_{i_j,j}|\cdot\|D^{\mb\alpha} \phi_{i_jj}\|_{\infty} 
    \leq d m^{|\mb\alpha|}|\phi|_{W^{r,\infty}}\cdot \max_{i,j}|c_{i,j}|
\end{equation*}
For all $i = 1, \ldots, m$ and $j=1,\ldots,d$, by choosing $x_j = \frac{i_j +1/2}{m}$ and $x_l = \frac{1}{8m}$ for  $l \neq j$, we have $|g(\bold{x})| = |c_{ij}|$ which implies $\|g\|_{\infty} \geq |c_{ij}|$. Hence, 
\begin{align*}
    \| D^{\mb\alpha}g \|_{\infty} 
    &\leq d m^{|\mb\alpha|}|\phi|_{W^{r,\infty}}\cdot  \|g\|_{\infty}
\end{align*}
This implies that $|g|_{W^{r,\infty}} \leq  2d^{-r+1} |\phi|_{W^{r,\infty}}\cdot (N+1)^{r}\|g\|_{\infty}$. Notice that the functions of the space $X_{N+1}$ are separable as each $\phi_{i,j}$ is applied to only one coordinate, therefore, $\frac{d^{r-1}}{|\phi|_{W^{r,\infty}}\cdot (N+1)^r}U(X_{N+1}) \subset K$. Using Theorem \ref{theorem: Bernstein lower bound} we have the desired result for $N$.
\end{proof}

\section{Conclusion and Discussion}
We have proved new upper and lower bounds on the number of neurons and training parameters needed by shallow and deep neural networks to approximate functions of the Korobov space. Our bounds show that shallow and deep networks, not only break the curse of dimensionality but are also near-optimal function approximators of space of Korobov functions. This work, contributes therefore to understanding the practical success of neural networks theoretically.

Our work suggests several extensions. First, we have constructed throughout the paper explicitly deep and shallow structures nearly reaching the optimal number of parameters needed to approximate Korobov functions. It would be very interesting to see if the theoretical near-optimality of these structures translates into powerful empirical performance. While commonly used structures (e.g. Convolution Neural Networks, or Recurrent Neural Networks) are motivated by properties of the data such as symmetries, our structures are motivated by theoretical insights on how to optimally approximate a large general class of functions with a given number of neurons and parameters. These new structures might prove to be competitive with commonly used structures and constitutes new building blocks of neural networks in learning tasks.

Second, our upper bounds (Theorem \ref{thm: shallow NN bound.} and \ref{thm: fixed deep upper bound}) nearly match our lower bound (Theorem \ref{thm:lower bound korobov}) on the least number of training parameters needed to approximate functions of the Korobov space. We wonder if it is possible to close the gap between these bounds and prove neural network's optimality as function approximators. Using our approach, proving neural network's optimality would boil down to proving that sparse grids are optimal function approximators by improving our lower bound to match sparse grid number of parameters $O(\epsilon^{-\frac{1}{2}}(\log \frac{1}{\epsilon})^{\frac{3(d-1)}{2}})$.

Finally, we showed the near-optimality of neural networks among the set of continuous function approximators. An interesting question would be to explore lower bounds (analog to Theorem \ref{thm:lower bound korobov}) that we can get when considering larger sets of function approximators. One possibility would be that the curse of dimensionality vanishes for the Sobolev space when considering discontinuous function approximators. In this case, it would be insightful to investigate whether some discontinuous neural network construction can break the curse of dimensionality for the Sobolev space leading to stronger approximation results. The question is then whether neural networks are still near-optimal in these larger sets of function approximators.

\section*{Acknowledgments}
The authors are grateful to Tomaso Poggio and the MIT 6.520 course teaching staff for several discussions, remarks and comments that were useful to this work.

\bibliography{References}{}
\bibliographystyle{plain}

\newpage

\section*{Appendix}
\appendix

\section{Proofs of Section 3}
\label{appendix sec: NN upper bound sec}
\subsection{Proof of Lemma \ref{lemma:1D function approx.}}
\LemmaOneDFunctionApprox*

\begin{proof}
Let $m=\left\lfloor\frac{d-c}{\epsilon}\right\rfloor$ Define a subdivision of the image interval $c\leq y_1\leq \cdots \leq y_m\leq d$ where $y_k = c+ k\epsilon$ for $k=1,\cdots,m$. Note that this subdivision contains exactly $\left\lceil\frac{d-c}{\epsilon}\right\rceil$ pieces. Now define a subdivision of $I$, $x_1\leq x_2\leq \cdots \leq x_m$ by
\begin{equation*}
    x_k := \sup\{x\in I, f(x)\leq y_k\},
\end{equation*}
for $k=1,\cdots,m$. This subdivision stills has $\left\lceil\frac{d-c}{\epsilon}\right\rceil$ pieces. We now construct our approximation function $\hat f$ on $I$ as the continuous piece-wise affine function on the subdivision $x_1\leq \cdots \leq x_m$ such that $\hat f(x_k) = y_k$ for all $1\leq k\leq m$ and $\hat f$ is constant before $x_1$ and after $x_m$ (see Figure \ref{fig:piecewise approx}). Let $x\in I$.
\begin{itemize}
    \item If $x\leq x_1$, because $f$ is increasing and right-continuous, $c\leq f(x)\leq f(x_1)\leq y_1=c+\epsilon$. Therefore $|f(x)-\hat f(x)| = |f(x)-(c+\epsilon)| \leq \epsilon$.
    \item If $x_k < x\leq x_{k+1}$, we have $y_k<f(x)\leq f(x_{k+1})\leq y_{k+1}$. Further note that $y_k \leq \hat f(x)\leq y_{k+1}$. Therefore $|f(x)-\hat f(x)| \leq y_{k+1}-y_k = \epsilon$.
    \item If $x_m<x$, then $y_m<f(x)\leq d$. Again, $|f(x)-\hat f(x)| = |f(x)-y_m| \leq d-y_m\leq \epsilon$.
\end{itemize}
Therefore $\|f-\hat f\|_{\infty} \leq \epsilon$. We can now use Lemma \ref{lemma:piece-wise affine} to end the proof.
\end{proof}

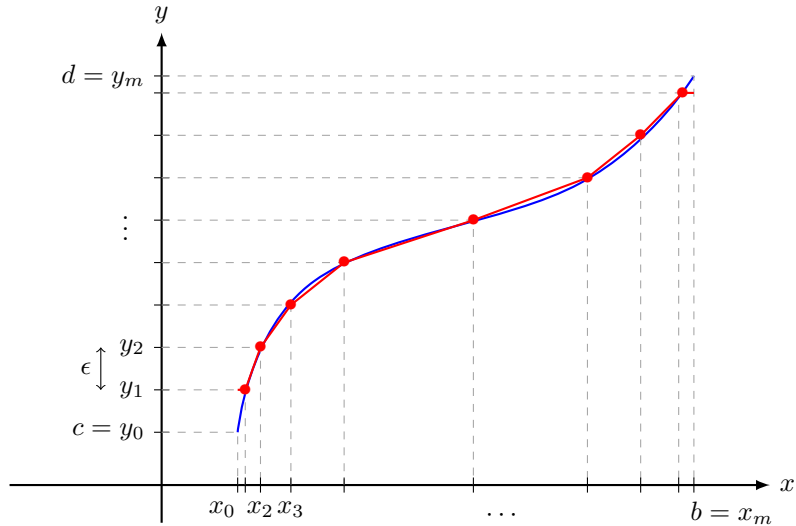
\begin{figure}[h!]
    \centering
    \begin{tikzpicture}
    \draw [thick,->,>=latex] (-2,0) -- (8,0) node[right] {$x$};
    \draw [thick,->,>=latex] (0,-0.5) -- (0,6) node[above] {$y$};
    \draw[thick,blue] plot[smooth, tension=.7] coordinates {(1,0.7) (2,2.7) (5.5,4) (7,0.7 + 5.2*1.05-0.7*1.05)};
    \draw (-0.1,0.7) node[left] {$c=y_0$} -- (0.1,0.7);
    \foreach \k in {1,2}{
        \draw (-0.1,0.7 + 5.2*\k/8-0.7*\k/8) node[left] {$y_{\k}$} -- (0.1,0.7 + 5.2*\k/8-0.7*\k/8);
    }
    \draw (-0.3,0.7 + 5.2*5/8-0.7*5/8) node[left] {$\vdots$};
    
    \foreach \k in {3,4,5,6,7,8}{
        \draw (-0.1,0.7 + 5.2*\k/8-0.7*\k/8) -- (0.1,0.7 + 5.2*\k/8-0.7*\k/8);
    }
    \draw (-0.1,0.7 + 5.2*1.05-0.7*1.05) node[left] {$d=y_m$} -- (0.1,0.7 + 5.2*1.05-0.7*1.05);
    
    \draw [<->] (-0.8,0.7 + 5.2/4-0.7/4) -- node[left] {$\epsilon$} (-0.8,0.7 + 5.2/8-0.7/8);
    
    \draw [dashed, gray!70] (0,0.7) -- (1,0.7) -- (1,0);
    \draw [dashed, gray!70] (0,0.7 + 4.5*1/8) -- (1.1,0.7 + 4.5*1/8) -- (1.1,0);
    \draw [dashed, gray!70] (0,0.7 + 4.5*2/8) -- (1.3,0.7 + 4.5*2/8) -- (1.3,0);
    \draw [dashed, gray!70] (0,0.7 + 4.5*3/8) -- (1.7,0.7 + 4.5*3/8) -- (1.7,0);
    \draw [dashed, gray!70] (0,0.7 + 4.5*4/8) -- (2.4,0.7 + 4.5*4/8) -- (2.4,0);
    \draw [dashed, gray!70] (0,0.7 + 4.5*5/8) -- (4.1,0.7 + 4.5*5/8) -- (4.1,0);
    \draw [dashed, gray!70] (0,0.7 + 4.5*6/8) -- (5.6,0.7 + 4.5*6/8) -- (5.6,0);
    \draw [dashed, gray!70] (0,0.7 + 4.5*7/8) -- (6.3,0.7 + 4.5*7/8) -- (6.3,0);
    \draw [dashed, gray!70] (0,0.7 + 4.5*8/8) -- (6.8,0.7 + 4.5*8/8) -- (6.8,0);
    \draw [dashed, gray!70] (0,0.7 + 4.5*1.05) -- (7,0.7 +4.5*1.05)  -- (7,0);
    
    \draw  (1,0.1)  -- (1,-0.1);
    \draw  (1.1,0.1)  -- (1.1,-0.1);
    \draw  (1.3,0.1) -- (1.3,-0.1)  node[below] {$x_2$};
    \draw  (1.7,0.1) -- (1.7,-0.1) node[below] {$x_3$};
    \draw  (2.4,0.1) -- (2.4,-0.1);
    \draw  (4.1,0.1) -- (4.1,-0.1);
    \draw  (5.6,0.1) -- (5.6,-0.1);
    \draw  (6.3,0.1) -- (6.3,-0.1);
    \draw  (6.8,0.1) -- (6.8,-0.1);
    \draw  (7,0.1) -- (7,-0.1);
    
    \draw [thick,red] (1,0.7 + 4.5*1/8) -- (1.1,0.7 + 4.5*1/8) -- (1.3,0.7 + 4.5*2/8) -- (1.7,0.7 + 4.5*3/8) -- (2.4,0.7 + 4.5*4/8) -- (4.1,0.7 + 4.5*5/8) -- (5.6,0.7 + 4.5*6/8) -- (6.3,0.7 + 4.5*7/8) -- (6.85,0.7 + 4.5*8/8) -- (7,0.7 + 4.5*8/8);
    
    \draw (4.5,-0.4) node {$\cdots$};
    
    \draw  [red] (1.1,0.7 + 4.5*1/8) node {$\bullet$};
    \draw  [red] (1.3,0.7 + 4.5*2/8) node {$\bullet$};
    \draw  [red] (1.7,0.7 + 4.5*3/8) node {$\bullet$};
    \draw  [red] (2.4,0.7 + 4.5*4/8) node {$\bullet$};
    \draw  [red] (4.1,0.7 + 4.5*5/8) node {$\bullet$};
    \draw  [red] (5.6,0.7 + 4.5*6/8) node {$\bullet$};
    \draw  [red] (6.3,0.7 + 4.5*7/8) node {$\bullet$};
    \draw  [red] (6.85,0.7 + 4.5*8/8) node {$\bullet$};
    
    \draw (0.8,-0.1) node[below] {$x_0$};
    \draw (7.5,-0.1) node[below] {$b=x_m$};
    \end{tikzpicture}
    \caption{Approximation of a right-continuous increasing function (blue) in an interval $[c,d]$ within $\epsilon$ by a piece-wise linear function (red) with $\lfloor \frac{d-c}{\epsilon} \rfloor$ pieces. The approximation is constructed using a regular subdivision of the $y$ axis of step $\epsilon$ and constructing a linear approximation in the pre-image of each part of the subdivision.}
    \label{fig:piecewise approx}
\end{figure}
\subsection{Proof of Lemma \ref{lemma:quadratic approximation 1D}}

\LemmaQuadraticApproximation*

\begin{proof}
Similarly as the proof of Lemma \ref{lemma:1D function approx.}, the goal is to approximate $f$ by a piece-wise affine function $\hat f$ defined on a subdivision $x_0=a \leq x_1\leq \cdots \leq x_m\leq x_{m+1} = b$ such that $f$ and $\hat f$ coincide on $x_0,\cdots,x_{m+1}$. We first analyse the error induced by a linear approximation of the function on each piece. Let $x\in [u,v]$ for $u,v\in I$. Using the mean value theorem, there exists $\alpha_x \in [u,x]$ such that $f(x)-f(u) = f'(\alpha_x) (x-u)$ and $\beta_x \in [x,v]$ such that $f(v)-f(x) = f'(\beta_x) (v-x)$. Combining these two equalities, we get,
\begin{align*}
f(x)-f(u) - (x-u)\frac{f(v)-f(u)}{v-u}
&= 
\frac{(v-x)(f(x)-f(u)) - (x-u)(f(v)-f(x))}{v-u}\\
&= (x-u)(x-v)\frac{f'(\beta_x)-f'(\alpha_x)}{v-u}\\
&= (x-u)(v-x)\frac{\int_{\alpha_x}^{\beta_x}f''(t)dt}{v-u}
\end{align*}

\noindent Hence, 
\begin{equation}
\label{eq:taylor_like expansion}
    f(x) = f(u) + (x-u) \frac{f(v)-f(u)}{v-u} + (x-u)(v-x)\frac{\int_{\alpha_x}^{\beta_x}f''(t)dt}{v-u}.
\end{equation}
We now apply this result to bound the approximation error on each pieces of the subdivision. Let $k\in [m]$. Recall $\hat{f}$ is linear on the subdivision $[x_k,x_{k+1}]$ and $\hat{f}(x_k)= f(x_k)$ and $\hat{f}(x_{k+1})= f(x_{k+1})$. Hence, for all $x\in [x_k,x_{k+1}], \; \hat{f}(x)= f(x_k) + (x-x_k)\frac{f(x_{k+1})-f(x_{k})}{x_{k+1}-x_k}$. Using Equation \eqref{eq:taylor_like expansion} with $u=x_k$ and $v= x_{k+1}$, we get,
\begin{align*}
    \|f-\hat f\|_{\infty, [x_k,x_{k+1}]} 
    &\leq 
     \sup_{x\in [x_k,x_{k+1}]} 
     \left|  (x-x_k)(x_{k+1}-x)\frac{\int_{\alpha_x}^{\beta_x}f''(t)dt}{x_{k+1}-x_k}   \right|  \\
    &\leq \frac{1}{2} (x_{k+1}-x_k) \int_{x_k}^{x_{k+1}} |f''(t)|dt\\
    &\leq \frac{1}{2} (x_{k+1}-x_k)^2 \|f''\|_{\infty, [x_k,x_{k+1}]}.
\end{align*}
Therefore, using a regular subdivision with step $\sqrt{\frac{2\epsilon}{\|f''\|_{\infty}}}$ yields an $\epsilon-$approximation of $f$ with $\left\lceil \frac{(b-a) \sqrt{\|f''\|_{\infty}}}{\sqrt{2\epsilon}} \right \rceil$ pieces.

We now show that for any $\mu>0$, there exists an $\epsilon-$approximation of $f$ with at most $\frac{\int \sqrt{ |f''|}}{\sqrt {2\epsilon}}(1+\mu)$ pieces. To do so, we use the fact that the upper Riemann sum for $\sqrt{f''}$ converges to the integral since $\sqrt{f''}$ is continuous on $[a,b]$.
First define a partition $a=X_0\leq X_K=b$ of $[a,b]$ such that the upper Riemann sum $\mathcal R(\sqrt{f''})$ on this subdivision satisfies $\mathcal R(\sqrt{f''}) \leq (1+\mu/2)\int_a^b \sqrt{f''}$.
Now define on each interval $I_k$ of the partition a regular subdivision with step $\sqrt{\frac{2\epsilon}{\|f''\|_{I_k}}}$ as before.
Finally, consider the subdivision union of all these subdivisions, and construct the approximation $\hat f$ on this final subdivision. By construction, $\|f-\hat f\|_\infty \leq \epsilon$ because the inequality holds on each piece of the subdivision. Further, the number of pieces is
\begin{equation*}
    \sum_{i=0}^{K-1} 1+ \frac{(X_{i+1}-X_i) \sup_{[X_i,X_{i+1}]}\sqrt{f''}}{\sqrt{2\epsilon}} = \frac{\mathcal R(\sqrt{f''})}{\sqrt{2\epsilon}} +K\leq \frac{\int \sqrt{ |f''|}}{\sqrt {2\epsilon}}(1+\mu),
\end{equation*}
for $\epsilon>0$ small enough. Using  Lemma \ref{lemma:piece-wise affine} we can complete the proof.
\end{proof}

\subsection{Proof of Corollary \ref{corollary:quadratic approximation log}}
\CorQuadraticApproximationLog*
\begin{proof}
In view of Lemma \ref{lemma:quadratic approximation 1D}, the goal is to show that we can remove the dependence of $\mu(f,\epsilon)$ in $\delta$. This essentially comes from the fact that the upper Riemann sum behaves well for approximating $\log$. Consider the subdivision $x_0:=\delta\leq x_1 \leq \cdots \leq x_m \leq x_{m+1}:=1$ with $m=\left \lfloor\frac{1}{\tilde \epsilon}\log\frac{1}{\delta}\right\rfloor$ where $\tilde \epsilon:= \log(1+\sqrt{2\epsilon})$, such that $ x_k = e^{\log \delta + k\tilde \epsilon}$, for $k=0,\cdots,m-1$. Denote $\hat f$ the corresponding piece-wise affine approximation. Similarly to the proof of Lemma \ref{lemma:quadratic approximation 1D}, for $k=0,\cdots,m-1$,
\begin{equation*}
    \|\log-\hat f\|_{\infty, [x_k,x_{k+1}]} \leq \frac{1}{2} (x_{k+1}-x_k)^2 \|f''\|_{\infty, [x_k,x_{k+1}]}\leq \frac{(e^{\tilde \epsilon}-1)^2}{2}\leq \epsilon.
\end{equation*}
The proof follows.
\end{proof}

\subsection{Proof of Lemma \ref{lemma:approx activations}}

\LemmaApproxActivations*

\begin{proof}
Let $\delta,\epsilon>0$ and $\sigma$ a sigmoid-like activation with limit $a$ (resp. $b$) in $-\infty$ (resp. $+\infty$). There exists $x_0>0$ sufficiently large such that $(b-a)|\sigma(x)-a|\leq \epsilon$ for $x\leq -x_0$ and $(b-a)|\sigma(x)-b|\leq \epsilon$ for $x\geq x_0.$ It now suffices to take $M:=x_0/\delta$ to obtain the desired result.

Now let $\sigma$ be a ReLU-like activation with oblique asymptote $bx$ in $+\infty$ where $b>0$. Let $M$ such that $|\sigma|\leq Mb\epsilon$ for $x\leq 0$ and $|\sigma(x)-bx|\leq Mb\epsilon$ for $x\geq 0$. One can check that $|\frac{\sigma(Mx)}{Mb}|\leq \epsilon$ for $x\leq 0$, and $|\frac{\sigma(Mx)}{Mb}-x|\leq \epsilon$ for $x\geq 0$.
\end{proof}

\subsection{Proof of Lemma \ref{lemma:1D function approx sigmoid}}

\begin{lemma}
\label{lemma:1D function approx sigmoid}
Let $\sigma$ be a sigmoid-like activation. Let $f:I \longrightarrow [c,d]$ be a right-continuous increasing function where $I$ is an interval, and let $\epsilon>0$. There exists a shallow neural network with activation $\sigma$, with at most $2\frac{d-c}{\epsilon}$ neurons on a single layer, that approximates $f$ within $\epsilon$ for the infinity norm.
\end{lemma}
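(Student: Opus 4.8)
The plan is to mirror the proof of Lemma~\ref{lemma:1D function approx.}, but to replace the continuous piece-wise affine interpolant (which a sigmoid-like activation cannot realize) by a \emph{step function}, which a sigmoid-like activation can realize approximately thanks to Lemma~\ref{lemma:approx activations}. First I would construct a non-decreasing right-continuous step function $g:I\to[c,d]$ with $\|f-g\|_\infty\le\epsilon/2$ and at most $\lceil 2(d-c)/\epsilon\rceil$ distinct values, e.g. $g(x):=c+\tfrac{\epsilon}{2}\lfloor\tfrac{2}{\epsilon}(f(x)-c)\rfloor$: since $f$ is non-decreasing and right-continuous so is $g$, it takes at most $\lceil 2(d-c)/\epsilon\rceil$ values, and $|f-g|\le\epsilon/2$ pointwise. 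Being a non-decreasing right-continuous step function, $g$ can be written as
\begin{equation*}
    g(x)=g_{\min}+\frac{\epsilon}{2}\sum_{k=1}^{m}H(x-s_k),\qquad H(t):=\mathbf{1}_{\{t\ge 0\}},
\end{equation*}
where $g_{\min}=\inf_{x\in I}g(x)$, $s_1\le\cdots\le s_m$ are the jump points listed with multiplicity (multiplicity $=$ jump height divided by $\epsilon/2$), and $m\le\lceil 2(d-c)/\epsilon\rceil-1$.

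Next I would realize each Heaviside by a single sigmoid-like neuron. Writing $a,b$ for the limits of $\sigma$ at $\mp\infty$, Lemma~\ref{lemma:approx activations} gives, for any $\delta,\epsilon'>0$, a scaling $M$ such that $\widetilde H(t):=\tfrac{\sigma(Mt)}{b-a}-a$ has values in $[0,1]$, is non-decreasing, and satisfies $|\widetilde H(t)-H(t)|\le\epsilon'$ for $|t|\ge\delta$. The candidate network is
\begin{equation*}
    \hat f(x)=g_{\min}+\frac{\epsilon}{2}\sum_{k=1}^{m}\widetilde H(x-s_k),
\end{equation*}
which is a single hidden layer: one neuron per term $\widetilde H(x-s_k)$, plus one extra neuron producing the constant $g_{\min}$ (a rescaled $\sigma$ of a large input); the additive constants $-a$ coming from the $\widetilde H$'s are folded in as well. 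The total is at most $\lceil 2(d-c)/\epsilon\rceil$ neurons on one layer, matching the claimed bound $2\frac{d-c}{\epsilon}$ up to rounding.

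The error analysis is where the care goes. I would choose $\delta$ strictly smaller than the minimal gap between two distinct $s_k$, and $\epsilon'$ small (say $m\epsilon'\le\tfrac14$). At any $x$ at distance $\ge\delta$ from every $s_k$, each $\widetilde H(x-s_k)$ is within $\epsilon'$ of $H(x-s_k)$, so $|\hat f(x)-g(x)|\le\tfrac{\epsilon}{2}m\epsilon'\le\epsilon/8$ and hence $|\hat f(x)-f(x)|\le\epsilon$. For $x$ in one of the (pairwise disjoint) windows $(\tau-\delta,\tau+\delta)$ around a jump point $\tau$ of $g$, I would exploit that $\hat f$ is monotone, that at the window endpoints $\hat f$ is $\epsilon/8$-close to $g(\tau^-)$ and to $g(\tau)$ respectively, and that on that window $|f-g|\le\epsilon/2$ while $g$ changes only by its jump at $\tau$; monotonicity pins $\hat f(x)$ between $\hat f(\tau-\delta)$ and $\hat f(\tau+\delta)$, and combining with $|f-g|\le\epsilon/2$ yields $|\hat f(x)-f(x)|\le\epsilon$ (using a step size slightly below $\epsilon/2$, or simply absorbing the small excess into the constant, exactly as the paper does for Lemma~\ref{lemma:1D function approx.}). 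Finally, Lemma~\ref{lemma:piece-wise affine}-style bookkeeping is not needed here since $\hat f$ is already written explicitly as a one-layer $\sigma$-network.

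The main obstacle is precisely the behaviour of $\hat f$ near a jump of $g$: a sigmoid-like $\sigma$ gives no way to shrink the transition region to zero width, so near a jump point the smooth network cannot track $g$ to better than roughly the jump height, and the naive triangle inequality $\|\hat f-f\|\le\|\hat f-g\|+\|g-f\|$ overshoots $\epsilon$. The fix is to use monotonicity of both $f$ and $\hat f$ together with the fact that $f$ varies little inside the tiny windows $(\tau-\delta,\tau+\delta)$ — automatic whenever $f$ is continuous, which covers every application in this paper ($\log$, $\exp$ and their truncations). It is this need to take $\delta$ small relative both to the separation of the $s_k$ and to the modulus of continuity of $f$, while keeping the step height at essentially $\epsilon/2$, that forces the factor $2$ (rather than $1$) in the neuron count.
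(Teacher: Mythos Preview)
Your approach is essentially the same as the paper's: approximate $f$ by a step function written as a sum of Heavisides, then replace each Heaviside by a scaled sigmoid via Lemma~\ref{lemma:approx activations}. The one substantive difference is \emph{where} you place the jumps. You put them at the level-crossings $s_k$ of $f$ (the points where $g$ jumps), whereas the paper first defines $x_k=\sup\{x:f(x)\le c+k\epsilon\}$ and then places the Heaviside steps at the \emph{midpoints} $\tfrac{x_k+x_{k+1}}{2}$. The midpoint choice buys a cleaner error analysis: with $\delta<\min_i(x_{i+1}-x_i)/4$, each sigmoid's transition window lies strictly inside an interval $(x_k,x_{k+1})$ on which $f$ varies by at most $\epsilon$ \emph{by construction}, so the bound $|\hat f-f|\le 2\epsilon$ follows directly without a separate monotonicity argument at the jumps. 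Your placement forces you to argue that $f$ varies little on $(\tau-\delta,\tau+\delta)$, which is where you correctly identify the obstacle; the paper sidesteps it rather than overcoming it.

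Both proofs share the same gap you flag for discontinuous $f$: if $f$ has a jump of size $\ge\epsilon$ then several $x_k$ (or $s_k$) coincide and the ``minimal gap'' used to choose $\delta$ vanishes. The paper's proof is silent on this and, like yours, is really only complete for continuous $f$; since the applications in the paper are to $\log$ and $\exp$, neither proof is harmed.
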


\begin{proof}
The proof is analog to that of Lemma \ref{lemma:1D function approx.}. Let $m=\lfloor\frac{d-c}{\epsilon}\rfloor$. We define a regular subdivision of the image interval $c\leq y_1\leq \cdots \leq y_m\leq d $ where $y_k=c+k\epsilon$ for $k=1,\cdots,m$, then using the monotony of $f$, we can define a subdivision of $I$, $x_1\leq\cdots\leq x_m$ such that $x_k :=\sup\{x\in I,f(x)\leq y_k\}$. Let us first construct an approximation neural network $\hat f$ with the Heaviside activation. Consider
\begin{equation*}
    \hat f(x) := y_1 + \epsilon\sum_{i=1}^{m-1} \mb 1\left(x-\frac{x_i+x_{i+1}}{2}\geq 0\right).
\end{equation*}
Let $x\in [c,d]$ and $k$ such that $x \in [x_k,x_{k+1}]$. We have by monotony $y_k \leq f(x) \leq y_{k+1}$ and 
$ y_k = y_1 + (k-1)\epsilon \leq \hat{f}(x) \leq y_1 + k\epsilon = y_{k+1}.$
Hence, $\hat f$ approximates $f$ within $\epsilon$ in infinity norm.

Let $\delta<\min_{i =1,\ldots,m}(x_{i+1}-x_i)/4$ and $\sigma$ a general sigmoid-like activation with limits $a$ in $-\infty$ and $b$ in $+\infty$. Take $M$ given by Lemma \ref{lemma:1D function approx sigmoid} such that $\frac{\sigma(Mx)}{b-a}-a$ approximates the Heaviside function within $1/m$ outside of $(-\delta,\delta)$ and has values in $[0,1]$. Using the same arguments as above, the function
\begin{equation*}
    \hat f(x) := y_1 + \epsilon\sum_{i=1}^{m-1}  \frac{\sigma\left( Mx-M\frac{x_i+x_{i+1}}{2}\right)}{b-a}-a
\end{equation*}
approximates $f$ within $2\epsilon$ for the infinity norm. The proof follows.
\end{proof}
 
\section{Proofs of Section 5}\label{appedix sec: proof of lower bound sec}
\subsection{Proof of Lemma \ref{prop:regularity interpolet L=2}}

\LemmaRegularityInterpolet*

\begin{proof}

To analyze the regularity of the interpolet, we introduce the trigonometric polynomial
\begin{equation*}
    P^{(2)}(\theta) := \sum_{k\in \Z} \phi^{(2)}\left(\frac{k}{2}\right) e^{ik\theta} = 1+\frac{9}{16}(e^{i\theta}+e^{-i\theta}) - \frac{1}{16} (e^{3i\theta}+e^{-3i\theta}).
\end{equation*}
We can write $P^{(2)}(\theta)$ as
\begin{equation*}
  P^{(2)}(\theta) = \left[\frac{\sin \theta}{\sin\left(\frac{\theta}{2}\right)}\right]^4\cdot S(\theta),
\end{equation*}
where $S(\theta) = \frac{1}{4}-\frac{1}{16}(e^{i\theta}+e^{-i\theta})$ is a trigonometric polynomial of degree $1$. Deslaurier and Dubuc \cite[Theorem 7.11]{deslauriers1989symmetric} showed that the interpolet $\phi^{(2)}$ has regularity $\left\lfloor -\frac{\log r}{\log 2}\right\rfloor$ where $r$ is the spectral radius of the matrix $B:=[s_{j-2k}]_{-1\leq j,k\leq 1}$ where $s_j$ are the coefficients of the trigonometric polynomial $S(\theta)$. In our case, matrix $B$ writes
\begin{equation*}
B = 
\begin{pmatrix}
-1/16 & -1/16 & 0 \\
0 & 1/4 & 0 \\
0 & -1/16 & -1/16
\end{pmatrix},
\end{equation*}
and has spectral radius $r=1/4$. Therefore, the regularity of $\phi^{(2)}$ is at least $\left\lfloor -\frac{\log r}{\log 2}\right\rfloor=2$.
For the support, one can check that more generally $Supp(\phi^{(L)}) = [-2L+1,2L-1].$
\end{proof}

\subsection{Proof of Proposition \ref{prop:stencil for coefficients}}

\PropStencilForCoefficients*
\begin{proof}
We start by looking at the $1-$dimensional case. One can check that $ I_{l ,i } \phi_{\tilde l ,\tilde i } = \mb 1_{\tilde l  = l , \tilde i =i }$. Indeed, if $\tilde l >l $ of if $\tilde l =l $ and $\tilde i \neq i $, then $\phi_{\tilde l ,\tilde i }$ will be zero at the nodal values of $I_{l ,i }$. Further, if $\tilde l <l $ the iterative construction of $\phi$ gives directly.
\begin{equation*}
    \phi_{\tilde l ,\tilde i } \left(\frac{i }{2^j}\right)=  \frac{9}{16}\phi_{\tilde l ,\tilde i } \left(\frac{i +1}{2^j}\right)+\frac{9}{16} \phi_{\tilde l ,\tilde i } \left(\frac{i -1}{2^j}\right) - \frac{1}{16} \phi_{\tilde l ,\tilde i } \left(\frac{i +3}{2^j}\right)-\frac{1}{16} \phi_{\tilde l ,\tilde i } \left(\frac{i -3}{2^j}\right).
\end{equation*}
Therefore, we obtain $I_{l ,i }\phi_{\tilde l ,\tilde i }=0$. Finally, $I_{l ,i }u = \sum_{\tilde l ,\tilde i } v_{\tilde l .\tilde i }\cdot I_{l ,i }\phi_{\tilde l ,\tilde i } = v_{l ,i }\cdot I_{l ,i }\phi_{ l , i } = v_{l ,i }.$
This proves the stencil representation of $v_{\mb l,\mb i}$ for dimension $1$. Finally, using the tensor product approach of the stencil operator $I_{\mb l,\mb i}$ we can generalize the formula to general dimensions.
\end{proof}

\end{document}